\documentclass[letterpaper,onecolumn]{IEEEtran}
\usepackage{amsmath,amsfonts}
\usepackage{algorithmic}
\usepackage{algorithm}
\usepackage{array}
\usepackage{textcomp}
\usepackage{stfloats}
\usepackage{bbold}
\usepackage{url}
\usepackage{caption}
\usepackage{subcaption}
\usepackage{verbatim}
\usepackage{graphicx}
\usepackage{cite}
\usepackage{algorithm}
\usepackage{algorithmic}

\usepackage{float}

\usepackage{amsfonts,amsmath,amsthm, mathtools}
\usepackage{amssymb}
\usepackage{caption}
\usepackage{verbatim}
\usepackage{mathrsfs}
\usepackage{algorithm}
\usepackage{algorithmic}
\usepackage{comment}
\usepackage{multirow, tabularx, booktabs}
\usepackage[dvipsnames]{xcolor}

\usepackage{hyperref}
\usepackage{dsfont}

\newtheorem{theorem}{Theorem}

\newtheorem{corollary}{Corollary}[theorem]

\usepackage{fullpage}
\usepackage{xspace}

\numberwithin{equation}{section}

\theoremstyle{plain}

\newtheorem*{theorem*}{Theorem}

\theoremstyle{definition}

\usepackage[margin=1in]{geometry}



\begin{document}

\title{Convex Clustering Redefined: Robust Learning with the Median of Means Estimator}

\author{
    Sourav De \IEEEauthorrefmark{1}, \IEEEauthorblockN{Koustav Chowdhury\IEEEauthorrefmark{1}\thanks{Sourav De, Koustav Chowdhury and Bibhabasu Mandal contributed equally to this work.}, Bibhabasu Mandal\IEEEauthorrefmark{1}, Sagar Ghosh\IEEEauthorrefmark{2}, Swagatam Das\IEEEauthorrefmark{3}, Debolina Paul\IEEEauthorrefmark{4}, and Saptarshi Chakraborty\IEEEauthorrefmark{5},} \\
    \IEEEauthorblockA{
    \IEEEauthorrefmark{1}Indian Statistical Institute, Kolkata\\
    \IEEEauthorrefmark{2}Department of Statistics and Data Science, University of Texas at Austin\\
    \IEEEauthorrefmark{3}Electronics and Communication Sciences, Indian Statistical Institute, Kolkata\\
    \IEEEauthorrefmark{4} Department of Statistics, University of Oxford\\
    \IEEEauthorrefmark{5} Department of Statistics, University of Michigan}\\
    
    {desourav02@gmail.com}, {koustavchowdhury2003@gmail.com}, {bibhabasumandal04@gmail.com}, {sagarghosh1729@utexas.edu},{swagatam.das@isical.ac.in}, {ddebolina.paul@stats.ox.ac.uk}, {saptarsc@umich.edu}
    \\
}



\maketitle




\begin{abstract}
Clustering approaches that utilize convex loss functions have recently attracted growing interest in the formation of compact data clusters. Although classical methods like $k-$means and its wide family of variants are still widely used, all of them require the number of clusters ($k$) to be supplied as input and many are notably sensitive to initialization. Convex clustering provides a more stable alternative by formulating the clustering task as a convex optimization problem, ensuring a unique global solution. However, it faces challenges in handling high-dimensional data, especially in the presence of noise and outliers. Additionally, strong fusion regularization, controlled by the tuning parameter, can hinder effective cluster formation within a convex clustering framework. To overcome these challenges, we introduce a robust approach that integrates convex clustering with the Median of Means (MoM) estimator, thus developing an outlier-resistant and efficient clustering framework that does not necessitate a prior knowledge of the number of clusters. By leveraging the robustness of MoM alongside the stability of convex clustering, our method enhances both performance and efficiency, especially on large-scale datasets. Theoretical analysis demonstrates weak consistency under specific conditions, while experiments on synthetic and real-world datasets validate the method’s superior performance compared to existing approaches.
\end{abstract}



\noindent Github Repository: \url{https://tinyurl.com/2v3dx75x}{}\\
Benchmark Dataset (CC BY-NC-ND 4.0): \url{https://tinyurl.com/2zatwkf3}{}\\
ASU Datasets (GPLv2): \url{https://tinyurl.com/49n36ume}{}\\
Micro-array Datasets: \url{https://tinyurl.com/2f2pjz7j}{}\\
Brain Dataset: \url{https://tinyurl.com/4ntav7b9}{}\\
Wisconsin Dataset: \url{https://tinyurl.com/58wxjha5}{}\\

\section{Introduction}

Clustering is a fundamental task in unsupervised learning, aiming to organize unlabeled data into coherent groups for better interpretation and downstream applications. It plays a critical role in diverse areas such as customer segmentation \cite{kansal2018customer}, image analysis \cite{munz2007traffic}, and anomaly detection \cite{coleman1979image}. Traditional algorithms, such as $k$-means, approach clustering as a non-convex optimization problem \cite{lu2016statistical}, typically solved using greedy heuristics. Although computationally efficient and widely used, these methods suffer from several well-known limitations \cite{jain2010data}: they require pre-specifying the number of clusters \cite{tibshirani_estimating_2001,learning_k}, are sensitive to initialization \cite{10.1145/2395116.2395117,xu_power_2019}, and degrade in performance in high-dimensional spaces or when the data contains noise and outliers \cite{witten_framework_2010,de_amorim_survey_2016,chakraborty_detecting_2022}.

To overcome these challenges, convex relaxations of non-convex clustering problems have gained significant attention \cite{tropp_just_2006}. A prominent example is \textit{convex clustering} (or sum-of-norms clustering), which enjoys strong theoretical guarantees such as global optimality and convergence, while remaining broadly applicable in practice \cite{pelckmans_convex_2005,hocking_clusterpath_2011,lindsten_clustering_2011}. Given a data matrix $\mathbf{X} \in \mathbb{R}^{n \times d}$, where each row represents a data point in $d$-dimensional Euclidean space, convex clustering solves the following objective:
\begin{equation} \label{eqn_1:convex}
    \min_{\mathbf{u}} \frac{1}{2}\left[\|\mathbf{x}_{i.}-\mathbf{u}_{i.}\|_2^2 + 
    \gamma \sum_{i,j} \theta_{ij}\|\mathbf{u}_{i.} - \mathbf{u}_{j.}\|_p^2 \right],
\end{equation}
where $\mathbf{u}_{i.}$ denotes the $i$-th row of $\mathbf{u}$, $\theta_{ij}$ are edge weights, and $\|\cdot\|_p$ is the $\ell^p$ norm. The first term encourages each point to remain close to its centroid, while the second term (controlled by tuning parameter $\gamma>0$) promotes fusion across centroids, effectively determining the number of clusters \cite{chi2019recovering}. Although convex clustering is effective even in large-sample settings \cite{radchenko2017convex}, strong regularization can lead to undesirable merging of outliers with genuine clusters, especially in high-dimensional data \cite{feng2023review}.

This paper focuses on addressing the \textbf{robustness challenges} of clustering in the presence of noise and outliers. Robust methods can mitigate these effects by either discarding outlier features \cite{wang_robust_2016} or directly controlling the influence of anomalous data points. One powerful approach is the \textit{Median-of-Means (MoM)} estimator, which provides strong robustness and concentration guarantees under mild assumptions \cite{refer1,refer2,refer3,refer4,refer5}. Related work by \cite{paul_uniform_2021} further unifies robust center-based clustering under general dissimilarity measures.

Consider $n$ data points $\mathbf{x}_1, \dots, \mathbf{x}_n \in \mathbb{R}^d$ to be grouped into $k$ clusters. Each cluster is represented by a centroid $\boldsymbol{\theta}_j \in \mathbb{R}^d$, and the set of centroids forms a matrix $\mathbf{\Theta} \in \mathbb{R}^{k \times d}$. Using a Bregman divergence $d_\phi(\cdot, \cdot)$ as the dissimilarity measure, where $\phi:\mathbb{R}^d \to \mathbb{R}$ is a differentiable convex function, clustering can be formulated as
\begin{equation}
    f_{\boldsymbol{\Theta}}(\mathbf{x}_1, \dots, \mathbf{x}_n) = 
    \frac{1}{n} \sum_{i=1}^n 
    \Psi_\alpha(d_\phi(\mathbf{x}_i, \boldsymbol{\theta}_1), \dots, d_\phi(\mathbf{x}_i, \boldsymbol{\theta}_k)),
    \label{gen_obj}
\end{equation}
where $\Psi_\alpha:\mathbb{R}^k_{\geq 0}\to\mathbb{R}_{\geq 0}$ is a non-decreasing function, $\Psi_\alpha(0)=0$, and $\alpha$ is a hyperparameter. Different choices of $\phi$ and $\Psi_\alpha$ recover well-known clustering algorithms such as $k$-means, power $k$-means, and $k$-harmonic-means.

Instead of directly minimizing \eqref{gen_obj}, MoM partitions the data into $L$ disjoint subsets $B_1,\dots,B_L$, each containing $b$ samples, and optimizes a robust median-based objective:
\begin{equation}
    \text{MoM}^n_L(\boldsymbol{\Theta}) = 
    \text{Median}\left( 
    \frac{1}{b} \sum_{i \in B_1} f_{\boldsymbol{\Theta}}(\mathbf{x}_i), 
    \dots,
    \frac{1}{b} \sum_{i \in B_L} f_{\boldsymbol{\Theta}}(\mathbf{x}_i)
    \right).
\end{equation}

Because outliers typically contaminate only a fraction of the partitions, the median effectively suppresses their influence, ensuring robustness even in adversarial settings. Formal breakdown-point analyses of MoM estimators support this intuition \cite{refer6,refer7}.

To demonstrate our method, Figure \ref{fig:sidebyside} illustrates results on a benchmark dataset with $k=5$, $N=1000$, $\gamma=5000$, $20\%$ noise, and varying $\mu$. Outliers near the boundary are successfully isolated. When any pair $\mu_i,\mu_j$ exceeds the separation threshold $\mu$, the connecting edge is dropped, preventing spurious merging. While most outliers are identified, some deeply embedded points remain undetected.

\begin{figure*}[h!]
    \centering
    \begin{subfigure}[b]{0.245\textwidth}
        \includegraphics[width=\textwidth]{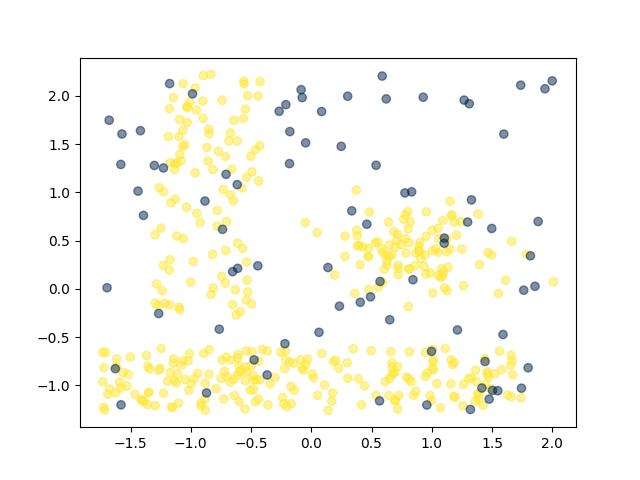}
        \caption{Dataset}
        \label{lsun_original}
    \end{subfigure}
    \begin{subfigure}[b]{0.245\textwidth}
        \includegraphics[width=\textwidth]{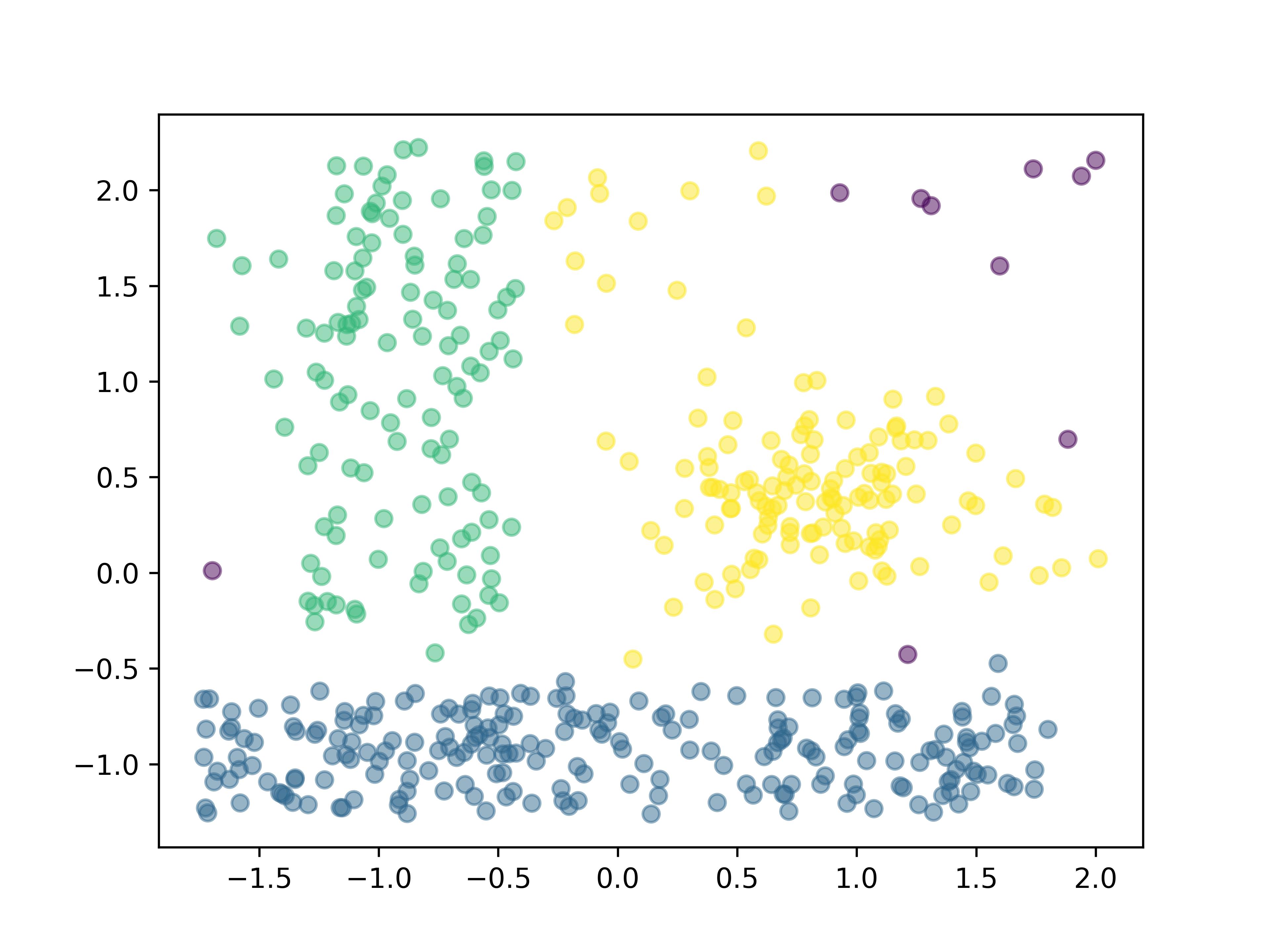}
        \caption{$\mu = 0.4$ }
        \label{lsun_1}
    \end{subfigure}
    \begin{subfigure}[b]{0.245\textwidth}
        \includegraphics[width=\textwidth]{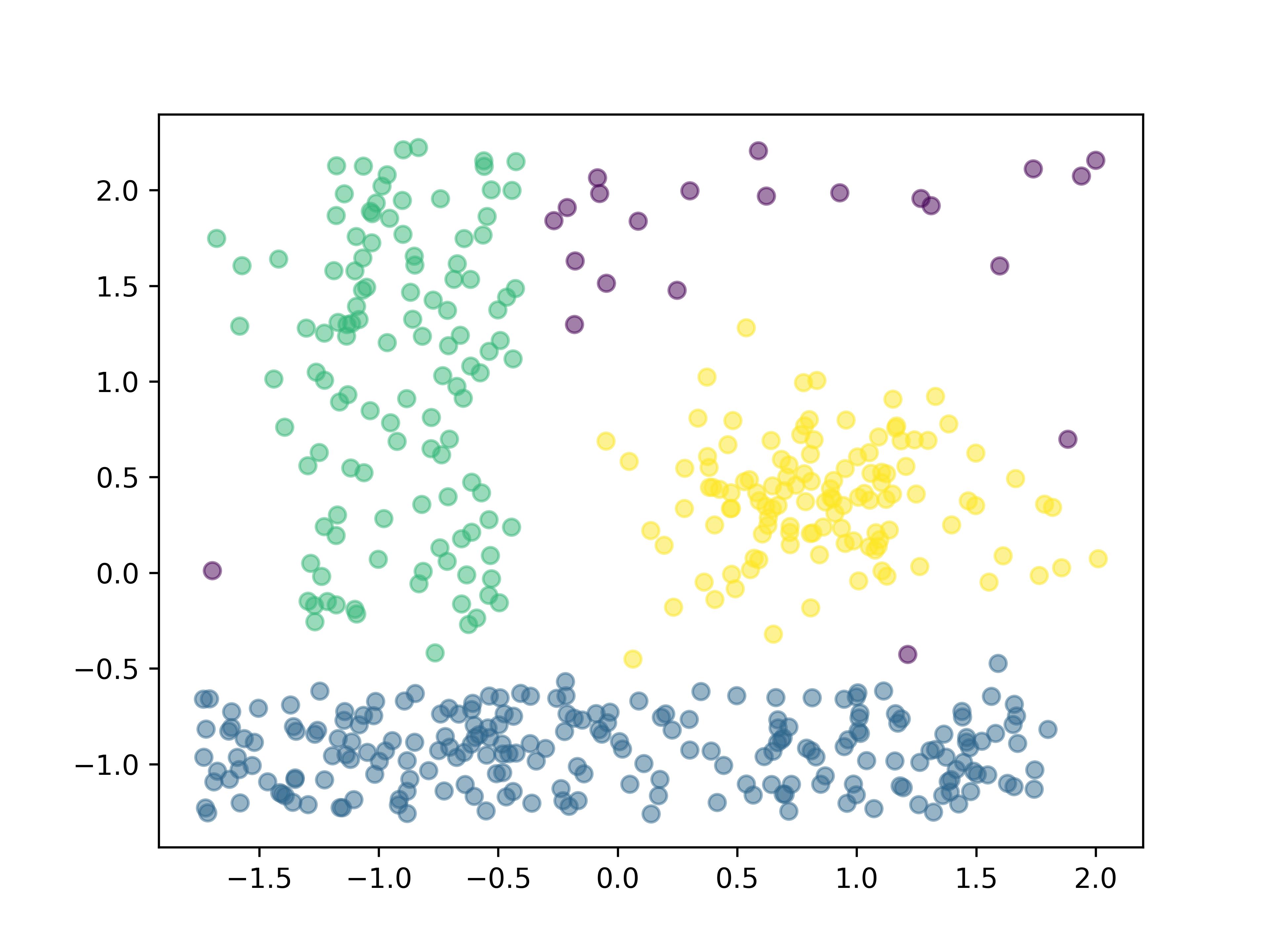}
        \caption{$\mu = 0.2$}
        \label{lsun_2}
    \end{subfigure}
    \begin{subfigure}[b]{0.245\textwidth}
        \includegraphics[width=\textwidth]{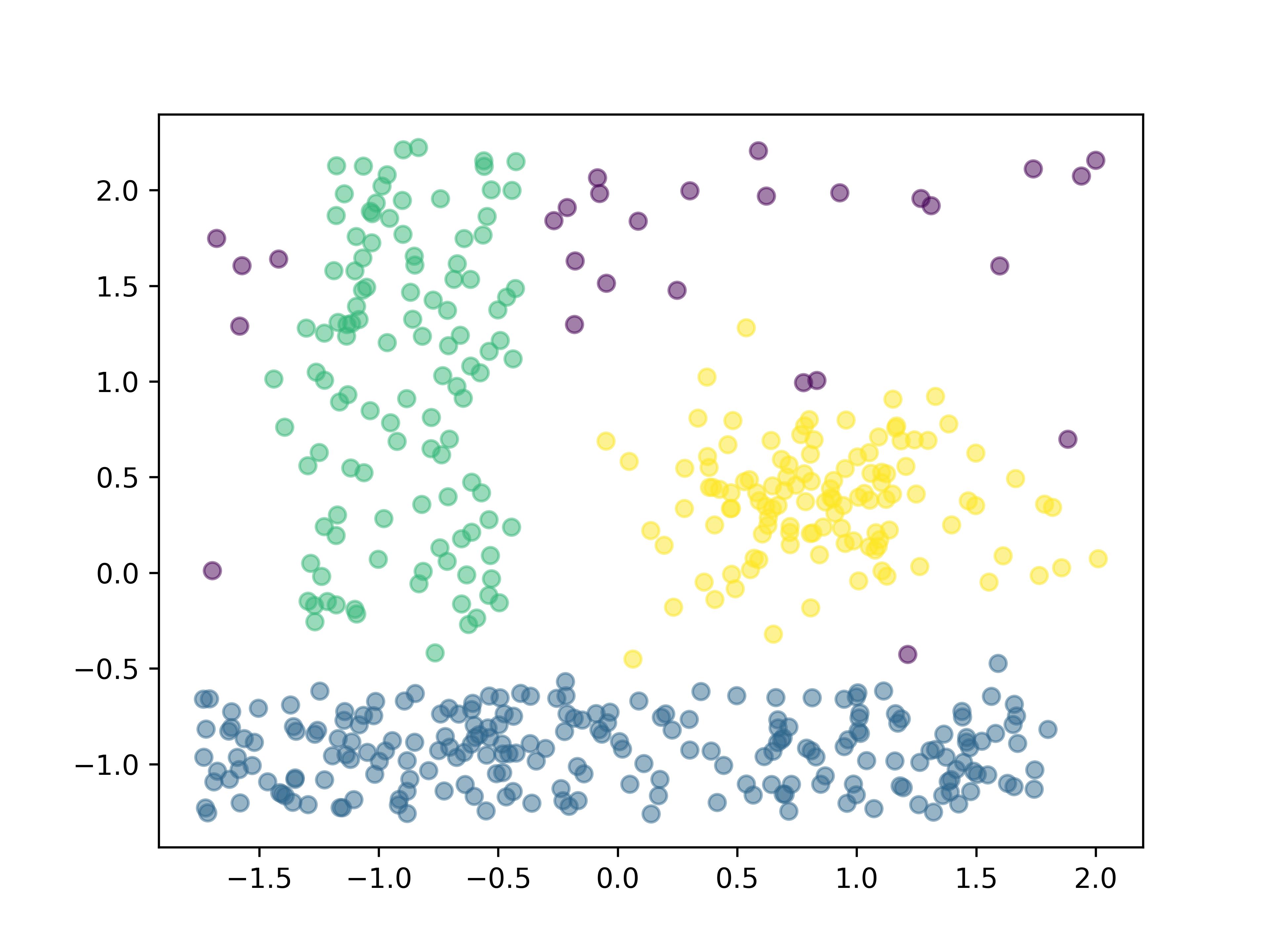}
        \caption{$\mu = 0.12$}
        \label{lsun_3}
    \end{subfigure}
    \caption{Figure \ref{lsun_original} shows the original dataset in yellow, with $20\%$ added noise represented by blue dots. As $\mu$ decreases, our method progressively identifies more noise points as outliers, which are marked by purple dots in Figures \ref{lsun_1}, \ref{lsun_2}, and \ref{lsun_3} respectively.}
    \label{fig:sidebyside}
\end{figure*}

\section{Contributions}

In this paper, we present a novel clustering framework that extends convex clustering with enhanced robustness using the Median-of-Means (MoM) estimator. Our main contributions are summarized below:
\begin{itemize}
    \item \textbf{Robust Convex Clustering Framework:} We propose a new clustering method that integrates the MoM estimator into the convex clustering paradigm, effectively mitigating the adverse impact of outliers and noisy data. We also develop a dedicated algorithm for the proposed framework, ensuring practical applicability and computational efficiency.
    
    \item \textbf{Theoretical Guarantees:} We establish uniform deviation bounds and concentration inequalities under standard regularity assumptions, providing strong theoretical reliability for our method.
    
    \item \textbf{Empirical Validation:} Extensive simulation studies demonstrate that our method consistently outperforms conventional clustering approaches, particularly in terms of robustness and efficiency under data contamination.
\end{itemize}

\section{Related Works}

\noindent\textbf{Convex Clustering and Semi-definite Programming:}
Since the introduction of Convex Clustering by \cite{pelckmans2005convex}, various extensions and perspectives have been explored \cite{lindsten2011clustering}, \cite{hocking2011clusterpath}, \cite{zhu2014convex}. Pelckmans and De Moor introduced a shrinkage term to induce sparsity between centroids, enabling hierarchical clustering by tuning the trade-off parameter. \cite{hocking_clusterpath_2011} propose a convex relaxation-based clustering algorithm that efficiently traces a regularization path, achieves state-of-the-art performance on non-convex clusters, and simultaneously infers a hierarchical tree structure from the data. In \cite{chen2011integrating}, two optimization approaches — ADMM and a variant of AMA were introduced to solve convex clustering problems for practical applications. Additionally, convex relaxations of the $k-$Means problem via Semi-Definite Programming (SDP) have been developed \cite{peng2007approximating, awasthi2015relax, mixon2017clustering}, replacing the $k-$means objective with a trace-based formulation. \cite{mixon2017clustering} further showed that this SDP relaxation achieves perfect recovery with high probability under the stochastic unit-ball model in $\mathbb{R}^d$, given mild regularity conditions.

\noindent\textbf{Robustness and Feature Selection:} Robustness to outliers is essential for ensuring learning algorithms remain stable under adversarial or noisy conditions. To address this, \cite{gong2012robust} proposed a Robust Multi-Task Feature Learning model that not only identifies shared features across tasks but also detects outlier tasks. Similarly, \cite{chen2011integrating} introduced a robust multi-task learning framework combining a low-rank structure for related tasks and a sparse group structure to isolate outlier tasks.

\noindent\textbf{Median of Means based Clustering:}
The Median of Means (MoM) estimator provides a robust and efficient framework for mean estimation with strong theoretical guarantees. \cite{brunet2022k} introduced a bootstrap-based MoM method, forming blocks with replacement, which improves the breakdown point over standard MoM when enough blocks are used. In the context of interpretable clustering, \cite{moshkovitz2020explainable} proposed a method using small decision trees to partition data, enabling clear cluster characterization. They further analyzed whether such tree-induced clusterings can match the cost of optimal unconstrained clustering and how to compute them efficiently.

\section{Proposed Method} \label{prop_meth}

In this section, we outline our proposed clustering technique based on the median of means estimate in sufficient detail. We also include an Adam-based gradient descent method to optimize our non-convex objective function effectively.

Let $\mathbf{X}_{n \times d} \in\mathbb{R}^{n\times d}$ be the data matrix, where each row $\{\mathbf{x}_i\}_{i = 1}^n$ is a data point, and $\mathbf{x}_i\in\mathbb{R}^{d}$ for each $i\in\{1,2,...,n\}$. Let $\mathbf{u}_i$ be the agent corresponding to point $\mathbf{x}_i$ $\forall i \in \{1, 2, \cdots, n\}$, and we define $\mathbf{U}_{n \times d}\in\mathbb{R}^{n\times d}$ as the agent matrix. 

One of the most challenging problems in convex clustering is assigning weights to every pair of neighbours based on certain similarity measures. This enforces a restriction on its performance in high dimensions, which depends heavily on the choice of the pairwise similarities, although most people follow a $k-$nearest neighbour-based approach \cite{chi2015splitting}, coupled with a Gaussian similarity measure:
\begin{equation}
    w_{ij}= \mathbb{1}_{ij,k} e^{-\phi ||\boldsymbol{x}_i - \boldsymbol{x}_j||_{2}^2},
\end{equation}
where $\mathbb{1}_{ij,k}=1$ if $x_i$ is one of the $k$-nearest neighbours of $x_j$ with respect to the $\|\cdot\|_2 $ and $0$ otherwise. Here, $\phi$ represents the bandwidth of the Gaussian kernel, and smaller values of $\phi$ indicate greater similarities between the two nodes. Arbitrary choices of $\phi$ can lead to poor cluster generation, formation of arbitrary clusters, or even collapse of all cluster centroids to a global centroid \cite{hocking_clusterpath_2011}, hindering the overall effectiveness of the $k-$nearest neighbour-based heuristics.

Next, we introduce a Random Binning strategy to partition the dataset before minimizing a non-convex objective function. This class of Random Binning (RB) techniques was originally proposed in \cite{rahimi2007random} and subsequently revisited in \cite{wu2016revisiting}, where it was shown to yield faster convergence than other Random Features methods when scaling large-scale kernel machines. Although these previous approaches typically employ a parametrized feature map \cite{wu2018scalable}, incorporating both bin widths and offsets, our method adopts a simplified variant of this strategy - designed specifically to randomly partition the dataset into $\mathcal{O}(n)$ number of bins, each containing a fixed number of samples drawn from the observables. Formally, we partition the index set ${1, 2, \cdots, n}$ into $l = \mathcal{O}(n)$ subsets, denoted by $B = \{B_i\}_{i = 1}^l$, where each $B_i$ contains exactly $b (= \lfloor\frac{n}{l}\rfloor)$ elements. If $n$ is not divisible by $l$, a small number of elements are discarded to maintain uniform bin sizes across all partitions.

Henceforth, we define the ``contribution'' of point $\mathbf{x}_r$ in a ``convex'' type cost function as
\begin{equation}\label{eqn:1_r}
    f_{U}(\mathbf{x}_r) = \frac{1}{2} ||\mathbf{x}_r - \mathbf{u}_r||^2_2 + \frac{\gamma}{2} \sum_{i,j} w_{ij} ||\mathbf{u}_i - \mathbf{u}_j||^2_2.
\end{equation}

By our aforementioned MoM framework, instead of directly minimizing $\displaystyle \frac{1}{n} \sum_{r = 1}^n f_{\mathbf{U}}(\mathbf{x}_r)$, we aim to minimize an objective function of the form
\begin{equation} \label{eqn:2_r}
    C(\mathbf{U}) =  Median \left( \left\{ \frac{1}{b}\sum_{r \in B_j} f_{\mathbf{U}}(\mathbf{x}_r)\right\}_{j = 1}^{l}\right). 
\end{equation}

Noting that the second term in (\ref{eqn:1_r}) is independent of $r$, we define $l_t \in \{1, 2, \cdots, l\}$ such that
\begin{align}
    MoM_B(\mathbf{U}) &:= Median\left( \left\{ \frac{1}{2b} \sum_{i \in B_j} ||\mathbf{x}_i - \mathbf{u}_i||_2^2\right\}_{j = 1}^l\right) \notag \\
    &= \frac{1}{2b} \sum_{i \in B_{l_t}} ||\mathbf{x}_i - \mathbf{u}_i||_2^2.
\end{align}

Next, we rewrite the cost function as 
\begin{equation} 
    C(\mathbf{U}) = MoM_B(\mathbf{U}) + \frac{\gamma}{2}\sum_{i,j}w_{ij}||\mathbf{u}_i - \mathbf{u}_j||^{2}_{2}.
    \label{eq:new_cost_function}
\end{equation}

Before initiating the optimization procedure of the cost function in \ref{eq:new_cost_function}, we will involve another robustness criterion to make our objective function more stable from outliers: we use 
 $\displaystyle \sum_{i,j} w_{ij} \min\left(\mu, ||\mathbf{u}_i - \mathbf{u}_j||^{2}_{2}\right)$ instead of $\displaystyle \sum_{i,j}w_{ij}||\mathbf{u}_i - \mathbf{u}_j||^{2}_{2}$. By clipping the maximum pairwise distances by another hyperparameter $\mu$, we can significantly remove the effect of such outliers or other distant clusters.

 Now, we are in a position to write down the final cost function, which is
\begin{equation}
    C(\mathbf{U}) = MoM_B(\mathbf{U}) + \frac{\gamma}{2}\sum_{i,j}w_{ij} \min \left\{\mu, ||\mathbf{u}_i - \mathbf{u}_j||^{2}_{2}\right\} \label{introduce mu}.
\end{equation}

Due to the non-convex nature of this objective function, we use the ADAM gradient descent algorithm \cite{kingma2014adam} to minimize it. The gradient of $C(\mathbf{U})$ with respect to $\mathbf{u}_i$ is
\begin{align}
g_i := \frac{\partial C(\mathbf{U})}{\partial \mathbf{u}_i} = &
\frac{1}{b} (\mathbf{u}_i - \mathbf{x}_i) \mathbb{1} (i \in B_{l_t}) + \gamma \sum_{j} w_{ij} (\mathbf{u}_i - \mathbf{u}_j) \mathbb{1} (||\mathbf{u}_i - \mathbf{u}_j||_2^2 < \mu).
\end{align}

After $N$ iterations, we construct a graph with $\{\mathbf{u}_i\}_{i = 1}^n$ as vertices and where $\mathbf{u}_i$ and $\mathbf{u}_j$ are adjacent if $||\mathbf{u}_i - \mathbf{u}_j|| < \eta_1$. We assign each connected component of this graph as a cluster and combine all clusters with less than half the average cluster size into a single cluster, marking this combined cluster as noise.

\begin{algorithm}[htb!]
    \caption{COMET : Convex Clustering with Median of Mean Estimator and Adam Optimization}
    \label{alg:comet}
    \textbf{Input}: Data $\{\mathbf{x}_i\}_{i = 1}^n$ where $\mathbf{x}_i \in \mathbb{R}^d$ \\
    \textbf{Hyperparameters}: $N, k, \phi, \gamma, \mu, \eta_1$  \\
    \textbf{Output}: Cluster assignment $\{Z_i\}_{i = 1}^n$ where $Z_i \in \mathbb{N}$ \\
    \begin{algorithmic}[1]
        \STATE Construct a $k-$NN graph on $\{\mathbf{x}_i\}_{i = 1}^n$ and assign $w_{ij} = e^{-\phi ||\mathbf{x}_i - \mathbf{x}_j||_{2}^2}$ if $\mathbf{x}_i$ and $\mathbf{x}_j$ are adjacent, $w_{ij} = 0$ otherwise
        \STATE Initialize $\mathbf{m}_i^{(0)} = 0$, $\mathbf{v}_i^{(0)} = 0$ and $\mathbf{u}_i^{(0)} = \mathbf{x}_i$
        \FOR{$t = 0$ to $N-1$}
        \STATE Construct a partition, $B = \{B_i\}_{i = 1}^l$, of $\{1, 2, \cdots, n\}$ into $l$ bins each of size $b$ 
        \STATE Find $B_{l_t} \in B$ such that $\displaystyle MoM_B(\mathbf{U}^{(t)}) = \frac{1}{2b} \sum_{i \in B_{l_t}} ||\mathbf{x}_i - \mathbf{u}_i^{(t)}||_2^2$
        \STATE $\displaystyle \mathbf{g}_i^{(t)} = \frac{1}{b} (\mathbf{u}_i^{(t)} - \mathbf{x}_i) \mathbb{1} (i \in B_{l_t}) + \gamma \sum_{j} w_{ij} (\mathbf{u}_i^{(t)} - \mathbf{u}_j^{(t)}) \mathbb{1} (||\mathbf{u}_i^{(t)} - \mathbf{u}_j^{(t)}||_2^2 < \mu)$
        \STATE $\mathbf{m}_i^{(t)} = \boldsymbol{\beta}_1 \mathbf{m}_i^{(t - 1)} + (1 - \boldsymbol{\beta}_1) \mathbf{g}_i^{(t)}$
        \STATE $\mathbf{v}_i^{(t)} = \boldsymbol{\beta}_2 \mathbf{v}_i^{(t - 1)} + (1 - \boldsymbol{\beta}_2) \left(\mathbf{g}_i^{(t)} \odot \mathbf{g}_i^{(t)}\right)$
        \STATE Calculate $\mathbf{u}_i^{(t + 1)}$ from $\mathbf{u}_i^{(t)}$ with the help of $\hat{\mathbf{m}}_i^{(t)}$ and $\hat{\mathbf{v}}_i^{(t)}$ using the ADAM update rule. Refer to the supplementary material (\ref{adam_rule}) for the exact update rule.
        \ENDFOR
        \STATE Construct a graph on $\left\{\mathbf{u}_i^{(N)}\right\}_{i = 1}^n$ where $\mathbf{u}_i$ and $\mathbf{u}_j$ are adjacent if $||\mathbf{u}_i - \mathbf{u}_j|| < \eta_1$
        \STATE Assign each connected component of this graph as a cluster
        \STATE Combine all clusters with less than half the average cluster size into a single cluster and mark this combined cluster as noise
    \end{algorithmic}
\end{algorithm}

\section{Theoretical Properties} \label{theo_prop}

This section establishes the theoretical properties of the (global) optimal solutions of the proposed objective function. We also analyse computational complexity and discuss the convergence properties of our method.

\subsection{Finite Sample Error Bounds and Weak Consistency}

We begin our statistical analysis of COMET by providing finite sample error bounds on the prediction error, using the widely used Hanson-Wright inequalities, especially the recent uniform versions by \cite{pmlr-v125-bousquet20b}. These bounds provide sufficient conditions for the consistency of the centroid and weight estimates.

Recall the objective function \eqref{eq:new_cost_function},
\[\text{min}_U\left\{\frac{1}{2b} \sum_{i \in B_{l_t}} ||\mathbf{x}_i - \mathbf{u}_i||_2^2 + \frac{\gamma}{2}\sum_{i,j}w_{ij}||\mathbf{u}_i-\mathbf{u}_j||^{2}_{2}\right\}, \label{eq:rev_cost_fn}\]

\noindent where $l_t$ is such that \(\frac{1}{2b} \sum_{i \in B_{l_t}} ||\mathbf{x}_i - \mathbf{u}_i||_2^2 = Median\left( \left\{ \frac{1}{2b} \sum_{i \in B_j} ||\mathbf{x}_i - \mathbf{u}_i||_2^2\right\}_{j = 1}^l\right) \), \(l_t \in \{1, \ldots, l\}\). 

\noindent Let $\boldsymbol{x} = vec (X)$ and $\boldsymbol{u} = vec (U)$, where $vec(\cdot)$ means to vectorize a matrix by appending its columns together. So, \(\boldsymbol{x}, \boldsymbol{u} \in \mathbb{R}^{nd}\) and \(\boldsymbol{x}_{d(i-1)+j} = X_{ij}\), \(\boldsymbol{u}_{d(i-1)+j} = U_{ij}\). 

\noindent Consider \(\boldsymbol{I}_{B_{l_t}}\) to be an \(nd \times nd\) diagonal matrix with $i$-th diagonal element  = 1 if \(bd \leq i < (b+1)d\) where \(b \in B_{l_t}\) and all other elements 0. So, we can write
\[\sum_{i \in B_{l_t}} ||\mathbf{x_i - u_i}||_2^2 = (\boldsymbol{x - u})^\top I_{B_{l_t}} (\boldsymbol{x - u}).\]

\noindent Also note that $w_{ij}$'s remain fixed in each iteration of the algorithm. Since $w_{ij}$'s are either 0 or $< 1$, we work with an upper bound of the cost function, where each $w_{ij}$ is replaced by $w'_{ij} = \mathbb{1}(w_{ij} > 0)$. Let $D^{n(n-1)d \times nd}$ be such that $\boldsymbol{D}_{\mathcal{C}(i,j)}\boldsymbol{u} = \mathbf{u}_{i} - \mathbf{u}_{j}$, where \(\mathcal{C}(i,j)\) is an index set: then the objective function can be written as 
\begin{equation}
    \text{min} \left\{\frac{1}{2b}(\boldsymbol{x - u})^\top I_{B_{l_t}} (\boldsymbol{x - u}) + \frac{\gamma}{2}\sum_{(i,j) \in \mathcal{E}}\|\boldsymbol{D}_{\mathcal{C}(i,j)}\boldsymbol{u}\|^2_2\right\},
    \label{eq:cost_fn_matrix}
\end{equation}
where $\mathcal{E} \subseteq \{(i, j) : i, j \in \{1, 2, \dots, n\}\}$ is an index set. We will assume the model \(\boldsymbol{x = u+\epsilon}\) , where $\epsilon \in \mathbb{R}^{nd}$ is a vector of independent noise variables and \(\mathbb{E}(\epsilon) = 0\). This model is fairly standard for analysing the large-sample behaviour of convex clustering methods \cite{tan_statistical_2015};\cite{Wang_2018}. For all practical purposes, one may assume that the error terms are almost surely bounded, that is, for some $M > 0$, \(|\epsilon_i| \leq M\) for all \(i = 1, \ldots, nd\). For notational simplicity, we write \(\|{\boldsymbol{y}}\|_{\boldsymbol{A}}^2 = {\boldsymbol{y}}^\top \boldsymbol{A y}\), for any positive semidefinite matrix $\boldsymbol{A}$. The goal of this analysis is to find probabilistic bounds on \(\|\boldsymbol{\hat{u} - u}\|_{\hat{\boldsymbol{I}}_{B_{l_t}}}^2\), where $\hat{\boldsymbol{u}}$ and ${\hat{\boldsymbol{I}}_{B_{l_t}}}$ are obtained by minimising the objective function in \eqref{eq:cost_fn_matrix}.

\begin{theorem}
\label{thm:important}
    Suppose the model behaves as \(\boldsymbol{x = u + \epsilon}\), where \(\boldsymbol{\epsilon} \in \mathbb{R}^{nd}\) is a vector of independent bounded random variables, with mean 0, covariance matrix $\sigma^2 \boldsymbol{I}_{nd\times nd}$ and \(|\epsilon_i| \leq M\), for all \(i = 1, \ldots, nd\). Further assume that \(\boldsymbol{\hat{u}}\) and \({\hat{\boldsymbol{I}}_{B_{l_t}}}\) are obtained from minimizing \eqref{eq:cost_fn_matrix}, then if $\gamma^{\prime} \geq \frac{M}{ndb\sqrt{n}}$ the following holds with probability at least 1 - $\delta$,
    \begin{equation}
        \begin{split}
            \frac{1}{2ndb}\|\boldsymbol{\hat{u} - u}\|_{\hat{\boldsymbol{I}}_{B_{l_t}}}^2 \leq M^2\left(\frac{\sqrt{db}+d\sqrt{n}}{n\sqrt{db}}\right)
            + M^2 \left(\frac{c}{b\sqrt{nd}}\sqrt{\log\left(\frac{1}{\delta}\right)}
            + \frac{c\log\left(\frac{1}{\delta}\right)}{ndb}\right) + \gamma^{\prime} \frac{\left|\mathcal{E}\right|}{4} \\+ \gamma^{\prime}\left[ \sum_{(i, j) \in \mathcal{E}} \left\|\boldsymbol{D}_{\mathcal{C}(i, j)}\boldsymbol{u}\right\|_2 + \sum_{(i, j) \in \mathcal{E}} \left\|\boldsymbol{D}_{\mathcal{C}(i, j)}\boldsymbol{u}\right\|_2^2\right].
        \end{split}
        \label{eq:bound}
    \end{equation}
\end{theorem}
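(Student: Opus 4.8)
The plan is to run a \textbf{basic inequality} argument driven by the optimality of $\hat{\boldsymbol u}$, followed by concentration of the noise quadratic forms. Write $\boldsymbol\Delta = \hat{\boldsymbol u} - \boldsymbol u$, let $\hat{\boldsymbol I} := \boldsymbol I_{B_{l_t}(\hat{\boldsymbol u})}$ be the median-residual block selected at the optimum, and $\tilde{\boldsymbol I} := \boldsymbol I_{B_{l_t}(\boldsymbol u)}$ the median block evaluated at the truth. Since $\hat{\boldsymbol u}$ minimizes \eqref{eq:cost_fn_matrix} (with the $w'_{ij}$) and $\boldsymbol u$ is feasible,
\begin{equation*}
\frac{1}{2b}\|\boldsymbol x - \hat{\boldsymbol u}\|_{\hat{\boldsymbol I}}^2 + \frac{\gamma'}{2}\sum_{(i,j)\in\mathcal E}\|\boldsymbol D_{\mathcal C(i,j)}\hat{\boldsymbol u}\|_2^2 \le \frac{1}{2b}\|\boldsymbol x - \boldsymbol u\|_{\tilde{\boldsymbol I}}^2 + \frac{\gamma'}{2}\sum_{(i,j)\in\mathcal E}\|\boldsymbol D_{\mathcal C(i,j)}\boldsymbol u\|_2^2 .
\end{equation*}
Substituting $\boldsymbol x - \boldsymbol u = \boldsymbol\epsilon$, using $\boldsymbol x - \hat{\boldsymbol u} = \boldsymbol\epsilon - \boldsymbol\Delta$, and expanding $\|\boldsymbol\epsilon - \boldsymbol\Delta\|_{\hat{\boldsymbol I}}^2 = \|\boldsymbol\epsilon\|_{\hat{\boldsymbol I}}^2 - 2\boldsymbol\epsilon^\top\hat{\boldsymbol I}\boldsymbol\Delta + \|\boldsymbol\Delta\|_{\hat{\boldsymbol I}}^2$ isolates the target $\|\boldsymbol\Delta\|_{\hat{\boldsymbol I}}^2$ on the left, at the cost of three families of terms: a cross term $\tfrac1b\boldsymbol\epsilon^\top\hat{\boldsymbol I}\boldsymbol\Delta$, a difference of block noise energies $\tfrac{1}{2b}\big(\|\boldsymbol\epsilon\|_{\tilde{\boldsymbol I}}^2 - \|\boldsymbol\epsilon\|_{\hat{\boldsymbol I}}^2\big)$, and the penalty difference $\tfrac{\gamma'}{2}\big(\sum_{(i,j)}\|\boldsymbol D_{\mathcal C(i,j)}\boldsymbol u\|_2^2 - \sum_{(i,j)}\|\boldsymbol D_{\mathcal C(i,j)}\hat{\boldsymbol u}\|_2^2\big)$.

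Next I would dispatch each term. For the cross term, $\hat{\boldsymbol I}$ is a $0/1$ diagonal, hence an orthogonal projection, so Cauchy--Schwarz in the induced semi-inner product gives $\boldsymbol\epsilon^\top\hat{\boldsymbol I}\boldsymbol\Delta \le \|\boldsymbol\epsilon\|_{\hat{\boldsymbol I}}\,\|\boldsymbol\Delta\|_{\hat{\boldsymbol I}}$; an AM--GM split then absorbs a portion of $\tfrac{1}{2b}\|\boldsymbol\Delta\|_{\hat{\boldsymbol I}}^2$ back into the left-hand side and leaves only an extra multiple of $\|\boldsymbol\epsilon\|_{\hat{\boldsymbol I}}^2$. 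For the penalty difference I would expand $\|\boldsymbol D_{\mathcal C(i,j)}\hat{\boldsymbol u}\|_2^2$ about $\boldsymbol u$, discard the nonnegative pure-$\boldsymbol\Delta$ contribution, and bound the edgewise cross term by Young's inequality; this is what produces the linear fusion term $\gamma'\sum_{(i,j)}\|\boldsymbol D_{\mathcal C(i,j)}\boldsymbol u\|_2$ and the combinatorial $\gamma'|\mathcal E|/4$ term, with the hypothesis $\gamma' \ge \tfrac{M}{ndb\sqrt n}$ being exactly what is needed to absorb the residual noise--fusion interaction into these penalty contributions.

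The statistical heart is the concentration of the block noise energies. Each $\|\boldsymbol\epsilon\|_{\boldsymbol I_{B_j}}^2 = \sum_{c\in B_j}\epsilon_c^2$ is a sum of $bd$ independent, $[0,M^2]$-bounded variables with mean $\sigma^2$, so a Bernstein/Hanson--Wright bound controls a single block around $bd\sigma^2$ with sub-exponential fluctuations of order $M^2\sqrt{bd\log(1/\delta)} + M^2\log(1/\delta)$; after the $\tfrac{1}{2ndb}$ normalization these become precisely $M^2\big(\tfrac{c}{b\sqrt{nd}}\sqrt{\log(1/\delta)} + \tfrac{c\log(1/\delta)}{ndb}\big)$, while the deterministic mean contributes the $M^2\big(\tfrac{\sqrt{db}+d\sqrt n}{n\sqrt{db}}\big)$ term. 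The quantity $\|\boldsymbol\epsilon\|_{\tilde{\boldsymbol I}}^2$ is literally the median of the block energies and concentrates by the usual median-of-means device (a binomial tail over the $l$ blocks), whereas $\|\boldsymbol\epsilon\|_{\hat{\boldsymbol I}}^2$ sits on the data-dependent block $\hat{\boldsymbol I}$ and must be controlled uniformly.

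The main obstacle --- and the reason the uniform Hanson--Wright inequality of \cite{pmlr-v125-bousquet20b} is invoked --- is exactly this data dependence: because $\hat{\boldsymbol I}$, and the direction of $\boldsymbol\Delta$ it selects, is chosen by the optimizer after seeing $\boldsymbol\epsilon$, neither the cross term nor $\|\boldsymbol\epsilon\|_{\hat{\boldsymbol I}}^2$ may be treated as a fixed linear or quadratic functional of the noise. I would therefore replace the pointwise bounds by a uniform one --- a supremum over the $l$ admissible selection matrices $\{\boldsymbol I_{B_j}\}_{j=1}^l$ (equivalently the at-most-$l$ quadratic forms) --- so that the same $\sqrt{\log(1/\delta)}$ and $\log(1/\delta)$ tails hold simultaneously for whichever block the algorithm picks. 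Collecting the absorbed $\|\boldsymbol\Delta\|_{\hat{\boldsymbol I}}^2$ on the left, inserting the uniform concentration bounds for the two energy terms together with the Young-inequality penalty estimates, and rescaling by $\tfrac{1}{2ndb}$ then yields \eqref{eq:bound} with probability at least $1-\delta$.
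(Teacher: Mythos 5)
Your opening moves do coincide with the paper's: a basic inequality from optimality of $(\hat{\boldsymbol u},\hat{\boldsymbol I}_{B_{l_t}})$, a crude bound of order $M^2/n$ on the block-energy mismatch, and uniform control over the $l$ block selectors via the uniform Hanson--Wright inequality. The gap lies in the steps that are supposed to generate the fusion terms of \eqref{eq:bound}. In your decomposition the penalty difference $\gamma'\bigl[\sum_{(i,j)\in\mathcal E}\|\boldsymbol D_{\mathcal C(i,j)}\boldsymbol u\|_2^2-\sum_{(i,j)\in\mathcal E}\|\boldsymbol D_{\mathcal C(i,j)}\hat{\boldsymbol u}\|_2^2\bigr]$ contains no noise at all, so it cannot be the place where a ``residual noise--fusion interaction'' is absorbed, and the hypothesis $\gamma'\ge M/(ndb\sqrt n)$ can never enter through it. Worse, your recipe for this term breaks on its own: once you discard the nonnegative $\|\boldsymbol D_{\mathcal C(i,j)}\boldsymbol\Delta\|_2^2$, Young's inequality on the edgewise cross term $2\,|(\boldsymbol D_{\mathcal C(i,j)}\boldsymbol u)^\top\boldsymbol D_{\mathcal C(i,j)}\boldsymbol\Delta|$ regenerates exactly such a quadratic term, which nothing controls ($\|\boldsymbol\Delta\|_{\hat{\boldsymbol I}_{B_{l_t}}}$ sees only the coordinates of the selected block and says nothing about $\boldsymbol D\boldsymbol\Delta$); keeping the square and completing it instead yields only $\gamma'\sum\|\boldsymbol D_{\mathcal C(i,j)}\boldsymbol u\|_2^2$, never the linear term $\gamma'\sum\|\boldsymbol D_{\mathcal C(i,j)}\boldsymbol u\|_2$ nor $\gamma'|\mathcal E|/4$. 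Those two terms require a fusion term \emph{linear in the estimate} to be played against the retained quadratic penalty via $\max_t\,(t-t^2)=1/4$, and such a linear term can only come from the noise cross term --- which you have already spent through Cauchy--Schwarz and AM--GM. This is precisely the mechanism of the paper that your plan lacks: there the cross term is \emph{not} absorbed; $\boldsymbol\Delta$ is split via the SVD of $\boldsymbol D$ into a null-space component which, after the choice $\hat{\boldsymbol\alpha}=\boldsymbol\alpha+\boldsymbol V_{\alpha}^\top\boldsymbol\epsilon$, becomes the pure quadratic form $\boldsymbol\epsilon^\top\hat{\boldsymbol I}_{B_{l_t}}\boldsymbol V_{\alpha}\boldsymbol V_{\alpha}^\top\boldsymbol\epsilon$ treated by uniform Hanson--Wright (this, and not the mean of the block energies as you assert, is what produces the $M^2d/\sqrt{ndb}$ part of the leading term), and a fusion component bounded edgewise by $\max_{(i,j)}\|(\boldsymbol Z_{\mathcal C(i,j)}^-)^\top\boldsymbol V_{\beta}^\top\hat{\boldsymbol I}_{B_{l_t}}\boldsymbol\epsilon\|_2\cdot\sum_{(i,j)}\|\boldsymbol Z_{\mathcal C(i,j)}(\hat{\boldsymbol\beta}-\boldsymbol\beta)\|_2$, where the deterministic bound $M/(ndb\sqrt n)$ on the max is exactly the quantity the hypothesis on $\gamma'$ must dominate.

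There is also a quantitative obstruction to the absorption route itself: keeping $(1-\eta)\|\boldsymbol\Delta\|^2_{\hat{\boldsymbol I}_{B_{l_t}}}$ on the left multiplies every right-hand term by $(1-\eta)^{-1}>1$ and leaves an extra $\|\boldsymbol\epsilon\|^2_{\hat{\boldsymbol I}_{B_{l_t}}}/(2\eta)$, so the exact coefficients of \eqref{eq:bound} (coefficient $1$ on both fusion sums and on the leading $M^2$ term, $1/4$ on $\gamma'|\mathcal E|$) cannot be recovered for any choice of $\eta$. What your argument does prove, if carried out carefully, is a bound of the form $\frac{1}{2ndb}\|\hat{\boldsymbol u}-\boldsymbol u\|^2_{\hat{\boldsymbol I}_{B_{l_t}}}\le C M^2/n + C\gamma'\sum_{(i,j)\in\mathcal E}\|\boldsymbol D_{\mathcal C(i,j)}\boldsymbol u\|_2^2$ with some constant $C>1$. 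That is a legitimate, and in some regimes even sharper, estimate; but it is incomparable to \eqref{eq:bound} --- the excess $(C-1)\gamma'\sum_{(i,j)\in\mathcal E}\|\boldsymbol D_{\mathcal C(i,j)}\boldsymbol u\|_2^2$ is not dominated by $\gamma'|\mathcal E|/4+\gamma'\sum_{(i,j)\in\mathcal E}\|\boldsymbol D_{\mathcal C(i,j)}\boldsymbol u\|_2$ once the fusion differences exceed order one --- and it never uses the assumption on $\gamma'$, so it is not a proof of the theorem as stated.
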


\noindent The proof of this theorem is deferred to the supplemntary material (\ref{thm_proof}). From this theorem, we also arrive at the following two corollaries: Corollary \ref{cor:1} addresses the convergence of the centroid estimates under a minimum constraint on the hyperparameter, for the number of features being small enough with respect to the number of sample points, while Corollary \ref{cor:2} elaborates on the rate of convergence of those estimates under the constraint on the hyperparameter only. 

\begin{corollary}\label{cor:1}
    Suppose $\left\|\boldsymbol{D}_{\mathcal{C}(i, j)}\boldsymbol{u}\right\|_2 \leq C$, for all $1 \leq i, j \leq n$, for some constant $C$, $\left|\mathcal{E}\right| \leq kn$ and $\gamma^{\prime} \geq \frac{M}{ndb\sqrt{n}}$. If $d = o(n)$, then $\frac{1}{2ndb}\|\boldsymbol{\hat{u} - u}\|_{\hat{\boldsymbol{I}}_{B_{l_t}}}^2 \overset{p}{\rightarrow} 0$ as $n, d \rightarrow \infty$.
\end{corollary}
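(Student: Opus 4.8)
The plan is to start directly from the finite-sample bound \eqref{eq:bound} of Theorem \ref{thm:important} and show that, under the stated hypotheses and with $\gamma'$ taken at (a fixed multiple of) its minimal admissible value $\frac{M}{ndb\sqrt{n}}$, every term on its right-hand side tends to $0$ as $n, d \to \infty$ with $d = o(n)$. Since that bound holds with probability at least $1 - \delta$ while its right-hand side is, for each fixed $\delta$, a deterministic quantity, pointwise decay of that quantity will translate into convergence in probability by a routine $\varepsilon$-$\delta$ argument. Throughout I would use that $b \geq 1$ (indeed $b = \lfloor n/l \rfloor$ with $l = \mathcal{O}(n)$), so the bin size never works against us; the only delicate structural input is $d = o(n)$.

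For the first, purely deterministic, group I would split $\frac{\sqrt{db}+d\sqrt{n}}{n\sqrt{db}} = \frac{1}{n} + \sqrt{\frac{d}{nb}}$. The first piece is $O(1/n) \to 0$, and the second is bounded by $\sqrt{d/n}$, which vanishes precisely because $d = o(n)$ — this is where the feature-to-sample scaling hypothesis enters. The concentration group $\frac{c}{b\sqrt{nd}}\sqrt{\log(1/\delta)} + \frac{c\log(1/\delta)}{ndb}$ is handled by fixing $\delta$ and noting that both $\sqrt{nd}$ and $ndb$ diverge, so each summand is $o(1)$ at every fixed confidence level.

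The regularization group is the step requiring the most care, since $|\mathcal{E}|$ is permitted to grow linearly in $n$. Using $|\mathcal{E}| \leq kn$ together with the uniform bound $\|\boldsymbol{D}_{\mathcal{C}(i,j)}\boldsymbol{u}\|_2 \leq C$, I would dominate $\frac{|\mathcal{E}|}{4} + \sum_{(i,j)\in\mathcal{E}}\|\boldsymbol{D}_{\mathcal{C}(i,j)}\boldsymbol{u}\|_2 + \sum_{(i,j)\in\mathcal{E}}\|\boldsymbol{D}_{\mathcal{C}(i,j)}\boldsymbol{u}\|_2^2$ by $kn\left(\tfrac14 + C + C^2\right) = O(n)$. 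Multiplying by $\gamma' = \frac{M}{ndb\sqrt{n}}$ then gives a contribution of order $\frac{1}{db\sqrt{n}} = O(n^{-1/2})$, which vanishes. The essential tension here — and the main obstacle — is that Theorem \ref{thm:important} forces $\gamma'$ to exceed a lower bound, whereas consistency needs $\gamma'$ small enough to suppress the $O(n)$ fusion penalty; the resolution is that the prescribed threshold $\frac{M}{ndb\sqrt{n}}$ is itself $o(1/n)$, so taking $\gamma'$ at this threshold simultaneously meets the hypothesis of Theorem \ref{thm:important} and drives the regularization contribution to $0$.

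Finally I would assemble the pieces: for fixed $\delta > 0$ the entire right-hand side of \eqref{eq:bound} is some $R_{n,d}(\delta) \to 0$, so given any $\varepsilon > 0$ there exists $N_0$ with $R_{n,d}(\delta) < \varepsilon$ whenever $n, d > N_0$. On the event of probability at least $1 - \delta$ granted by Theorem \ref{thm:important} we then have $\frac{1}{2ndb}\|\boldsymbol{\hat{u} - u}\|_{\hat{\boldsymbol{I}}_{B_{l_t}}}^2 < \varepsilon$, hence $\mathbb{P}\big(\tfrac{1}{2ndb}\|\boldsymbol{\hat{u} - u}\|_{\hat{\boldsymbol{I}}_{B_{l_t}}}^2 > \varepsilon\big) \leq \delta$ for all large $n, d$. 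Since $\delta > 0$ is arbitrary, this probability tends to $0$, which is exactly the asserted convergence in probability.
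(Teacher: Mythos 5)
Your proof is correct and takes essentially the same route as the paper's: substitute the hypotheses into the bound of Theorem \ref{thm:important}, dominate the fusion terms by $\left(\tfrac14 + C + C^2\right)\gamma' kn$ using $\|\boldsymbol{D}_{\mathcal{C}(i,j)}\boldsymbol{u}\|_2 \leq C$ and $|\mathcal{E}| \leq kn$, show every term on the right-hand side vanishes under $d = o(n)$, and finish with the standard $\varepsilon$-$\delta$ conversion to convergence in probability. If anything, you are more careful than the paper: its proof silently lets $\gamma' kn \to 0$ even though the corollary's hypothesis only bounds $\gamma'$ from \emph{below}, whereas you explicitly pin $\gamma'$ at (a fixed multiple of) the threshold $M/(ndb\sqrt{n})$ and observe that this threshold is $o(1/n)$, which is exactly the point needed to make that step legitimate.
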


\begin{corollary}\label{cor:2}
    Suppose $\left\|\boldsymbol{D}_{\mathcal{C}(i, j)}\boldsymbol{u}\right\|_2 \leq C$, for all $1 \leq i, j \leq n$, for some constant $C$, $\left|\mathcal{E}\right| \leq kn$ and $\gamma^{\prime} \geq \frac{M}{ndb\sqrt{n}}$. Then $\frac{1}{2ndb}\|\boldsymbol{\hat{u} - u}\|_{\hat{\boldsymbol{I}}_{B_{l_t}}}^2 = O\left(\frac{1}{\sqrt{n}}\right)$.
\end{corollary}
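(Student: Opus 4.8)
The plan is to derive Corollary~\ref{cor:2} directly from the finite-sample bound~\eqref{eq:bound} of Theorem~\ref{thm:important}, which already holds with probability at least $1-\delta$; it therefore suffices to show that the right-hand side of~\eqref{eq:bound} is $O(n^{-1/2})$ under the stated hypotheses. Throughout I treat $b$, $d$, $M$, $c$, $C$, $k$ and the fixed confidence level $\delta$ as constants, so that the only asymptotic variable is $n$; recall that since $l=\mathcal{O}(n)$ and $b=\lfloor n/l\rfloor$, the bin size $b$ is indeed bounded. The first step is to pin the tuning parameter at its smallest admissible value, $\gamma'=\frac{M}{ndb\sqrt{n}}$: this is the crucial normalization, because the last two blocks of~\eqref{eq:bound} are increasing in $\gamma'$, so any larger choice would only inflate the bound, and the claimed rate is obtained precisely at the lower endpoint of the admissible range (equivalently, taking $\gamma'=\Theta(n^{-3/2})$).

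Next I bound the four blocks of~\eqref{eq:bound} one at a time. For the first block I split $M^2\frac{\sqrt{db}+d\sqrt{n}}{n\sqrt{db}}=\frac{M^2}{n}+\frac{M^2}{\sqrt{b}}\sqrt{d/n}$; with $d,b$ fixed the dominant summand is $\Theta(n^{-1/2})$, so this block is $O(n^{-1/2})$. The second block $M^2\bigl(\frac{c}{b\sqrt{nd}}\sqrt{\log(1/\delta)}+\frac{c\log(1/\delta)}{ndb}\bigr)$ has leading term of order $n^{-1/2}$ and a remainder of order $n^{-1}$, hence is $O(n^{-1/2})$ as well. For the third block I use $|\mathcal{E}|\le kn$ together with the pinned $\gamma'$ to get $\gamma'\frac{|\mathcal{E}|}{4}\le\frac{M}{ndb\sqrt{n}}\cdot\frac{kn}{4}=\frac{Mk}{4db\sqrt{n}}=O(n^{-1/2})$.

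For the final block I invoke the uniform edge bound $\|\boldsymbol{D}_{\mathcal{C}(i,j)}\boldsymbol{u}\|_2\le C$, which gives $\sum_{(i,j)\in\mathcal{E}}\|\boldsymbol{D}_{\mathcal{C}(i,j)}\boldsymbol{u}\|_2\le C|\mathcal{E}|\le Ckn$ and likewise $\sum_{(i,j)\in\mathcal{E}}\|\boldsymbol{D}_{\mathcal{C}(i,j)}\boldsymbol{u}\|_2^2\le C^2kn$; multiplying by the pinned $\gamma'$ yields $\gamma'(Ckn+C^2kn)=\frac{Mk(C+C^2)}{db\sqrt{n}}=O(n^{-1/2})$. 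Summing the four $O(n^{-1/2})$ contributions gives the claimed rate. The step requiring the most care is the first one: one must verify that none of the blocks decays strictly slower than $n^{-1/2}$, and in particular that the $\sqrt{d/(bn)}$ piece of the first block---the true rate-determining term---collapses to $\Theta(n^{-1/2})$ only because $d$ is held fixed. This is exactly where Corollary~\ref{cor:2} departs from the growing-dimension regime $d=o(n)$ of Corollary~\ref{cor:1}; everything else reduces to routine constant-order estimates once $\gamma'$ is pinned and the two hypotheses $|\mathcal{E}|\le kn$ and $\|\boldsymbol{D}_{\mathcal{C}(i,j)}\boldsymbol{u}\|_2\le C$ are applied.
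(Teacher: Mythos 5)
Your proof is correct and follows essentially the same route as the paper's: substitute the hypotheses $\left\|\boldsymbol{D}_{\mathcal{C}(i,j)}\boldsymbol{u}\right\|_2 \le C$ and $\left|\mathcal{E}\right| \le kn$ into the bound of Theorem~\ref{thm:important} and check term by term that every block is $O(n^{-1/2})$ with $b$, $d$, and $\delta$ held fixed. Your explicit pinning of $\gamma'$ at the lower endpoint $\frac{M}{ndb\sqrt{n}}$ is in fact slightly more careful than the paper's own argument, which tacitly takes $\gamma' = \Theta(n^{-3/2})$ when asserting that $\gamma' kn = O(n^{-1/2})$, even though the stated hypothesis only bounds $\gamma'$ from below.
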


 We lay out the complete proofs of Corollary \ref{cor:1} and Corollary \ref{cor:2} in the supplementary material (\ref{col1_proof}) and (\ref{col2_proof}), respectively.

\subsection{Computational Complexity and Other Competing Methods}
We compare the efficiency of our algorithm with other popular algorithms such as Convex Clustering ({\cite{chi_splitting_2015}}), Robust Continuous Clustering ({\cite{shah_robust_2017}}), and Robust Convex Clustering ({\cite{wang_robust_2016}}). For a detailed explanation, refer to supplementary material (\ref{time_complexity}).  

\begin{table}[htb!]
\centering
\label{tab:complexity_comparison_table}
\begin{tabular}{lc}
\textbf{Algorithm} & \textbf{Complexity} \\
\hline
COMET  & ${\mathcal{O}(Nnkd)}$  \\
Convex-Clustering             & ${\mathcal{O}(N(n^{2}d + d \epsilon))}$ \\
Robust Continuous Clustering  & ${\mathcal{O}(N(n^{2}d + nkd)}$  \\
Robust Convex Clustering      & ${\mathcal{O}(Nnkd)}$  \\
\hline
\end{tabular}
\caption{Comparison of Runtime Complexity with other SOTA methods}
\end{table}

\noindent From the above table, it is clear that COMET is better or at least on par in terms of computational cost with recent and most widely used robust clustering algorithms.

\section{Experiments and Results} \label{exp_result}

In this section, we demonstrate the superiority of our proposed algorithm, COMET, over different variants of existing clustering algorithms, including both real and simulated datasets. The description of the datasets is given in the supplementary material (\ref{data_description}). For simulated datasets, the generation procedure is described later.

\subsection{Algorithms under consideration}

We consider the following well-known clustering algorithms to assess the effectiveness of COMET: $k-$means (KM){\cite{hartigan_algorithm_1979}}, Convex Clustering (CC) {\cite{chi_splitting_2015}}, MoM $k-$means (MKM) {\cite{paul_uniform_2021}}, Robust Convex Clustering (RConv) {\cite{wang_robust_2016}}, Robust Continuous Clustering (RCC) {\cite{shah_robust_2017}} and Robust Bregman $k-$means (RBKM) {\cite{brecheteau_robust_2021}}.

\subsection{Performance Measures}

For evaluating our proposed COMET algorithm against competing methods, we adopt the following metrics and resources:
\begin{itemize}
    \item \textbf{Evaluation Metrics:} Since ground-truth cluster labels are available for all real and simulated datasets, we evaluate clustering performance using 
    \begin{itemize}
        \item \textbf{Adjusted Rand Index (ARI)}
        \item \textbf{Adjusted Mutual Information (AMI)}
    \end{itemize}
    Both metrics provide robust comparisons across algorithms.
    
    \item \textbf{Estimated Number of Clusters:} We also report the average number of clusters estimated by each algorithm to further assess performance.
\end{itemize}


\subsection{Experimental Set-up} \label{exp_setup}

We apply all the selected algorithms on the datasets listed in the supplementary material (\ref{data_description}). Our main goal is to make a proper comparison of robustness of these algorithms to the presence of noise and outliers in the data. We artificially add different levels of noise and outliers to the datasets under study and record the performances of the algorithms.\\
To add noise of level $p\%$ to a dataset, we first consider the smallest axis-parallel hypercube containing the whole original dataset. Then, we simulate $\lfloor \frac{np}{100} \rfloor$ points uniformly from the hypercube and add them to the original dataset, labeled as ``noise''. All the algorithms are run on this modified dataset, and the ARI/AMI is calculated based on the obtained cluster labels of the original data points only. We vary $p$ to observe the change in performance of the algorithms with the introduction of noise. $k-$means, MoM $k-$means and Robust Bregman $k-$means require the exact number of clusters to be given as input, but that gives these algorithms an unfair advantage considering Convex Clustering, Robust Convex Clustering, Robust Continuous Clustering as well as COMET determine the number clusters automatically. Hence, to ensure a fair comparison, we used \textit{Gapstat} ({\cite{tibshirani_estimating_2001}}) in those three algorithms to get an estimate of the number of clusters from the data itself and used that value for the clustering. We run all algorithms according to the recommended specification of hyperparameters or tune them to achieve maximum ARI. $k-$means, MoM $k-$means, and RBKM are run till there is no further update in the cluster assignment matrix. Convex Clustering and Robust Convex Clustering were run on each dataset after tuning its hyperparameters for 150 epochs. RCC is run according to the hyperparameter recommendations and termination condition specified in \cite{shah_robust_2017}. $k-$means, MoM $k-$means and Robust Bregman $k-$means are dependent on the choice of initial centroids, each of them were run 25 times for every noise level and the mean performance is reported with their standard deviation. The random noise was added to the data using numpy.random.default\_rng(0) in numpy library of python $3$ (ipykernel).

We perform the experiments for both generated and real-life datasets. In the next section we will focus on the results for real-life datasets. \textbf{For the detailed study on generated datasets refer to the supplementary material (\ref{app:synthetic_data_study})}

\subsection{Real-Life Datasets} \label{rl_data}

Here we show the clustering results of our algorithm COMET and other selected algorithms on some of the real-life datasets with $10\%$ noise. For the results on other datasets refer to the supplementary material (\ref{app:real_life_full_study}). Here, \text{$k^\ast$} refers to an estimated number of clusters. The actual number of clusters is indicated as $k$. Here the standard deviation is that of the performance measure, not of the mean statistic.

\begin{table*}[tb]
\centering

\begin{tabular}{l>{\centering\arraybackslash}p{.65cm}>{\centering\arraybackslash}p{1.5cm}>{\centering\arraybackslash}p{1.7cm}>{\centering\arraybackslash}p{1.7cm}>{\centering\arraybackslash}p{1.7cm}>{\centering\arraybackslash}p{1.7cm}>{\centering\arraybackslash}p{1.7cm}>{\centering\arraybackslash}p{1.7cm}}
\hline
\textbf{Dataset} & \textbf{Index} & \textbf{KM} & \textbf{MKM} & \textbf{CC} & \textbf{RCC} & \textbf{RConv} & \textbf{RBKM} & \textbf{COMET}\\
\hline

\multirow{3}{6em}{Newthyroid ($k$ = 3)}
&\text{$k^\ast$}
&3.08$\pm$1.28 
&2.94$\pm$1.38
&14.14$\pm$1.23 
&212.13$\pm$3.36 
&3.79$\pm$0.58
&2.00$\pm$0.00 
&4.14$\pm$0.36 \\
&\text{ARI}
&0.34$\pm$0.21$^\dagger$
&0.40$\pm$0.26$^\dagger$
&0.69$\pm$0.04$^\dagger$
&0.00$\pm$0.00$^\dagger$
&0.81$\pm$0.21$^\dagger$
&0.11$\pm$0.03$^\dagger$
&\textbf{0.97$\pm$0.01} \\
&\text{AMI}
&0.34$\pm$0.19$^\dagger$
&0.39$\pm$0.25$^\dagger$
&0.52$\pm$0.03$^\dagger$
&0.003$\pm$0.004$^\dagger$
&0.77$\pm$0.16$^\dagger$
&0.08$\pm$0.03$^\dagger$
&\textbf{0.90$\pm$0.02} \\

\hline

\multirow{3}{6em}{Wisconsin ($k$ = 2)}
&\text{$k^\ast$}
&2.25$\pm$0.63 
&2.00$\pm$0.82
&15$\pm$1.86 
& 477$\pm$9.06
& 2.00$\pm$1.04
&2.00$\pm$0.00 
&3.00$\pm$0.00 \\
&\text{ARI}
&0.52$\pm$0.35$^\dagger$ 
&0.47$\pm$0.39$^\dagger$ 
&0.81$\pm$0.01$^\dagger$ 
&0.01$\pm$0.00$^\dagger$
&0.85$\pm$0.03$^\sim$
&0.15$\pm$0.06$^\dagger$ 
&\textbf{0.87$\pm$0.01} \\
&\text{AMI}
&0.48$\pm$0.31$^\dagger$
&0.41$\pm$0.34$^\dagger$
&0.67$\pm$0.01$^\dagger$
&0.07$\pm$0.003$^\dagger$
&0.75$\pm$0.03$^\dagger$
&0.19$\pm$0.05$^\dagger$
&\textbf{0.76$\pm$0.01} \\

\hline

\multirow{3}{6em}{Wine ($k$ = 3)}
&\text{$k^\ast$}
&3.14$\pm$1.18
&3.17$\pm$1.35
&25.29$\pm$2.16 
&178$\pm$0.00 
&2.43$\pm$0.51
&2.00$\pm$0.00 
&4.64$\pm$0.84 \\
&\text{ARI}
&0.66$\pm$0.31$^\sim$ 
&0.59$\pm$0.29$^\dagger$ 
&0.59$\pm$0.15$^\dagger$ 
&0.0$\pm$0.0$^\dagger$
&0.22$\pm$0.28$^\dagger$
&0.01$\pm$0.02$^\dagger$ 
&\textbf{0.79$\pm$0.15} \\
&\text{AMI}
&0.67$\pm$0.29$^\sim$ 
&0.61$\pm$0.28$^\dagger$ 
&0.59$\pm$0.10$^\dagger$ 
&0.00$\pm$0.00$^\dagger$ 
&0.32$\pm$0.31$^\dagger$
&0.04$\pm$0.03$^\dagger$ 
&\textbf{0.80$\pm$0.09} \\

\hline

\multirow{3}{6em}{Dermatology ($k$ = 6)}
&\text{$k^\ast$}
&5.16$\pm$1.93 
&4.77$\pm$2.09
&4.00$\pm$0.00 
&358$\pm$0.00 
&5.00$\pm$0.00
& 2.00$\pm$0.00
&5.85$\pm$0.53 \\
&\text{ARI}
&0.61$\pm$0.17$^\dagger$
&0.56$\pm$0.17$^\dagger$
&0.21$\pm$0.00$^\dagger$
&0.00$\pm$0.00$^\dagger$
&0.66$\pm$0.01$^\dagger$
&0.004$\pm$0.02$^\dagger$
&\textbf{0.81$\pm$0.06} \\
&\text{AMI}
&0.78$\pm$0.10$^\dagger$ 
&0.73$\pm$0.13$^\dagger$ 
&0.44$\pm$0.00$^\dagger$ 
&0.00$\pm$0.00$^\dagger$ 
&0.79$\pm$0.01$^\dagger$
&0.04$\pm$0.04$^\dagger$
&\textbf{0.86$\pm$0.04} \\

\hline

\multirow{3}{6em}{Lung-Discrete ($k$ = 7)}
&\text{$k^\ast$}
&5.91$\pm$1.57 
&5.23$\pm$1.39
&2.36$\pm$0.63 
&14.9$\pm$16.8 
&9.79$\pm$0.43
&2$\pm$0.00
&9.21$\pm$0.80 \\
&\text{ARI}
&0.44$\pm$0.09$^\dagger$ 
&0.50$\pm$0.10$^\dagger$ 
&0.07$\pm$0.03$^\dagger$ 
&0.41$\pm$0.12$^\dagger$ 
&0.39$\pm$0.05$^\dagger$
&0.01$\pm$0.01$^\dagger$
&\textbf{0.71$\pm$0.02} \\
&\text{AMI}
&0.53$\pm$0.07$^\dagger$ 
&0.58$\pm$0.08$^\dagger$ 
&0.20$\pm$0.07$^\dagger$
&0.51$\pm$0.15$^\dagger$ 
&0.51$\pm$0.04$^\dagger$
&0.07$\pm$0.04$^\dagger$
&\textbf{0.69$\pm$0.01} \\

\hline

\multirow{3}{6em}{ORLRaws10p ($k$ = 10)}
&\text{$k^\ast$}
&4.85$\pm$1.83 
&4.89$\pm$1.72
&42$\pm$0.00 
&100$\pm$0.00 
&16$\pm$0.00
&2$\pm$0.00
&14$\pm$0.00 \\
&\text{ARI}
&0.33$\pm$0.11$^\dagger$ 
&0.33$\pm$0.10$^\dagger$ 
&0.53$\pm$0.00$^\dagger$ 
&0.00$\pm$0.00$^\dagger$ 
& 0.54$\pm$0.002$^\dagger$
& 0.02$\pm$0.01$^\dagger$
&\textbf{0.73$\pm$0.00} \\
&\text{AMI}
&0.58$\pm$0.11$^\dagger$ 
&0.61$\pm$0.09$^\dagger$ 
&0.61$\pm$0.00$^\dagger$ 
&0.00$\pm$0.00$^\dagger$ 
&0.69$\pm$0.001$^\dagger$
&0.11$\pm$0.03$^\dagger$
&\textbf{0.81$\pm$0.00} \\

\hline

\end{tabular}
\caption{Results for Real-life Datasets}

\label{tab:Real_data_partial}
{\raggedright \scriptsize $^\dagger$ : significantly different from the best performing algorithm,
$^\sim$ : statistically similar to the best performing algorithm .}
\end{table*}

\subsubsection{Discussion}

Table \ref{tab:Real_data_partial} shows that COMET outperforms other algorithms, achieving nearly accurate cluster numbers with low standard deviation across most datasets. However, in datasets like ORLRaws10P (Table \ref{tab:Real_data_partial}), Brain, and Wisconsin (present in the supplementary material (\ref{app:real_life_full_study})), the detected cluster size slightly deviates from the actual value, likely due to limitations in the $k-$NN graph structure. This issue is more pronounced in other algorithms. Despite this, COMET still provides better clustering patterns, with higher ARI and AMI than the others.

\subsection{Significance of Our Results: Wilcoxon-Rank Sum Test} \label{wilcox_test}

For various datasets, we want to test if the ARI and AMI produced by our algorithm are ``significantly higher'' than other selected clustering algorithms. We use Wilcoxon-Rank Sum test for this purpose. Refer to the supplementary material (\ref{app:p-value_full_table}) for detailed discussion related to this.

\subsection{Case Study on Brain dataset}

We evaluate our algorithm's performance on the Brain dataset, a Microarray dataset with 42 instances (brain tumor patients) and 5597 features. The dataset includes 5 categories: 10 medulloblastomas, 10 malignant gliomas, 10 AT/RT, 4 normal cerebellums, and 8 supratentorial PNETs, as described in \cite{pomeroy_prediction_2002}.

\begin{table*}[bh]
\centering
\small

\begin{tabular}{>{\centering\arraybackslash}p{1cm}>{\centering\arraybackslash}p{1cm}>{\centering\arraybackslash}p{1.7cm}>{\centering\arraybackslash}p{1.7cm}>{\centering\arraybackslash}p{1.7cm}>{\centering\arraybackslash}p{1.7cm}>{\centering\arraybackslash}p{1.7cm}>{\centering\arraybackslash}p{1.7cm}>{\centering\arraybackslash}p{1.7cm}}
\hline
\textbf{Index} & \textbf{Noise($\%$)} & \textbf{KM} & \textbf{MKM} & \textbf{CC} & \textbf{RCC} & \textbf{RConv} & \textbf{RBKM} & \textbf{COMET}\\

\hline
\multirow{5}{7em}{ARI}
& 0& 0.28$\pm$0.10$^\dagger$& 0.23$\pm$0.11$^\dagger$& 0.64$\pm$0.00$^\dagger$& 0.00$\pm$0.00$^\dagger$& 0.56$\pm$0.00$^\dagger$& 0.01$\pm$0.01$^\dagger$& \textbf{0.65$\pm$0.00}\\
& 5& 0.31$\pm$0.13$^\dagger$& 0.31$\pm$0.13$^\dagger$& 0.64$\pm$0.00$^\dagger$& 0.00$\pm$0.00$^\dagger$& 0.56$\pm$0.01$^\dagger$& 0.01$\pm$0.01$^\dagger$& \textbf{0.66$\pm$0.00}\\
& 10& 0.26$\pm$0.10$^\dagger$& 0.26$\pm$0.10$^\dagger$& 0.64$\pm$0.02$^\sim$& 0.00$\pm$0.00$^\dagger$& 0.56$\pm$0.06$^\dagger$& 0.016$\pm$0.02$^\dagger$& \textbf{0.66$\pm$0.03}\\
& 15&  0.22$\pm$0.09$^\dagger$&  0.10$\pm$0.08$^\dagger$& 0.63$\pm$0.02$^\sim$& 0.00$\pm$0.00$^\dagger$& 0.55$\pm$0.06$^\dagger$& 0.02$\pm$0.02$^\dagger$& \textbf{0.66$\pm$0.03}\\
& 20& 0.19$\pm$0.11$^\dagger$& 0.08$\pm$0.07$^\dagger$& 0.63$\pm$0.04$^\dagger$& 0.00$\pm$0.00$^\dagger$& 0.63$\pm$0.03$^\dagger$& 0.02$\pm$0.02$^\dagger$& \textbf{0.65$\pm$0.02}\\
\hline
\multirow{5}{7em}{AMI}
& 0& 0.35$\pm$0.10$^\dagger$& 0.32$\pm$0.10$^\dagger$& 0.62$\pm$0.00$^\dagger$& 0.00$\pm$0.00$^\dagger$& 0.62$\pm$0.00$^\dagger$& 0.017$\pm$0.01$^\dagger$& \textbf{0.67$\pm$0.00}\\
& 5& 0.38$\pm$0.13$^\dagger$& 0.28$\pm$0.14$^\dagger$& 0.62$\pm$0.00$^\dagger$& 0.00$\pm$0.00$^\dagger$& 0.62$\pm$0.01$^\dagger$& 0.02$\pm$0.02$^\dagger$& \textbf{0.72$\pm$0.00}\\
& 10& 0.33$\pm$0.10$^\dagger$& 0.27$\pm$0.11$^\dagger$& 0.62$\pm$0.03$^\dagger$& 0.00$\pm$0.00$^\dagger$& 0.62$\pm$0.05$^\dagger$& 0.03$\pm$0.04$^\dagger$& \textbf{0.72$\pm$0.03}\\
& 15& 0.29$\pm$0.11$^\dagger$& 0.18$\pm$0.12$^\dagger$& 0.62$\pm$0.01$^\dagger$& 0.00$\pm$0.00$^\dagger$& 0.62$\pm$0.04$^\dagger$& 0.03$\pm$0.04$^\dagger$& \textbf{0.72$\pm$0.02}\\
& 20& 0.27$\pm$0.13$^\dagger$& 0.15$\pm$0.12$^\dagger$& 0.62$\pm$0.01$^\dagger$& 0.00$\pm$0.00$^\dagger$& 0.62$\pm$0.05$^\dagger$& 0.03$\pm$0.05$^\dagger$& \textbf{0.72$\pm$0.03}\\
\hline
\multirow{5}{7em}{$k^\ast$}
& 0& 5$\pm$1.92& 5$\pm$1.50& 18$\pm$0.00& 42$\pm$0.00& 5$\pm$0.00& 2$\pm$0.00& \textbf{5$\pm$0.00}\\
& 5& 5$\pm$1.85& 5$\pm$1.46& 18$\pm$0.00& 42$\pm$0.00& 5$\pm$0.00& 2$\pm$0.00& \textbf{4$\pm$0.00}\\
& 10& 5$\pm$1.92&5$\pm$1.49& 18$\pm$0.00&42$\pm$0.00& 5$\pm$0.36& 2$\pm$0.50& \textbf{4$\pm$0.00}\\
& 15& 4$\pm$1.90& 4$\pm$1.41& 18$\pm$0.00& 42$\pm$0.00& 5.1$\pm$0.22& 2$\pm$0.50& \textbf{4$\pm$0.00}\\
& 20&3$\pm$1.45& 3$\pm$1.42& 18$\pm$0.00& 42$\pm$0.00& 18$\pm$0.52& 2$\pm$0.5& \textbf{4$\pm$0.00}\\
\hline
\end{tabular}\caption{Performance of Different Algorithms on \textbf{Brain} on Different Noise Levels}
\label{tab:table_brain}
{\raggedright \scriptsize $^\dagger$ : significantly different from the best performing algorithm,
$^\sim$ : statistically similar to the best performing algorithm .}
\end{table*}

We compare the algorithms under varying noise levels (0$\%$, 5$\%$, 10$\%$, 15$\%$, 20$\%$) using the procedure outlined earlier, without applying any feature reduction methods like PCA. The results, shown in Table \ref{tab:table_brain} and Figure \ref{fig:line_plot_brain}, show that COMET consistently outperforms all other algorithms, maintaining an ARI above 0.6 across all noise levels. Here, the standard deviation is that of the performance measure, not of the mean statistic. Convex clustering and Robust Convex clustering perform well but are outpaced by COMET. $k-$means and MoM $k-$means show poor results, worsening with increased noise. RCC and RBKM perform very poorly, as reflected in the results. The \textit{t}-SNE plots for the clustering results for various algorithms on this dataset are provided in the supplementary material (\ref{app:TSNE_plots}).

Also, refer to the supplementary material (\ref{app:case_study_wisconsin}) for a case study on the Wisconsin dataset.

\begin{figure}[htb!]
    \centering
    \includegraphics[width=0.64\linewidth]{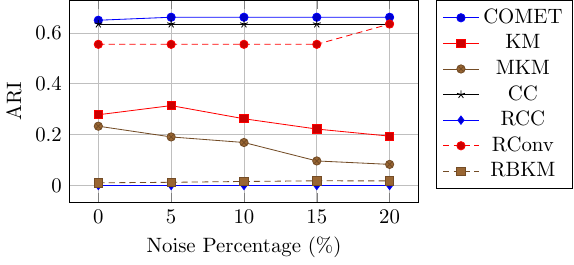}
    \caption{Line plot for performance of different algorithms on \textbf{Brain} dataset}
    \label{fig:line_plot_brain}
\end{figure}

\section{Ablation Study} \label{ablation_main}

A sensitivity analysis of the performance of our algorithm for the hyperparameter $\gamma$ is illustrated in the supplementary material (\ref{gamma_app}).
For a detailed illustration of the tuning process of our hyperparameters on the \textbf{Wisconsin Breast Cancer} dataset, refer to the supplementary material (\ref{abl_wisc_app}). Another ablation study on the \textbf{NewThyroid} Dataset is discussed in the supplementary material (\ref{abl_nwthy_app}).

\section{Limitations} \label{limitation}

While we tested our algorithm on specific distributions of noise for clustering synthetic data, a systematic method to choose an appropriate cost function needs to be developed. However, it is still obscure to us how the cost function can be modified in case the noise follows some definite pattern and is not randomly distributed. Also, in our proof of theoretical consistency \ref{thm:important}, we assumed $d = o(n)$. However, modifications to the clustering procedure need to be done for higher-dimensional datasets. Overall, given the flexibility of our clustering framework, other possibilities can be explored by incorporating different methods for different steps. One may further explore to relax assumptions on the errors \ref{thm:important} for consistency in a more general settings.

\section{Conclusion}
In this study, we introduce a robust and interpretable clustering framework designed for multivariate datasets affected by noise. Our method reformulates the underlying cost function to reduce the adverse effects of random noise that often undermine conventional convex clustering techniques. We provide rigorous theoretical guarantees by establishing both the consistency and the convergence rate of the proposed estimators. Furthermore, extensive empirical evaluations—including experiments on diverse real-world datasets, detailed case studies, and ablation studies—demonstrate the practical effectiveness and reliability of our approach.

\bibliographystyle{IEEEtran}

\bibliography{ref}

\newpage
\appendices

\section{Supplementary Material}

\subsection{ADAM update rule for optimization} \label{adam_rule}
\label{ADAM_update_rule}
For a chosen $\boldsymbol{\beta}_1$ and $\boldsymbol{\beta}_2$ the update rule to be followed is as follows: \\
$\displaystyle \hat{\mathbf{m}}_i^{(t)} = \frac{\mathbf{m}_i^{(t)}}{1 - \boldsymbol{\beta}_1^t}$ , \\
$\displaystyle \hat{\mathbf{v}}_i^{(t)} = \frac{\mathbf{v}_i^{(t)}}{1 - \boldsymbol{\beta}_2^t}$ , \\
$\displaystyle \mathbf{u}_i^{(t+1)} = \mathbf{u}_i^{(t)} - \frac{\alpha \hat{\mathbf{m}}_i^{(t)}}{\sqrt{\hat{\mathbf{v}}_i^{(t)} + \epsilon}}$.
\setcounter{theorem}{0}
\setcounter{corollary}{0}

\subsection{Proof of Theorem 1} \label{thm_proof}

\begin{theorem}
    Suppose \(\boldsymbol{x = u + \epsilon}\), where \(\boldsymbol{\epsilon} \in \mathbb{R}^{nd}\) is a vector of independent bounded random variables, with mean 0, covariance matrix $\sigma^2 \boldsymbol{I}$ and \(|\epsilon_i| \leq M\), for all \(i = 1, \ldots, nd\). Suppose that \(\boldsymbol{\hat{u}}\) and \({\hat{\boldsymbol{I}}_{B_{l_t}}}\) are obtained from minimizing equation ($10$), then if $\gamma^{\prime} \geq \frac{M}{ndb\sqrt{n}}$ the following holds with probability at least 1 - $\delta$:
    \[
        \begin{split}
            \frac{1}{2ndb}&\|\boldsymbol{\hat{u} -  u}\|_{\hat{\boldsymbol{I}}_{B_{l_t}}}^2 \leq  \quad M^2\left(\frac{\sqrt{db/n}+d}{\sqrt{ndb}} \right.
            \left. + \quad c\frac{1}{b\sqrt{nd}}\sqrt{\log\left(\frac{1}{\delta}\right)}  + c\frac{\log\left(\frac{1}{\delta}\right)}{ndb}\right) \notag\\
            + \quad \gamma^{\prime}& \frac{\left|\mathcal{E}\right|}{4} + \gamma^{\prime}\left[ \sum_{(i, j) \in \mathcal{E}} \left\|\boldsymbol{D}_{\mathcal{C}(i, j)}\boldsymbol{u}\right\|_2 + \sum_{(i, j) \in \mathcal{E}} \left\|\boldsymbol{D}_{\mathcal{C}(i, j)}\boldsymbol{u}\right\|_2^2\right]
        \end{split}
    \]
\end{theorem}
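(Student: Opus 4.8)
The plan is to run a \emph{basic-inequality} (oracle) argument adapted to the median-of-means block selection, absorbing the randomness through a \emph{uniform} Hanson--Wright bound. Write $\boldsymbol{\epsilon}=\boldsymbol{x}-\boldsymbol{u}$, let $\hat{\boldsymbol{I}}=\hat{\boldsymbol{I}}_{B_{l_t}}$ be the block selected by the minimizer $\hat{\boldsymbol{u}}$, and let $\boldsymbol{I}^{\ast}$ be the median block of the pure-noise sums $\tfrac{1}{2b}\|\boldsymbol{\epsilon}\|_{I_{B_l}}^2$. First I would invoke optimality of $\hat{\boldsymbol{u}}$ in \eqref{eq:cost_fn_matrix} (with the weights replaced by their indicators and coupling $\gamma'$): since $\boldsymbol{u}$ is feasible, the objective at $\hat{\boldsymbol{u}}$ is no larger than at $\boldsymbol{u}$. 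Because the MoM term evaluated at $\hat{\boldsymbol{u}}$ equals $\tfrac{1}{2b}\|\boldsymbol{x}-\hat{\boldsymbol{u}}\|_{\hat{\boldsymbol{I}}}^2$ and at $\boldsymbol{u}$ equals $\tfrac{1}{2b}\|\boldsymbol{\epsilon}\|_{\boldsymbol{I}^{\ast}}^2$, this gives
\[
\frac{1}{2b}\|\boldsymbol{x}-\hat{\boldsymbol{u}}\|_{\hat{\boldsymbol{I}}}^2 + \frac{\gamma'}{2}\sum_{(i,j)\in\mathcal{E}}\|\boldsymbol{D}_{\mathcal{C}(i,j)}\hat{\boldsymbol{u}}\|_2^2 \;\le\; \frac{1}{2b}\|\boldsymbol{\epsilon}\|_{\boldsymbol{I}^{\ast}}^2 + \frac{\gamma'}{2}\sum_{(i,j)\in\mathcal{E}}\|\boldsymbol{D}_{\mathcal{C}(i,j)}\boldsymbol{u}\|_2^2 .
\]

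Next I would expand $\|\boldsymbol{x}-\hat{\boldsymbol{u}}\|_{\hat{\boldsymbol{I}}}^2 = \|\boldsymbol{\epsilon}\|_{\hat{\boldsymbol{I}}}^2 - 2\,\boldsymbol{\epsilon}^{\top}\hat{\boldsymbol{I}}(\hat{\boldsymbol{u}}-\boldsymbol{u}) + \|\hat{\boldsymbol{u}}-\boldsymbol{u}\|_{\hat{\boldsymbol{I}}}^2$ and move everything except $\|\hat{\boldsymbol{u}}-\boldsymbol{u}\|_{\hat{\boldsymbol{I}}}^2$ to the right. This isolates the target $\tfrac{1}{2ndb}\|\hat{\boldsymbol{u}}-\boldsymbol{u}\|_{\hat{\boldsymbol{I}}}^2$ and leaves three kinds of terms: (i) the noise quadratic forms $\|\boldsymbol{\epsilon}\|_{\boldsymbol{I}^{\ast}}^2$ and $\|\boldsymbol{\epsilon}\|_{\hat{\boldsymbol{I}}}^2$; (ii) the linear cross term $\boldsymbol{\epsilon}^{\top}\hat{\boldsymbol{I}}(\hat{\boldsymbol{u}}-\boldsymbol{u})$; and (iii) the regularization gap $\sum\|\boldsymbol{D}\boldsymbol{u}\|_2^2-\sum\|\boldsymbol{D}\hat{\boldsymbol{u}}\|_2^2$. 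For (ii) I would use Cauchy--Schwarz in the $\hat{\boldsymbol{I}}$-seminorm and then Young's inequality, so that a fraction of $\|\hat{\boldsymbol{u}}-\boldsymbol{u}\|_{\hat{\boldsymbol{I}}}^2$ is reabsorbed into the left-hand side while only $\|\boldsymbol{\epsilon}\|_{\hat{\boldsymbol{I}}}^2$ is left over. For (iii) I would write $\boldsymbol{D}\hat{\boldsymbol{u}}=\boldsymbol{D}\boldsymbol{u}+\boldsymbol{D}(\hat{\boldsymbol{u}}-\boldsymbol{u})$ and bound the cross inner products and the $\boldsymbol{D}$-image of the estimation error per edge by the triangle inequality together with the elementary estimate $t\le t^2+\tfrac14$; this is what yields the constant $\gamma'|\mathcal{E}|/4$ and the mixed first- and second-power sums $\sum\|\boldsymbol{D}\boldsymbol{u}\|_2$ and $\sum\|\boldsymbol{D}\boldsymbol{u}\|_2^2$. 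The exact constant bookkeeping here is routine and I would not grind through it.

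The analytic heart is (i). Each $I_{B_l}$ is a diagonal projection onto the $bd$ coordinates of block $l$, so $\boldsymbol{\epsilon}^{\top}I_{B_l}\boldsymbol{\epsilon}$ is a quadratic form with mean $bd\sigma^2\le bdM^2$ (using $\sigma^2\le M^2$ from boundedness), Frobenius norm $\sqrt{bd}$, and operator norm $1$. The mean contributes the leading $M^2(\sqrt{db}+d\sqrt{n})/(n\sqrt{db})$ term after normalizing by $2ndb$, while the Hanson--Wright deviation, of the order $M^2\sqrt{bd\log(1/\delta)}+M^2\log(1/\delta)$, contributes the $\tfrac{c}{b\sqrt{nd}}\sqrt{\log(1/\delta)}$ and $\tfrac{c\log(1/\delta)}{ndb}$ terms. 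Finally I would collect everything, and use the hypothesis $\gamma'\ge M/(ndb\sqrt{n})$ to dominate the residual noise scale by the regularization scale, reading off \eqref{eq:bound}.

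The main obstacle is that $\hat{\boldsymbol{I}}$ is \emph{data-dependent}: the median block is a function of $\hat{\boldsymbol{u}}$, hence of $\boldsymbol{\epsilon}$, so $\boldsymbol{\epsilon}$ and $\hat{\boldsymbol{I}}$ are statistically coupled and a single-matrix Hanson--Wright bound does not apply to $\boldsymbol{\epsilon}^{\top}\hat{\boldsymbol{I}}\boldsymbol{\epsilon}$. This is exactly where the uniform Hanson--Wright inequality of \cite{pmlr-v125-bousquet20b} is needed: I would control $\sup_{l}\bigl|\boldsymbol{\epsilon}^{\top}I_{B_l}\boldsymbol{\epsilon}-bd\sigma^2\bigr|$ over all $l=\mathcal{O}(n)$ candidate blocks at once, which covers the random choices $\hat{\boldsymbol{I}}$ and $\boldsymbol{I}^{\ast}$ while paying only a mild logarithmic/$\sqrt{n}$ price already reflected in the stated constants. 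A secondary difficulty is the mismatch between the error seminorm $\|\cdot\|_{\hat{\boldsymbol{I}}}$, which sees only the $bd$ coordinates of one block, and the regularizer, which couples all $n$ agents; rather than trying to absorb the regularization gap, I would keep it as the explicit $\gamma'$-weighted bias terms, which is precisely why those sums survive in the final bound.
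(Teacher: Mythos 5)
Your proposal shares the paper's outer skeleton (basic inequality from optimality, a uniform Hanson--Wright bound to handle the data-dependent block, the elementary estimate $t-t^2\le\tfrac14$, and the threshold condition on $\gamma'$), but it diverges at the decisive step --- the cross term $\boldsymbol{\epsilon}^{\top}\hat{\boldsymbol{I}}_{B_{l_t}}(\hat{\boldsymbol{u}}-\boldsymbol{u})$ --- and that divergence is where the stated inequality actually gets made. The paper does \emph{not} absorb this term by Young's inequality. It reparametrizes through the SVD $\boldsymbol{D}=\boldsymbol{U\Lambda V_{\beta}^{\top}}$, notes that the unpenalized coordinate satisfies $\hat{\boldsymbol{\alpha}}=\boldsymbol{\alpha}+\boldsymbol{V_{\alpha}}^{\top}\boldsymbol{\epsilon}$, and hence splits $\hat{\boldsymbol{u}}-\boldsymbol{u}=\boldsymbol{V_{\alpha}}\boldsymbol{V_{\alpha}}^{\top}\boldsymbol{\epsilon}+\boldsymbol{V_{\beta}}(\hat{\boldsymbol{\beta}}-\boldsymbol{\beta})$. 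The $\boldsymbol{V_{\alpha}}$ part becomes the quadratic form $\boldsymbol{\epsilon}^{\top}\hat{\boldsymbol{I}}_{B_{l_t}}\boldsymbol{V_{\alpha}}\boldsymbol{V_{\alpha}}^{\top}\boldsymbol{\epsilon}$, and it is the expected supremum of \emph{this} object (bounded by $M^2 d\sqrt{ndb}$ via the uniform Hanson--Wright inequality) that produces the dominant term $M^2 d\sqrt{n}/(n\sqrt{db})$ in the statement --- not the mean of the block noise sums, to which you attribute it: that mean is $bd\sigma^2\le bdM^2$, which after normalization by $2ndb$ gives only $M^2/(2n)$. The $\boldsymbol{V_{\beta}}$ part is handled edge by edge via Cauchy--Schwarz against the dual vector $(\boldsymbol{Z}_{\mathcal{C}(i,j)}^{-})^{\top}\boldsymbol{V_{\beta}}^{\top}\hat{\boldsymbol{I}}_{B_{l_t}}\boldsymbol{\epsilon}$, whose normalized norm is at most $M/(ndb\sqrt{n})\le\gamma'$; this is the \emph{only} place the hypothesis on $\gamma'$ enters, and it converts the cross term into $\gamma'\sum_{(i,j)\in\mathcal{E}}\|\boldsymbol{Z}_{\mathcal{C}(i,j)}(\hat{\boldsymbol{\beta}}-\boldsymbol{\beta})\|_2$, which after the triangle inequality and cancellation against the \emph{retained} penalty $-\gamma'\sum_{(i,j)\in\mathcal{E}}\|\boldsymbol{Z}_{\mathcal{C}(i,j)}\hat{\boldsymbol{\beta}}\|_2^2$ (via $t-t^2\le\tfrac14$) yields exactly $\gamma'|\mathcal{E}|/4+\gamma'\sum_{(i,j)\in\mathcal{E}}\|\boldsymbol{D}_{\mathcal{C}(i,j)}\boldsymbol{u}\|_2$. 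In your write-up the hypothesis $\gamma'\ge M/(ndb\sqrt{n})$ is never used in any concrete step.

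Concretely, two things in your mechanism fail. First, your step (iii) cannot generate the terms you credit to it: expanding $\boldsymbol{D}\hat{\boldsymbol{u}}=\boldsymbol{D}\boldsymbol{u}+\boldsymbol{D}(\hat{\boldsymbol{u}}-\boldsymbol{u})$ requires controlling $\|\boldsymbol{D}(\hat{\boldsymbol{u}}-\boldsymbol{u})\|_2$ per edge, and nothing controls it --- this is precisely the seminorm mismatch you yourself flag as insurmountable; your stated fallback (keep the regularization gap as bias) amounts to dropping $-\gamma'\sum_{(i,j)\in\mathcal{E}}\|\boldsymbol{D}_{\mathcal{C}(i,j)}\hat{\boldsymbol{u}}\|_2^2\le 0$, after which there is no surviving first-order $\|\boldsymbol{D}_{\mathcal{C}(i,j)}\hat{\boldsymbol{u}}\|_2$ term for the $\tfrac14$ trick to act on, so the terms $\gamma'|\mathcal{E}|/4$ and $\gamma'\sum_{(i,j)\in\mathcal{E}}\|\boldsymbol{D}_{\mathcal{C}(i,j)}\boldsymbol{u}\|_2$ have no source in your derivation. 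Second, Young absorption inherently inflates the penalty constant: absorbing $\eta\|\hat{\boldsymbol{u}}-\boldsymbol{u}\|_{\hat{\boldsymbol{I}}_{B_{l_t}}}^2$ into the left side leaves a factor $1/(1-\eta)>1$ multiplying $\gamma'\sum_{(i,j)\in\mathcal{E}}\|\boldsymbol{D}_{\mathcal{C}(i,j)}\boldsymbol{u}\|_2^2$, whereas the theorem asserts coefficient exactly $\gamma'$; since $\sum_{(i,j)\in\mathcal{E}}\|\boldsymbol{D}_{\mathcal{C}(i,j)}\boldsymbol{u}\|_2^2$ can dominate $\sum_{(i,j)\in\mathcal{E}}\|\boldsymbol{D}_{\mathcal{C}(i,j)}\boldsymbol{u}\|_2+|\mathcal{E}|/4$ when the edge differences are large, the excess cannot be hidden inside the other terms of the bound. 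What your route actually yields (using $\|\boldsymbol{\epsilon}\|_{\boldsymbol{I}}^2\le bdM^2$ deterministically for any block) is an inequality of the shape $\frac{1}{2ndb}\|\hat{\boldsymbol{u}}-\boldsymbol{u}\|_{\hat{\boldsymbol{I}}_{B_{l_t}}}^2\le \frac{2M^2}{n}+2\gamma'\sum_{(i,j)\in\mathcal{E}}\|\boldsymbol{D}_{\mathcal{C}(i,j)}\boldsymbol{u}\|_2^2$ --- a legitimate bound of similar flavor, tighter in the stochastic terms but looser in the penalty term, and in particular not the theorem as stated, nor obtainable by the accounting you describe.
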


\begin{proof}
    Let $\boldsymbol{D = U \Lambda V_{\beta}^{\top}}$ be the singular value decomposition (SVD) of $\boldsymbol{D}$, where $\boldsymbol{V_{\beta}} \in \mathbb{R}^{nd \times (n-1)d}$. We construct $\boldsymbol{V_{\alpha}} \in \mathbb{R}^{nd \times d}$ such that $\boldsymbol{V} = [\boldsymbol{V_{\alpha}}, \boldsymbol{V_{\beta}}]$ is an $nd \times nd$ orthonormal matrix. \\

    Next, define $\boldsymbol{\beta = V_{\beta}^{\top}u}$, $\boldsymbol{\alpha = V_{\alpha}^{\top}u}$ and $\gamma^{\prime} = \frac{\gamma}{2nd}$. Also $\boldsymbol{Z = U \Lambda}$ and $\boldsymbol{Z^{-}}$ is the left inverse of $\boldsymbol{Z}$. Thus, the optimization problem becomes:

\begin{align}
    \min_{\boldsymbol{\alpha}, \boldsymbol{\beta}, \boldsymbol{I}_{B_{l_{t}}}} & \frac{1}{2ndb} \left(\boldsymbol{x} - \boldsymbol{V_{\alpha}\alpha} - \boldsymbol{V_{\beta}\beta}\right)^{\top} \boldsymbol{I}_{B_{l_t}} \left(\boldsymbol{x} - \boldsymbol{V_{\alpha}\alpha} - \boldsymbol{V_{\beta}\beta}\right) + \gamma^{\prime} \sum_{(i, j) \in \mathcal{E}} \left\|\boldsymbol{Z}_{\mathcal{C}(i, j)}\boldsymbol{\beta}\right\|_2^2
\end{align}

    Now, let $\hat{\boldsymbol{\alpha}}$, $\hat{\boldsymbol{\beta}}$, and $\hat{\boldsymbol{I}}_{B_{l_t}}$ be the minimiser of the above cost function. Then, by definition,

    \begin{align*}
        \begin{split}
            & \frac{1}{2ndb} \left\|\boldsymbol{x} - \boldsymbol{V_{\alpha}}\hat{\boldsymbol{\alpha}} - \boldsymbol{V_{\beta}}\hat{\boldsymbol{\beta}}\right\|_{\hat{\boldsymbol{I}}_{B_{l_t}}}^2 + \gamma^{\prime} \sum_{(i, j) \in \mathcal{E}} \left\|\boldsymbol{Z}_{\mathcal{C}(i, j)}\hat{\boldsymbol{\beta}}\right\|_2^2 \\
            & \leq \frac{1}{2ndb} \left\|\boldsymbol{x} - \boldsymbol{V_{\alpha}\alpha} - \boldsymbol{V_{\beta}\beta}\right\|_{{\boldsymbol{I}}_{B_{l_t}}}^2 + \gamma^{\prime} \sum_{(i, j) \in \mathcal{E}} \left\|\boldsymbol{Z}_{\mathcal{C}(i, j)}\boldsymbol{\beta}\right\|_2^2\\
            & \leq \frac{1}{2ndb} \left\|\boldsymbol{x} - \boldsymbol{V_{\alpha}\alpha} - \boldsymbol{V_{\beta}\beta}\right\|_{\hat{{\boldsymbol{I}}}_{B_{l_t}}}^2 + \frac{1}{2ndb} \boldsymbol{\epsilon}^{\top}({\boldsymbol{I}}_{B_{l_t}} - \hat{{\boldsymbol{I}}}_{B_{l_t}})\boldsymbol{\epsilon} 
            + \gamma^{\prime} \sum_{(i, j) \in \mathcal{E}} \left\|\boldsymbol{Z}_{\mathcal{C}(i, j)}\boldsymbol{\beta}\right\|_2^2 \\
            & \leq \frac{1}{2ndb} \left\|\boldsymbol{x} - \boldsymbol{V_{\alpha}\alpha} - \boldsymbol{V_{\beta}\beta}\right\|_{\hat{{\boldsymbol{I}}}_{B_{l_t}}}^2 + \frac{1}{n} M^2  + \gamma^{\prime} \sum_{(i, j) \in \mathcal{E}} \left\|\boldsymbol{Z}_{\mathcal{C}(i, j)}\boldsymbol{\beta}\right\|_2^2
        \end{split}
    \end{align*}
    
    On further simplification, we get the following,

    \begin{align*}
        & \frac{1}{2ndb} \left\|\boldsymbol{V_{\alpha}}\left(\hat{\boldsymbol{\alpha}} - \boldsymbol{\alpha}\right) + \boldsymbol{V_{\beta}}\left(\hat{\boldsymbol{\beta}} - \boldsymbol{\beta}\right)\right\|_{\hat{\boldsymbol{I}}_{B_{l_t}}}^2  + \gamma^{\prime} \sum_{(i, j) \in \mathcal{E}} \left\|\boldsymbol{Z}_{\mathcal{C}(i, j)}\hat{\boldsymbol{\beta}}\right\|_2^2 \leq \frac{1}{ndb} G\left(\hat{\boldsymbol{\alpha}}, \hat{\boldsymbol{\beta}}, \hat{\boldsymbol{I}}_{B_{l_t}}\right) + \frac{1}{n} M^2  + \gamma^{\prime} \sum_{(i, j) \in \mathcal{E}} \left\|\boldsymbol{Z}_{\mathcal{C}(i, j)}\boldsymbol{\beta}\right\|_2^2\\
    \end{align*}
    
    where $G(\hat{\boldsymbol{\alpha}}, \hat{\boldsymbol{\beta}}, \hat{\boldsymbol{I}}_{B_{l_t}}) = \boldsymbol{\epsilon}^{\top} \hat{\boldsymbol{I}}_{B_{l_t}} (\boldsymbol{V_{\alpha}}(\hat{\boldsymbol{\alpha}} - \boldsymbol{\alpha}) + \boldsymbol{V_{\beta}}(\hat{\boldsymbol{\beta}} - \boldsymbol{\beta}))$. Since $\hat{\boldsymbol{\alpha}}$ is the minimiser, we can choose $\hat{\boldsymbol{\alpha}}$ such that $\boldsymbol{x} - \boldsymbol{V_{\alpha}}\hat{\boldsymbol{\alpha}} - \boldsymbol{V_{\beta}}\hat{\boldsymbol{\beta}} = 0$. Therefore, $\hat{\boldsymbol{\alpha}} = \boldsymbol{\alpha} + \boldsymbol{V_{\alpha}}^{\top}\boldsymbol{\epsilon}$. Now, we can bound,

    \begin{align*}
            & \frac{1}{ndb}\left|G(\hat{\boldsymbol{\alpha}}, \hat{\boldsymbol{\beta}}, \hat{\boldsymbol{I}}_{B_{l_t}})\right| \\ 
            & = \frac{1}{ndb} \left|\boldsymbol{\epsilon}^{\top} \hat{\boldsymbol{I}}_{B_{l_t}} \left( \boldsymbol{V_{\alpha}}\left(\hat{\boldsymbol{\alpha}} - \boldsymbol{\alpha}\right) +  \boldsymbol{V_{\beta}}\left(\hat{\boldsymbol{\beta}} - \boldsymbol{\beta}\right)\right)\right| \\
            & = \frac{1}{ndb} \left|\boldsymbol{\epsilon}^{\top} \hat{\boldsymbol{I}}_{B_{l_t}} \left( \boldsymbol{V_{\alpha}} \boldsymbol{V_{\alpha}}^{\top}\boldsymbol{\epsilon} + \boldsymbol{V_{\beta}}\left(\hat{\boldsymbol{\beta}} - \boldsymbol{\beta}\right)\right)\right| \\
            & \leq \frac{1}{ndb} \boldsymbol{\epsilon}^{\top} \hat{\boldsymbol{I}}_{B_{l_t}}  \boldsymbol{V_{\alpha}} \boldsymbol{V_{\alpha}}^{\top}\boldsymbol{\epsilon} + \frac{1}{ndb}\left|\boldsymbol{\epsilon}^{\top} \hat{\boldsymbol{I}}_{B_{l_t}}  \boldsymbol{V_{\beta}}     \left(\hat{\boldsymbol{\beta}} - \boldsymbol{\beta}\right)\right| \\   
            & = \frac{1}{ndb} \boldsymbol{\epsilon}^{\top} \hat{\boldsymbol{I}}_{B_{l_t}}  \boldsymbol{V_{\alpha}} \boldsymbol{V_{\alpha}}^{\top}\boldsymbol{\epsilon} + \frac{1}{ndb}\left|\boldsymbol{\epsilon}^{\top} \hat{\boldsymbol{I}}_{B_{l_t}}  \boldsymbol{V_{\beta}}  \boldsymbol{Z}^{-} \boldsymbol{Z} \left(\hat{\boldsymbol{\beta}} - \boldsymbol{\beta}\right)\right| \\
            & = \frac{1}{ndb} \boldsymbol{\epsilon}^{\top} \hat{\boldsymbol{I}}_{B_{l_t}}  \boldsymbol{V_{\alpha}} \boldsymbol{V_{\alpha}}      ^{\top}\boldsymbol{\epsilon}  + \frac{1}{ndb}\left|\sum_{(i, j) \in \mathcal{E}} \boldsymbol{\epsilon}^{\top} \hat{\boldsymbol{I}}_{B_{l_t}}  \boldsymbol{V_{\beta}}  \boldsymbol{Z}_{\mathcal{C}(i, j)}^{-} \boldsymbol{Z}_{\mathcal{C}(i, j)} \left(\hat{\boldsymbol{\beta}} - \boldsymbol{\beta}\right)\right| \\
            & \leq \frac{1}{ndb} \boldsymbol{\epsilon}^{\top} \hat{\boldsymbol{I}}_{B_{l_t}}  \boldsymbol{V_{\alpha}} \boldsymbol{V_{\alpha}}^{\top}\boldsymbol{\epsilon} + \frac{1}{ndb} \sum_{(i, j) \in \mathcal{E}} \left|\boldsymbol{\epsilon}^{\top} \hat{\boldsymbol{I}}_{B_{l_t}}  \boldsymbol{V_{\beta}}  \boldsymbol{Z}_{\mathcal{C}(i, j)}^{-} \boldsymbol{Z}_{\mathcal{C}(i, j)} \left(\hat{\boldsymbol{\beta}} - \boldsymbol{\beta}\right)\right| \\
            & \leq \frac{1}{ndb} \boldsymbol{\epsilon}^{\top} \hat{\boldsymbol{I}}_{B_{l_t}}  \boldsymbol{V_{\alpha}} \boldsymbol{V_{\alpha}}^{\top}\boldsymbol{\epsilon} + \frac{1}{ndb} \sum_{(i, j) \in \mathcal{E}} \left(\left\|( \boldsymbol{Z}_{\mathcal{C}(i, j)}^{-})^{\top} \boldsymbol{V_{\beta}}^{\top}\hat{\boldsymbol{I}}_{B_{l_t}}\boldsymbol{\epsilon}\right\|_2 \right. \left. \cdot \left\| \boldsymbol{Z}_{\mathcal{C}(i, j)} \left(\hat{\boldsymbol{\beta}} -      \boldsymbol{\beta}\right)\right\|_2 \right)\\
            & \leq \frac{1}{ndb} \boldsymbol{\epsilon}^{\top} \hat{\boldsymbol{I}}_{B_{l_t}}  \boldsymbol{V_{\alpha}} \boldsymbol{V_{\alpha}}^{\top}\boldsymbol{\epsilon} + \frac{1}{ndb} \max_{(i, j) \in \mathcal{E}} \left\|( \boldsymbol{Z}_{\mathcal{C}(i, j)}^{-})^{\top} \boldsymbol{V_{\beta}}^{\top}\hat{\boldsymbol{I}}_{B_{l_t}}\boldsymbol{\epsilon}\right\|_2 \cdot \sum_{(i, j) \in \mathcal{E}} \left\| \boldsymbol{Z}_{\mathcal{C}(i, j)} \left(\hat{\boldsymbol{\beta}} - \boldsymbol{\beta}\right)\right\|_2  
    \end{align*}
    
    Next, we derive high-probability bounds for the terms $\boldsymbol{\epsilon}^{\top} \hat{\boldsymbol{I}}_{B_{l_t}}  \boldsymbol{V_{\alpha}} \boldsymbol{V_{\alpha}}^{\top}\boldsymbol{\epsilon}$ and $\max_{(i, j) \in \mathcal{E}} \left\|( \boldsymbol{Z}_{\mathcal{C}(i, j)}^{-})^{\top} \boldsymbol{V_{\beta}}^{\top}\hat{\boldsymbol{I}}_{B_{l_t}}\boldsymbol{\epsilon}\right\|_2$. Now, using the Hanson Wright Inequality,

    \begin{align*}
            & \sup_{\boldsymbol{I}_{B_{l_t}} \in \mathcal{I}} \boldsymbol{\epsilon}^{\top} \hat{\boldsymbol{I}}_{B_{l_t}}  \boldsymbol{V_{\alpha}} \boldsymbol{V_{\alpha}}^{\top}\boldsymbol{\epsilon} \\
            & \leq \mathbb{E}\left[\sup_{\boldsymbol{I}_{B_{l_t}} \in \mathcal{I}} \boldsymbol{\epsilon}^{\top} \hat{\boldsymbol{I}}_{B_{l_t}}  \boldsymbol{V_{\alpha}} \boldsymbol{V_{\alpha}}^{\top}\boldsymbol{\epsilon}\right] + c\left(M\sqrt{r} \ \mathbb{E}\left[\sup_{\boldsymbol{I}_{B_{l_t}} \in \mathcal{I}} \left\|\hat{\boldsymbol{I}}_{B_{l_t}}  \boldsymbol{V_{\alpha}} \boldsymbol{V_{\alpha}}^{\top}\boldsymbol{\epsilon}\right\|_{sp}\right]\right) + c\left(rM^2 \sup_{\boldsymbol{I}_{B_{l_t}} \in \mathcal{I}} \left\|I_{B_{l_t}}\right\|\right)\\
            & \leq \mathbb{E}\left[\sup_{\boldsymbol{I}_{B_{l_t}} \in \mathcal{I}} \text{tr}\left(\boldsymbol{\epsilon}^{\top} \hat{\boldsymbol{I}}_{B_{l_t}}  \boldsymbol{V_{\alpha}} \boldsymbol{V_{\alpha}}^{\top}\boldsymbol{\epsilon}\right)\right] + c\left(M\sqrt{r} \ \mathbb{E}\left[\sup_{\boldsymbol{I}_{B_{l_t}} \in \mathcal{I}} \left\|\hat{\boldsymbol{I}}_{B_{l_t}}  \boldsymbol{V_{\alpha}} \boldsymbol{V_{\alpha}}^{\top}\right\|_{sp}\left\|\boldsymbol{\epsilon}\right\|_2\right]\right) + c\left(rM^2 \sup_{\boldsymbol{I}_{B_{l_t}} \in \mathcal{I}} 1\right)\\
            & = \mathbb{E}\left[\sup_{\boldsymbol{I}_{B_{l_t}} \in \mathcal{I}} \text{tr}\left(\hat{\boldsymbol{I}}_{B_{l_t}}  \boldsymbol{V_{\alpha}} \boldsymbol{V_{\alpha}}^{\top}\boldsymbol{\epsilon}\boldsymbol{\epsilon}^{\top}\right)\right] + c\left(M\sqrt{r} \sup_{\boldsymbol{I}_{B_{l_t}} \in \mathcal{I}} \left\|\hat{\boldsymbol{I}}_{B_{l_t}}  \boldsymbol{V_{\alpha}} \boldsymbol{V_{\alpha}}^{\top}\right\|_{sp} \ \mathbb{E}\left[\left\|\boldsymbol{\epsilon}\right\|_2\right] + rM^2\right) \\  
            &\leq \mathbb{E} \left[ 
            \sup_{\boldsymbol{I}_{B_{l_t}} \in \mathcal{I}} 
            \sqrt{ \text{tr}\left( \hat{\boldsymbol{I}}_{B_{l_t}}^2 \right) } \cdot \right.\left. 
            \sqrt{ \text{tr}\left( 
            \left( \boldsymbol{V}_{\alpha} \boldsymbol{V}_{\alpha}^{\top} 
            \boldsymbol{\epsilon} \boldsymbol{\epsilon}^{\top} \right)^{\top}
            \left( \boldsymbol{V}_{\alpha} \boldsymbol{V}_{\alpha}^{\top}
            \boldsymbol{\epsilon} \boldsymbol{\epsilon}^{\top} \right)
            \right) }
            \right] + c\left(M\sqrt{r} \sup_{\boldsymbol{I}_{B_{l_t}} \in \mathcal{I}} \left\|\hat{\boldsymbol{I}}_{B_{l_t}}\right\|_{sp} \left\| \boldsymbol{V_{\alpha}} \boldsymbol{V_{\alpha}}^{\top}\right\|_{sp} \ \mathbb{E}\left[M\sqrt{nd}\right]\right) \\
            & \qquad+ crM^2 \\
            & = \mathbb{E}\left[\sup_{\boldsymbol{I}_{B_{l_t}} \in \mathcal{I}} \sqrt{db} \sqrt{\text{tr}\left(\boldsymbol{\epsilon}\boldsymbol{\epsilon}^{\top} \boldsymbol{V_{\alpha}} \boldsymbol{V_{\alpha}}^{\top} \boldsymbol{V_{\alpha}} \boldsymbol{V_{\alpha}}^{\top}\boldsymbol{\epsilon}\boldsymbol{\epsilon}^{\top}\right)}\right] + c\left(M^2\sqrt{ndr} + rM^2\right) \\    
            & = \sqrt{db} \ \mathbb{E}\left[\sqrt{\text{tr}\left(\boldsymbol{\epsilon}\boldsymbol{\epsilon}^{\top} \boldsymbol{V_{\alpha}} \boldsymbol{V_{\alpha}}^{\top}\boldsymbol{\epsilon}\boldsymbol{\epsilon}^{\top}\right)}\right] + c\left(M^2\sqrt{ndr} + rM^2\right)\\
            & = \sqrt{db} \ \mathbb{E}\left[\left\|\boldsymbol{\epsilon}\right\|_2 \sqrt{\text{tr}\left( \boldsymbol{V_{\alpha}} \boldsymbol{V_{\alpha}}^{\top}\boldsymbol{\epsilon}\boldsymbol{\epsilon}^{\top}\right)}\right] + c\left(M^2\sqrt{ndr} + rM^2\right) \\
        \end{align*}       
        \begin{align*}
            & \leq Md\sqrt{nb} \ \mathbb{E}\left[\sqrt{\text{tr}\left( \boldsymbol{V_{\alpha}} \boldsymbol{V_{\alpha}}^{\top}\boldsymbol{\epsilon}\boldsymbol{\epsilon}^{\top}\right)}\right] + c\left(M^2\sqrt{ndr} + rM^2\right) \\
            & \leq Md\sqrt{nb} \ \sqrt{\mathbb{E}\left[\text{tr}\left( \boldsymbol{V_{\alpha}} \boldsymbol{V_{\alpha}}^{\top}\boldsymbol{\epsilon}\boldsymbol{\epsilon}^{\top}\right)\right]} + c\left(M^2\sqrt{ndr} + rM^2\right) \\
            & = Md\sqrt{nb} \ \sqrt{\text{tr}\left(\mathbb{E}\left[ \boldsymbol{V_{\alpha}} \boldsymbol{V_{\alpha}}^{\top}\boldsymbol{\epsilon}\boldsymbol{\epsilon}^{\top}\right]\right)} + c\left(M^2\sqrt{ndr} + rM^2\right) \\
            & = Md\sqrt{nb} \ \sqrt{\text{tr}\left( \boldsymbol{V_{\alpha}} \boldsymbol{V_{\alpha}}^{\top}\mathbb{E}\left[\boldsymbol{\epsilon}\boldsymbol{\epsilon}^{\top}\right]\right)} + c\left(M^2\sqrt{ndr} + rM^2\right) \\
            & = Md\sqrt{nb} \ \sqrt{\text{tr}\left( \boldsymbol{V_{\alpha}} \boldsymbol{V_{\alpha}}^{\top}\left(\sigma^2 I\right)\right)} + c\left(M^2\sqrt{ndr} + rM^2\right)\\
            & = Md\sigma\sqrt{nb} \ \sqrt{\text{tr}\left( \boldsymbol{V_{\alpha}} \boldsymbol{V_{\alpha}}^{\top}\right)} + c\left(M^2\sqrt{ndr} + rM^2\right) \\
            & = M^2d\sqrt{nb} \ \sqrt{\text{tr}\left( \boldsymbol{V_{\alpha}}^{\top} \boldsymbol{V_{\alpha}}\right)} + c\left(M^2\sqrt{ndr} + rM^2\right) \\
            & = M^2d\sqrt{ndb} + c\left(M^2\sqrt{ndr} + rM^2\right) \\
            & = M^2\left(d\sqrt{ndb} + c\sqrt{ndr} + cr\right)
    \end{align*}

    Thus, from the above analysis, we get

    \begin{align*}
        &\mathbb{P}\left(\frac{1}{ndb} \sup_{\boldsymbol{I}_{B_{l_t}} \in \mathcal{I}} \boldsymbol{\epsilon}^{\top} \hat{\boldsymbol{I}}_{B_{l_t}}  \boldsymbol{V_{\alpha}} \boldsymbol{V_{\alpha}}^{\top}\boldsymbol{\epsilon} \right.\left.\geq M^2\left(\frac{d}{\sqrt{ndb}} + c\frac{1}{b\sqrt{nd}}\sqrt{r} + \frac{cr}{ndb}\right)\right) \leq e^{-r}
    \end{align*}

    Taking $r = \log\left(\frac{1}{\delta}\right)$, we get,

    \begin{align*}
        &\mathbb{P}\left(\frac{1}{ndb} \sup_{\boldsymbol{I}_{B_{l_t}} \in \mathcal{I}} \boldsymbol{\epsilon}^{\top} \hat{\boldsymbol{I}}_{B_{l_t}}  \boldsymbol{V_{\alpha}} \boldsymbol{V_{\alpha}}^{\top}\boldsymbol{\epsilon} \right. \left.\geq M^2\left(\frac{d}{\sqrt{ndb}} + c\frac{1}{b\sqrt{nd}}\sqrt{\log\left(\frac{1}{\delta}\right)} + c\frac{\log\left(\frac{1}{\delta}\right)}{ndb}\right)\right) \leq \delta\\
    \end{align*}

    Thus, with probability atleast $1 - \delta$,

    \begin{align*}
        &\frac{1}{ndb} \boldsymbol{\epsilon}^{\top} \hat{\boldsymbol{I}}_{B_{l_t}}  \boldsymbol{V_{\alpha}} \boldsymbol{V_{\alpha}}^{\top}\boldsymbol{\epsilon} \leq \frac{1}{ndb} \sup_{\boldsymbol{I}_{B_{l_t}} \in \mathcal{I}} \boldsymbol{\epsilon}^{\top} \hat{\boldsymbol{I}}_{B_{l_t}}  \boldsymbol{V_{\alpha}} \boldsymbol{V_{\alpha}}^{\top}\boldsymbol{\epsilon} \leq M^2\left(\frac{d}{\sqrt{ndb}} + c\frac{1}{b\sqrt{nd}}\sqrt{\log\left(\frac{1}{\delta}\right)} + c\frac{\log\left(\frac{1}{\delta}\right)}{ndb}\right)\\
    \end{align*}

    Let $y_j = \boldsymbol{e}_j^{\top}( \boldsymbol{Z}_{\mathcal{C}(i, j)}^{-})^{\top} \boldsymbol{V_{\beta}}^{\top}\hat{\boldsymbol{I}}_{B_{l_t}}\boldsymbol{\epsilon}$. Now, $y_j$ is a univariate, bounded random variable with $\left|y_j\right| \leq \frac{M}{\sqrt{n}}$. Thus,

    \begin{align*}
        &\quad \max_{(i, j) \in \mathcal{E}} \left\|( \boldsymbol{Z}_{\mathcal{C}(i, j)}^{-})^{\top} \boldsymbol{V_{\beta}}^{\top}\hat{\boldsymbol{I}}_{B_{l_t}}\boldsymbol{\epsilon}\right\|_{\infty} = \max_j \left|y_j\right| \leq \frac{M}{\sqrt{n}}\\
    & \Rightarrow \frac{1}{ndb}\max_{(i, j) \in \mathcal{E}} \left\|( \boldsymbol{Z}_{\mathcal{C}(i, j)}^{-})^{\top} \boldsymbol{V_{\beta}}^{\top}\hat{\boldsymbol{I}}_{B_{l_t}}\boldsymbol{\epsilon}\right\|_2 \leq \frac{1}{ndb}\max_{(i, j) \in \mathcal{E}} \left\|( \boldsymbol{Z}_{\mathcal{C}(i, j)}^{-})^{\top} \boldsymbol{V_{\beta}}^{\top}\hat{\boldsymbol{I}}_{B_{l_t}}\boldsymbol{\epsilon}\right\|_{\infty} \leq \frac{M}{ndb\sqrt{n}}\\
    \end{align*}

    Note $\gamma^{\prime} \geq \frac{M}{ndb\sqrt{n}}$ implies that, \[\gamma^{\prime} \geq \frac{1}{ndb}\max_{(i, j) \in \mathcal{E}} \left\|( \boldsymbol{Z}_{\mathcal{C}(i, j)}^{-})^{\top} \boldsymbol{V_{\beta}}^{\top}\hat{\boldsymbol{I}}_{B_{l_t}}\boldsymbol{\epsilon}\right\|_2.\] 
    
    So we get,

    \begin{align*}
        \frac{1}{ndb}&\left|G(\hat{\boldsymbol{\alpha}}, \hat{\boldsymbol{\beta}}, \hat{\boldsymbol{I}}_{B_{l_t}})\right| \leq M^2\left(\frac{d}{\sqrt{ndb}} + c\frac{1}{b\sqrt{nd}}\sqrt{\log\left(\frac{1}{\delta}\right)} \right. \left. +   c\frac{\log\left(\frac{1}{\delta}\right)}{ndb}\right) + \gamma^{\prime} \sum_{(i, j) \in \mathcal{E}} \left\| \boldsymbol{Z}_{\mathcal{C}(i, j)} \left(\hat{\boldsymbol{\beta}} - \boldsymbol{\beta}\right)\right\|_2
    \end{align*}

    holds with probability atleast $1 - \delta$. Now combining all the results we get, with probability atleast $1 - \delta$

    \begin{align*}
            & \frac{1}{2ndb} \left\| \boldsymbol{V_{\alpha}}\left(\hat{\boldsymbol{\alpha}} - \boldsymbol{\alpha}\right) +  \boldsymbol{V_{\beta}}\left(\hat{\boldsymbol{\beta}} - \boldsymbol{\beta}\right)\right\|_{\hat{\boldsymbol{I}}_{B_{l_t}}}^2 + \gamma^{\prime} \sum_{(i, j) \in \mathcal{E}} \left\| \boldsymbol{Z}_{\mathcal{C}(i, j)}\hat{\boldsymbol{\beta}}\right\|_2^2 \\
            & \leq M^2\left(\frac{d}{\sqrt{ndb}} + c\frac{1}{b\sqrt{nd}}\sqrt{\log\left(\frac{1}{\delta}\right)} + c\frac{\log\left(\frac{1}{\delta}\right)}{ndb}\right) + \frac{1}{n} M^2 + \gamma^{\prime} \sum_{(i, j) \in \mathcal{E}} \left\| \boldsymbol{Z}_{\mathcal{C}(i, j)} \left(\hat{\boldsymbol{\beta}} - \boldsymbol{\beta}\right)\right\|_2 + \gamma^{\prime} \sum_{(i, j) \in \mathcal{E}} \left\| \boldsymbol{Z}_{\mathcal{C}(i, j)}\boldsymbol{\beta}\right\|_2^2\\
    \end{align*}
    
    Upon rearranging the terms, we obtain that with probability atleast $1 - \delta$,

    \begin{align*}
            & \frac{1}{2ndb} \left\| \boldsymbol{V_{\alpha}}\left(\hat{\boldsymbol{\alpha}} - \boldsymbol{\alpha}\right) +  \boldsymbol{V_{\beta}}\left(\hat{\boldsymbol{\beta}} - \boldsymbol{\beta}\right)\right\|_{\hat{\boldsymbol{I}}_{B_{l_t}}}^2 \\
            & \leq M^2\left(\frac{1}{n} + \frac{d}{\sqrt{ndb}} + c\frac{1}{b\sqrt{nd}}\sqrt{\log\left(\frac{1}{\delta}\right)} 
            + c\frac{\log\left(\frac{1}{\delta}\right)}{ndb}\right) \\
            & \quad \quad  + \gamma^{\prime} \left[\sum_{(i, j) \in \mathcal{E}} \bigg(\left\| \boldsymbol{Z}_{\mathcal{C}(i, j)}\hat{\boldsymbol{\beta}}\right\|_2 
            - \left\| \boldsymbol{Z}_{\mathcal{C}(i, j)}\hat{\boldsymbol{\beta}}\right\|_2^2\bigg)  + \sum_{(i, j) \in \mathcal{E}} \left\| \boldsymbol{Z}_{\mathcal{C}(i, j)}\boldsymbol{\beta}\right\|_2 
            + \sum_{(i, j) \in \mathcal{E}} \left\| \boldsymbol{Z}_{\mathcal{C}(i, j)}\boldsymbol{\beta}\right\|_2^2 \right]\\
            & \leq M^2\left(\frac{1}{n} + \frac{d}{\sqrt{ndb}} + c\frac{1}{b\sqrt{nd}}\sqrt{\log\left(\frac{1}{\delta}\right)} + c\frac{\log\left(\frac{1}{\delta}\right)}{ndb}\right) + \gamma^{\prime} \left[\frac{\left|\mathcal{E}\right|}{4} +  \sum_{(i, j) \in \mathcal{E}} \left\| \boldsymbol{Z}_{\mathcal{C}(i, j)}\boldsymbol{\beta}\right\|_2 + \sum_{(i, j) \in \mathcal{E}} \left\| \boldsymbol{Z}_{\mathcal{C}(i, j)}\boldsymbol{\beta}\right\|_2^2 \right] \\
            & \leq M^2\left(\frac{1}{n} + \frac{d}{\sqrt{ndb}} + c\frac{1}{b\sqrt{nd}}\sqrt{\log\left(\frac{1}{\delta}\right)} + c\frac{\log\left(\frac{1}{\delta}\right)}{ndb}\right) + \gamma^{\prime} \left[\frac{\left|\mathcal{E}\right|}{4} +\sum_{(i, j) \in \mathcal{E}} \left\|\boldsymbol{D}_{\mathcal{C}(i, j)}\boldsymbol{u}\right\|_2 +  \sum_{(i, j) \in \mathcal{E}} \left\|\boldsymbol{D}_{\mathcal{C}(i, j)}\boldsymbol{u}\right\|_2^2 \right] \\
    \end{align*}

    \label{app:proof_theorem}
\end{proof}

\subsection{Proof of Corollary 1} \label{col1_proof}

\begin{corollary}
    Suppose $\left\|\boldsymbol{D}_{\mathcal{C}(i, j)}\boldsymbol{u}\right\|_2 \leq C$, for all $1 \leq i, j \leq n$, for some constant $C$, $\left|\mathcal{E}\right| \leq kn$ and $\gamma^{\prime} \geq \frac{M}{ndb\sqrt{n}}$. If $d = o(n)$, then $\frac{1}{2ndb}\|\boldsymbol{\hat{u} - u}\|_{\hat{\boldsymbol{I}}_{B_{l_t}}}^2 \overset{p}{\rightarrow} 0$ as $n, d \rightarrow \infty$.
\end{corollary}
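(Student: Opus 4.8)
The plan is to start from the finite-sample inequality \eqref{eq:bound} of Theorem~\ref{thm:important} and show that, under the stated hypotheses, every summand on its right-hand side vanishes as $n,d\to\infty$, and then to promote this high-probability statement into convergence in probability. The first move is to fix the regularization parameter at the smallest admissible value $\gamma^{\prime}=\frac{M}{ndb\sqrt{n}}$; since all three $\gamma^{\prime}$-dependent terms in the bound are increasing in $\gamma^{\prime}$, this choice makes them as small as possible while still respecting the hypothesis $\gamma^{\prime}\geq\frac{M}{ndb\sqrt{n}}$. I would then feed in the two structural assumptions $\|\boldsymbol{D}_{\mathcal{C}(i, j)}\boldsymbol{u}\|_2\leq C$ and $\left|\mathcal{E}\right|\leq kn$ to replace the edge-sums by the crude estimates $\sum_{(i,j)\in\mathcal{E}}\|\boldsymbol{D}_{\mathcal{C}(i,j)}\boldsymbol{u}\|_2\leq knC$ and $\sum_{(i,j)\in\mathcal{E}}\|\boldsymbol{D}_{\mathcal{C}(i,j)}\boldsymbol{u}\|_2^2\leq knC^2$, treating the block size $b$ as a fixed constant (indeed $l=\mathcal{O}(n)$ forces $b=\lfloor n/l\rfloor=\mathcal{O}(1)$).

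With these substitutions the problem collapses to a term-by-term order-of-magnitude check that the resulting deterministic upper bound, call it $R_n(\delta)$, tends to $0$ for each fixed $\delta$. The trace and concentration terms are immediate: $\frac{\sqrt{db}}{n\sqrt{db}}=\frac{1}{n}\to0$, while the $\log(1/\delta)$ terms carry prefactors $\frac{1}{b\sqrt{nd}}$ and $\frac{1}{ndb}$ that both vanish for fixed $\delta$. The three $\gamma^{\prime}$-terms become, after the chosen $\gamma^{\prime}$, at most $\frac{Mk}{4db\sqrt{n}}$, $\frac{MkC}{db\sqrt{n}}$ and $\frac{MkC^2}{db\sqrt{n}}$, each of which is $O(1/\sqrt{n})\to0$. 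The only summand that genuinely needs the dimension hypothesis is $\frac{d\sqrt{n}}{n\sqrt{db}}=\frac{1}{\sqrt{b}}\sqrt{d/n}$ (equivalently $\frac{d}{\sqrt{ndb}}$): it vanishes precisely because $d=o(n)$ forces $d/n\to0$, which is exactly where that assumption is spent. Collecting these gives $R_n(\delta)\to0$ as $n,d\to\infty$.

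The last and most delicate step is to convert the family of $1-\delta$ guarantees into a single convergence-in-probability statement, the subtlety being that $R_n(\delta)$ still depends on $\delta$ through $\log(1/\delta)$. I would argue as follows: fix $\varepsilon>0$ and an arbitrary $\delta>0$; since $R_n(\delta)\to0$ there is an index beyond which $R_n(\delta)<\varepsilon$, and for all such $n$ Theorem~\ref{thm:important} yields $\mathbb{P}\big(\tfrac{1}{2ndb}\|\boldsymbol{\hat{u} - u}\|_{\hat{\boldsymbol{I}}_{B_{l_t}}}^2>\varepsilon\big)\leq\mathbb{P}\big(\tfrac{1}{2ndb}\|\boldsymbol{\hat{u} - u}\|_{\hat{\boldsymbol{I}}_{B_{l_t}}}^2>R_n(\delta)\big)\leq\delta$. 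Hence $\limsup_{n,d\to\infty}\mathbb{P}(\cdot>\varepsilon)\leq\delta$, and letting $\delta\downarrow0$ establishes the claim. I expect this $\delta$-chasing argument to be the main point requiring care, since it is the only place where the probabilistic content of the corollary enters; everything preceding it is a routine asymptotic check once $\gamma^{\prime}$ is pinned to its lower bound and the single $d=o(n)$ term is handled.
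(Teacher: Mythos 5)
Your proof is correct and follows essentially the same route as the paper's: start from the bound in Theorem~\ref{thm:important}, use $\left|\mathcal{E}\right| \leq kn$ and $\left\|\boldsymbol{D}_{\mathcal{C}(i,j)}\boldsymbol{u}\right\|_2 \leq C$ to bound the edge sums by $knC$ and $knC^2$, verify term-by-term that the right-hand side vanishes (with $d=o(n)$ spent exactly on the $d/\sqrt{ndb}$ term), and close with the fixed-$\delta$, fixed-$\varepsilon$ argument for convergence in probability. Your two refinements --- explicitly pinning $\gamma'$ to its lower bound $\frac{M}{ndb\sqrt{n}}$ (without which the $\gamma' kn$ terms need not vanish, a point the paper leaves tacit) and spelling out the $\limsup$ step --- tighten rather than change the paper's argument.
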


\begin{proof}
    For any fixed $\delta$, we know from Theorem 1 that with probability atleast $1 - \delta$.

    \begin{align*}
         \frac{1}{2ndb}\|\boldsymbol{\hat{u} - u}\|_{\hat{\boldsymbol{I}}_{B_{l_t}}}^2 &\leq M^2 \left(\frac{1}{n} + \frac{d}{\sqrt{ndb}} + c\frac{1}{b\sqrt{nd}}\sqrt{\log\left(\frac{1}{\delta}\right)} + c\frac{\log\left(\frac{1}{\delta}\right)}{ndb}\right) + \gamma^{\prime} \frac{\left|\mathcal{E}\right|}{4}\\
        & \quad \quad + \gamma^{\prime}\left[ \sum_{(i, j) \in \mathcal{E}} \left\|\boldsymbol{D}_{\mathcal{C}(i, j)}\boldsymbol{u}\right\|_2 + \sum_{(i, j) \in \mathcal{E}} \left\|\boldsymbol{D}_{\mathcal{C}(i, j)}\boldsymbol{u}\right\|_2^2\right] \\
        & \leq M^2 \left(\frac{1}{n} + \frac{d}{\sqrt{ndb}} + c\frac{1}{b\sqrt{nd}}\sqrt{\log\left(\frac{1}{\delta}\right)} + c\frac{\log\left(\frac{1}{\delta}\right)}{ndb}\right) + \gamma^{\prime} \frac{\left|\mathcal{E}\right|}{4} + \left(C + C^2\right)\gamma^{\prime}\left|\mathcal{E}\right| \\
        & \leq M^2 \left(\frac{1}{n} + \frac{d}{\sqrt{ndb}} + c\frac{1}{b\sqrt{nd}}\sqrt{\log\left(\frac{1}{\delta}\right)} + c\frac{\log\left(\frac{1}{\delta}\right)}{ndb}\right) + \gamma^{\prime} \frac{kn}{4} + \left(C + C^2\right)\gamma^{\prime}kn \\
        & \rightarrow 0 \quad \text{as} \quad n, d \rightarrow \infty.
    \end{align*}

    Thus, for any fixed $\epsilon > 0$, $P\left(\frac{1}{2ndb}\|\boldsymbol{\hat{u} - u}\|_{\hat{\boldsymbol{I}}_{B_{l_t}}}^2 > \epsilon\right) \leq \delta$ as $n, p \rightarrow \infty$. Hence, $\frac{1}{2ndb}\|\boldsymbol{\hat{u} - u}\|_{\hat{\boldsymbol{I}}_{B_{l_t}}}^2 \overset{p}{\rightarrow} 0$.

    \label{app:proof_cor1}
\end{proof}

\subsection{Proof of Corollary 2} \label{col2_proof}

\begin{corollary}
    Suppose $\left\|\boldsymbol{D}_{\mathcal{C}(i, j)}\boldsymbol{u}\right\|_2 \leq C$, for all $1 \leq i, j \leq n$, for some constant $C$, $\left|\mathcal{E}\right| \leq kn$ and $\gamma^{\prime} \geq \frac{M}{ndb\sqrt{n}}$. Then $\frac{1}{2ndb}\|\boldsymbol{\hat{u} - u}\|_{\hat{\boldsymbol{I}}_{B_{l_t}}}^2 = O\left(\frac{1}{\sqrt{n}}\right)$.
\end{corollary}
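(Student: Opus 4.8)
The plan is to obtain Corollary~\ref{cor:2} as a direct quantitative specialization of the finite-sample bound in Theorem~\ref{thm:important}, mirroring the argument used for Corollary~\ref{cor:1} but now tracking the order in $n$ rather than only establishing convergence to $0$. First I would start from the high-probability inequality of Theorem~\ref{thm:important}, which holds for any fixed confidence level $\delta$, and insert the three structural hypotheses: the uniform bound $\|\boldsymbol{D}_{\mathcal{C}(i,j)}\boldsymbol{u}\|_2 \le C$, the edge-count bound $|\mathcal{E}| \le kn$, and the admissible value of the penalty. Using $\|\boldsymbol{D}_{\mathcal{C}(i,j)}\boldsymbol{u}\|_2 \le C$ termwise, the two penalty sums collapse to $\sum_{(i,j)\in\mathcal{E}}\|\boldsymbol{D}_{\mathcal{C}(i,j)}\boldsymbol{u}\|_2 \le |\mathcal{E}|C$ and $\sum_{(i,j)\in\mathcal{E}}\|\boldsymbol{D}_{\mathcal{C}(i,j)}\boldsymbol{u}\|_2^2 \le |\mathcal{E}|C^2$, so that the entire regularization contribution is at most $\gamma^{\prime}\big(\tfrac{|\mathcal{E}|}{4} + |\mathcal{E}|C + |\mathcal{E}|C^2\big) \le \gamma^{\prime}kn\big(\tfrac14 + C + C^2\big)$.

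Next I would pin $\gamma^{\prime}$ at the smallest value permitted by the constraint, $\gamma^{\prime} = \frac{M}{ndb\sqrt{n}}$, since the right-hand side of Theorem~\ref{thm:important} is increasing in $\gamma^{\prime}$ and we want the tightest bound. Substituting, the regularization contribution becomes $\frac{M}{ndb\sqrt{n}}\,kn\big(\tfrac14 + C + C^2\big) = \frac{Mk(\tfrac14 + C + C^2)}{db\,\sqrt{n}}$, which is $O(n^{-1/2})$ once $M,k,C,d,b$ are treated as fixed. This is the step that forces the choice of $\gamma^{\prime}$ at the boundary: any $\gamma^{\prime}$ of strictly larger order would inflate this term beyond $O(n^{-1/2})$, so the stated rate is really the rate attained at the minimal admissible penalty.

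It then remains to verify that each noise-driven term is $O(n^{-1/2})$ as well. I would rewrite $\frac{d}{\sqrt{ndb}} = \sqrt{\tfrac{d}{b}}\,\frac{1}{\sqrt{n}}$ and $\frac{c}{b\sqrt{nd}}\sqrt{\log(1/\delta)} = \frac{c\sqrt{\log(1/\delta)}}{b\sqrt{d}}\,\frac{1}{\sqrt{n}}$, both of which are $O(n^{-1/2})$ with $d,b,\delta$ held fixed, while $\frac{1}{n}$ and $\frac{c\log(1/\delta)}{ndb}$ are $O(n^{-1})$ and hence of strictly smaller order. Collecting all contributions, the dominant behaviour is governed by the $n^{-1/2}$ terms, giving $\frac{1}{2ndb}\|\hat{\boldsymbol{u}}-\boldsymbol{u}\|_{\hat{\boldsymbol{I}}_{B_{l_t}}}^2 = O(n^{-1/2})$ with the implicit constant depending on $M,c,C,k,d,b$ and the fixed $\delta$.

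The calculation itself is routine; the only genuine care needed is bookkeeping about which quantities are regarded as constants. I expect the main (minor) obstacle to be stating precisely in what sense the $O(n^{-1/2})$ claim is asymptotic: the bound holds only with probability at least $1-\delta$ for each fixed $\delta$, and the rate treats the ambient dimension $d$ (and the bin size $b$, the neighbour count $k$, and the noise bound $M$) as constants, in contrast to Corollary~\ref{cor:1} where $d\to\infty$ with $d=o(n)$ is permitted. Making the regime explicit---so that $\sqrt{d/b}$ and the $\log(1/\delta)$ factors are absorbed into the hidden constant---is what turns the inequality of Theorem~\ref{thm:important} into the clean $O(1/\sqrt{n})$ statement.
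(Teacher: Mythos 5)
Your proposal is correct and follows essentially the same route as the paper: specialize the Theorem~\ref{thm:important} bound using $\|\boldsymbol{D}_{\mathcal{C}(i,j)}\boldsymbol{u}\|_2 \le C$ and $|\mathcal{E}| \le kn$, observe that every remaining term is $O(1/\sqrt{n})$ with $d$, $b$, $k$, $M$, $\delta$ treated as constants, and read the result as a high-probability (tightness) statement. If anything, you are more explicit than the paper on the one point it leaves implicit --- that the rate requires taking $\gamma^{\prime}$ at (or of the order of) its minimal admissible value $\frac{M}{ndb\sqrt{n}}$, since the bound is increasing in $\gamma^{\prime}$.
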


\begin{proof}
    For any fixed $\delta$, we know from Theorem $1$ that with probability atleast $1 - \delta$.

    \begin{align*}
            & \frac{1}{2ndb}\|\boldsymbol{\hat{u} - u}\|_{\hat{\boldsymbol{I}}_{B_{l_t}}}^2 \\
            & \leq M^2 \left(\frac{1}{n} + \frac{d}{\sqrt{ndb}} + c\frac{1}{b\sqrt{nd}}\sqrt{\log\left(\frac{1}{\delta}\right)} + c\frac{\log\left(\frac{1}{\delta}\right)}{ndb}\right) + \gamma^{\prime} \frac{\left|\mathcal{E}\right|}{4} + \gamma^{\prime}\left[ \sum_{(i, j) \in \mathcal{E}} \left\|\boldsymbol{D}_{\mathcal{C}(i, j)}\boldsymbol{u}\right\|_2 + \sum_{(i, j) \in \mathcal{E}} \left\|\boldsymbol{D}_{\mathcal{C}(i, j)}\boldsymbol{u}\right\|_2^2\right] \\        
            & \leq M^2 \left(\frac{1}{n} + \frac{d}{\sqrt{ndb}} + c\frac{1}{b\sqrt{nd}}\sqrt{\log\left(\frac{1}{\delta}\right)} + c\frac{\log\left(\frac{1}{\delta}\right)}{ndb}\right) + \gamma^{\prime} \frac{\left|\mathcal{E}\right|}{4} + \left(C + C^2\right)\gamma^{\prime}\left|\mathcal{E}\right| \\
            & \leq M^2 \left(\frac{1}{n} + \frac{d}{\sqrt{ndb}} + c\frac{1}{b\sqrt{nd}}\sqrt{\log\left(\frac{1}{\delta}\right)} + c\frac{\log\left(\frac{1}{\delta}\right)}{ndb}\right) + \gamma^{\prime} \frac{kn}{4} + \left(C + C^2\right)\gamma^{\prime}kn \\
            & = O\left(\frac{1}{\sqrt{n}}\right) \\
    \end{align*}

    Thus, $\sqrt{n} \frac{1}{2ndb}\|\boldsymbol{\hat{u} - u}\|_{\hat{\boldsymbol{I}}_{B_{l_t}}}^2 = O(1)$. This implies that there exists a constant $\epsilon$ such that, \\$P\left(\frac{1}{2ndb}\|\boldsymbol{\hat{u} - u}\|_{\hat{\boldsymbol{I}}_{B_{l_t}}}^2 \leq \epsilon\right) \geq 1 - \delta$, for all $n \in \mathbb{N}$. Hence, $\sqrt{n} \frac{1}{2ndb}\|\boldsymbol{\hat{u} - u}\|_{\hat{\boldsymbol{I}}_{B_{l_t}}}^2$ is tight.

    \label{app:proof_cor2}
\end{proof}

\subsection{Derivation of Time Complexity \label{time_complexity}}
The main loop in Algorithm 1 
performs $N$ iterations. It is enough to calculate the complexity of each step inside this loop. Constructing a random partition of $l$ buckets of $b$ points takes $O(lb) = O(n)$ steps. Calculating $MoM_B(\mathbf{U})$ for each bucket takes $O(bd)$ steps. Therefore, finding the median bucket $B_{l_t}$, takes $O(lbd) = O(nd)$ steps. Calculating each $g_i$ requires checking whether the index $i$ is in $B_{l_t}$, which takes at most $O(b)$ checks, evaluating $(\mathbf{u}_i^{(t)} - \mathbf{x}_i)$, which takes $O(d)$ constant time evaluations, and evaluating $\sum_{j} w_{ij} (\mathbf{u}_i^{(t)} - \mathbf{u}_j^{(t)}) \mathbb{1} (||\mathbf{u}_i^{(t)} - \mathbf{u}_j^{(t)}||_2^2 < \mu)$, which takes $O(kd)$ constant-time evaluations. Therefore, calculating all $g_i$'s require at most $O(n(b + d + kd)) = O(nkd)$ steps. Calculating each $m_i$, $v_i$ and $u_i$ takes $O(d)$ constant-time evaluations. Therefore, calculating all $m_i$'s, $v_i$'s, and $u_i$'s require $O(nd)$ constant-time evaluations. Hence, the per-iteration complexity of the main loop is $O(nkd)$. Therefore, the complexity of Algorithm 1 
is $O(Nnkd)$. 

\subsection{Description of datasets} \label{data_description}
Table \ref{tab:dataset_list} contains the name and description of all 18 datasets that were used in the experiments.

\begin{table*}[!hbt]
    \centering
    \small
    \begin{tabular}{clcccc}
    \hline
    & \textbf{Datasets} & \textbf{Type} & \textbf{No. of Data-points} & \textbf{Input Dimension} & \textbf{No. of Clusters}\\
    \hline
    1.& Iris&Real &150 &4 &3 \\
    2.& Newthyroid           &Real &215 &5 &3 \\
    3.& Ecoli&Real &336 &7 &8 \\
    4.& Wisconsin            &Real &683 &9 &2 \\
    5.& Wine&Real &178 &13 &3 \\
    6.& Zoo &Real &101 &16 &7 \\
    7.& Dermatology          &Real &358 &34 &6 \\
    8.& Brain&Real &42 &5597 &5 \\
    9.& Lung&Real &203	&3312 &5 \\
    10.& Lymphoma (bio)       &Real &96	&4026 &9 \\
    11.& Coil 20              &Real &1440 &1024 &20 \\
    12.& Wdbc&Real &569 &30 &2 \\
    13.& Lung-discrete        &Real &73	&325 &7 \\
    14.& ORLRaws10p           &Real &100 &10304 &10 \\
    15.& Lymphoma (microarray)&Real &62 &4026 &3 \\
    16.& Blobs&Simulated &1500 &2 &3 \\
    17.& Circles              &Simulated &700 &2 &2 \\
    18.& Moons&Simulated &1500 &2 &2 \\
    \hline
    \end{tabular}
    \caption{List of Datasets}
    \label{tab:dataset_list}
\end{table*}

\subsection{Synthetic Data \label{app:synthetic_data_study}}
In this section, we get empirical results on 3 simulated datasets. We contaminate every dataset by adding noise points following a specific uniform distribution. The contamination levels are: 0$\%$, 5$\%$, 10$\%$, 15$\%$ and 20$\%$.

\begin{itemize}
    \item \textbf{Blobs}: 3 blobs of data points containing 500 points each simulated from bi-variate Gaussian distribution with different means and covariance matrix as the $\mathbf{I}_{2\times2}$. Noise is simulated uniformly from the smallest enclosing axis-parallel rectangle.

    \item \textbf{Circles}: This dataset contains a large circle of radius 1 (consisting of 350 points) containing a smaller circle of radius 0.25 (consisting of 350 points) in two-dimensional space, forming a non-linearly separable pattern. Each point in the dataset is assigned to one of two classes, representing the respective circle to which it belongs. We artificially contaminate the dataset by introducing random points in the middle-annulus to understand which algorithm best separates the two clusters.

    \item \textbf{Moons}: The two moons dataset is a synthetic dataset consisting of two interleaving crescent-shaped clusters resembling two half-moons. The data contains 1500 two-dimensional points, and the two clusters are non-linearly separable. Noise is simulated uniformly from the smallest enclosing axis-parallel rectangle.
\end{itemize}

 \begin{figure}[!hbt]
    \centering
    \begin{subfigure}{0.45\textwidth}
        \centering
        \includegraphics[width = 1\linewidth]{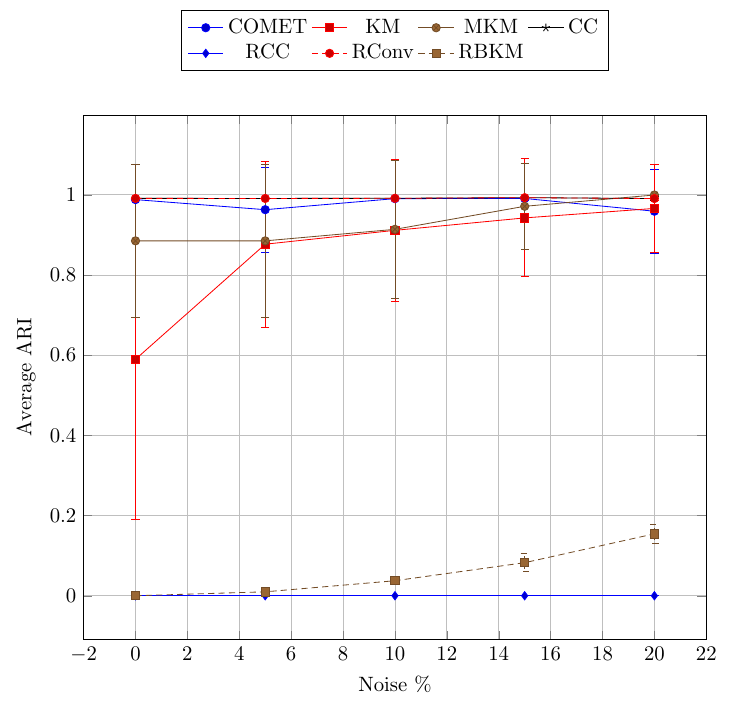}
    \end{subfigure}
    \begin{subfigure}{0.45\textwidth}
        \centering
        \includegraphics[width = 1\linewidth]{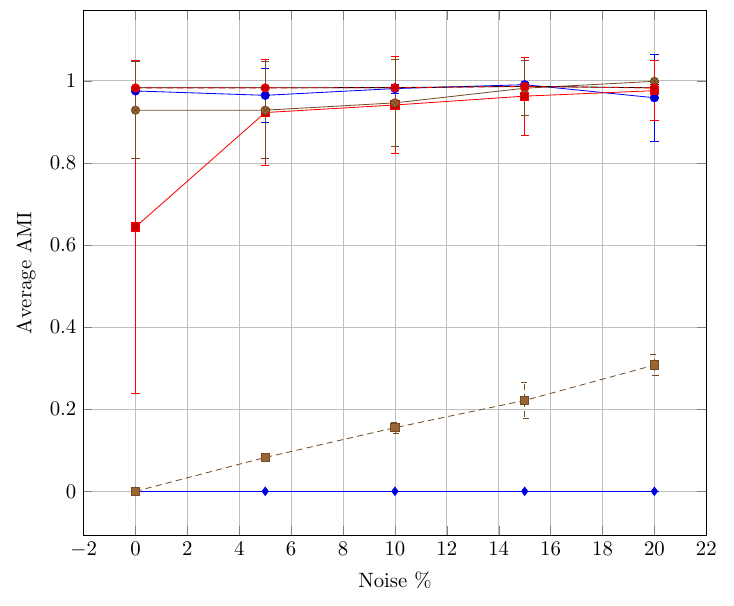}
    \end{subfigure}
    \caption{Performance of Algorithms on Blobs Dataset (ARI and AMI Values)}
    \label{fig:blob_dataset}
\end{figure}

Figures \ref{fig:blob_dataset}, \ref{fig:circles_datatset} and \ref{fig:moons_dataset} represent a comparison of COMET with other SOTA algorithms for different noise levels on different datasets. Figure \ref{fig:cluster_grid} gives a visual representation of the outputs of different algorithms at 10$\%$ noise. It is clearly visible that COMET captures the clustering pattern well and also properly classifies noise.

\begin{figure}[hbt!]
    \centering
    \begin{subfigure}{0.45\textwidth}
        \centering
        \includegraphics[width = 1\linewidth]{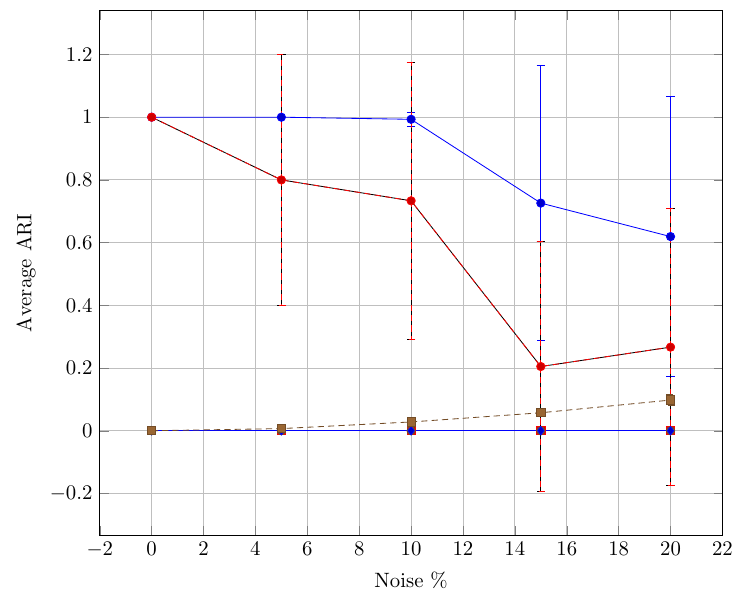}
    \end{subfigure}
    \begin{subfigure}{0.45\textwidth}
        \centering
        \includegraphics[width = 1\linewidth]{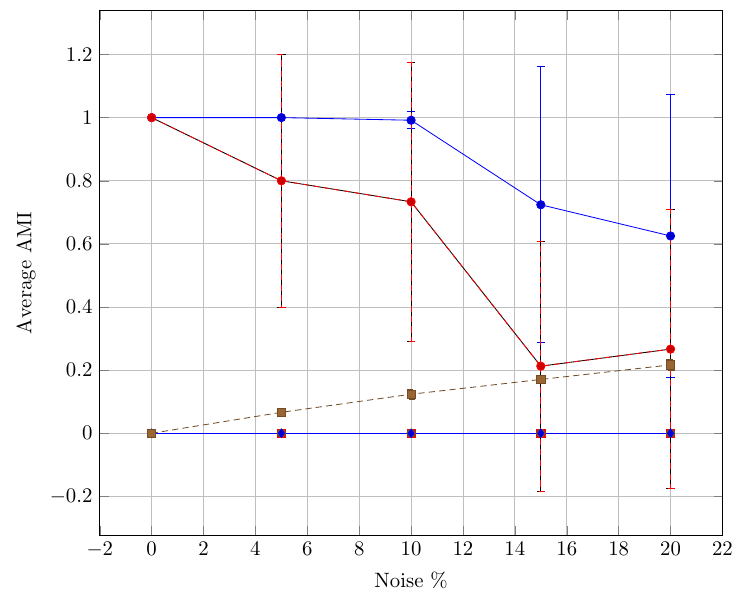}    
    \end{subfigure}
    \caption{Performance of Algorithms on Circles Datasets (ARI and AMI Values)}
    \label{fig:circles_datatset}
\end{figure}
\begin{figure}[htb!]
    \centering
    \begin{subfigure}{0.45\textwidth}
        \centering
        \includegraphics[width = 1\linewidth]{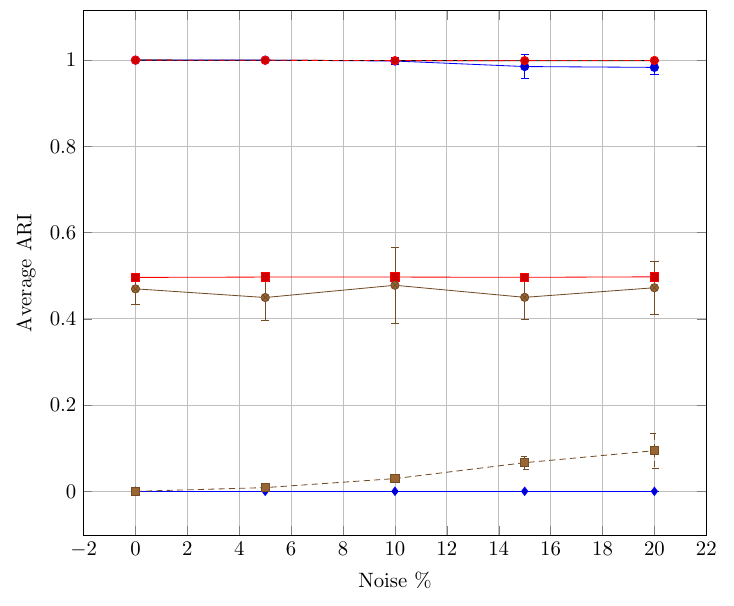}
    \end{subfigure}
    \begin{subfigure}{0.45\textwidth}
        \centering
        \includegraphics[width = 1\linewidth]{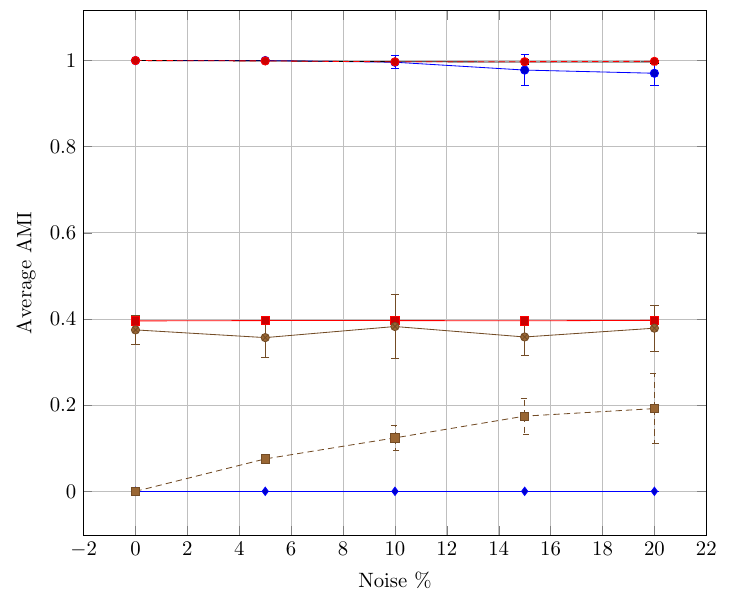}
    \end{subfigure}
    \caption{Performance of Algorithms on Moons Dataset (ARI and AMI Values)}
    \label{fig:moons_dataset}
\end{figure}

\textbf{Discussion: }
As shown in Figure \ref{fig:blob_dataset}, \ref{fig:circles_datatset}, \ref{fig:moons_dataset} and \ref{fig:cluster_grid}, clustering results vary significantly across algorithms. Our proposed algorithm, \textbf{COMET}, excels in identifying true clusters, maintaining high ARI values even as noise increases, demonstrating robustness. In contrast, $k-$means and MoM $k-$means struggle to detect the underlying structure which is a limitation of the $k-$means framework. Convex Clustering performs similarly to COMET in identifying cluster structure and count, but struggles with noise, whereas COMET effectively isolates noise, showcasing its superior performance in noisy environments.

\subsection{Real-life Data \label{app:real_life_full_study}}
Here we have provided the performance of the selected algorithms on the real-life datasets mentioned earlier in Table \ref{tab:dataset_list}, which are not included in Table 2 in the main paper. Here, \text{$k^\ast$} refers to an estimated number of clusters. The actual number of clusters is indicated as $k$. Here the standard deviation is that of the performance measure, not of the mean statistic.

Observe in Table \ref{Real_data_full}, that even though in some datasets like Zoo, Coil20, Lymphoma (Microarray), etc., COMET is not the best performing algorithm, still it gives performance close to the respective best performing algorithms.

\begin{table*}[!htb]
\centering
\small

\begin{tabular}{l>{\centering\arraybackslash}p{.65cm}>{\centering\arraybackslash}p{1.5cm}>{\centering\arraybackslash}p{1.7cm}>{\centering\arraybackslash}p{1.7cm}>{\centering\arraybackslash}p{1.7cm}>{\centering\arraybackslash}p{1.7cm}>{\centering\arraybackslash}p{1.7cm}>{\centering\arraybackslash}p{1.7cm}}
\hline
\textbf{Dataset} & \textbf{Index} & \textbf{KM} & \textbf{MKM} & \textbf{CC} & \textbf{RCC} & \textbf{RConv} & \textbf{RBKM} & \textbf{COMET}\\
\hline

\multirow{3}{6em}{Iris ($k$ = 3)}
&\text{$k^\ast$}
&3.21$\pm$1.01 
&2.91$\pm$0.79 
&2.57$\pm$0.51 
&147.90$\pm$0.70 
&3.71$\pm$0.47
&2.90$\pm$0.80 
&3.21$\pm$0.43 \\
&\text{ARI}
&0.56$\pm$0.06$^\sim$  
&\textbf{0.59$\pm$0.06}
&0.55$\pm$0.01$^\dagger$ 
&0.001$\pm$0.00$^\dagger$ 
& 0.46$\pm$0.06$^\dagger$
&\textbf{0.59$\pm$0.06 }
&0.55$\pm$0.038$^\sim$ \\
&\text{AMI}
&0.66$\pm$0.05$^\sim$ 
&0.67$\pm$0.04$^\sim$ 
&\textbf{0.71$\pm$0.01} 
&0.006$\pm$0.002$^\dagger$ 
&0.61$\pm$0.03$^\dagger$
&0.67$\pm$0.04$^\sim$ 
&0.65$\pm$0.03$^\dagger$ \\

\hline



\multirow{3}{6em}{Ecoli ($k$ = 8)}
&\text{$k^\ast$}
&3.20$\pm$2.24
&1.92$\pm$1.82
&34.21$\pm$1.25 
&334.80$\pm$2.56 
& 16.86$\pm$2.66
&2.00$\pm$0.00 
&10.6$\pm$1.95 \\
&\text{ARI}
&0.27$\pm$0.23$^\dagger$ 
&0.10$\pm$0.17$^\dagger$
&0.47$\pm$0.06$^\sim$ 
&0.00$\pm$0.00$^\dagger$ 
& \textbf{0.51$\pm$0.04}
&0.04$\pm$0.01$^\dagger$ 
&0.46$\pm$0.04$\sim$ \\
&\text{AMI}
&0.25$\pm$0.21$^\dagger$ 
&0.09$\pm$0.16$^\dagger$ 
&\textbf{0.46$\pm$0.02} 
&0.002$\pm$0.004$^\dagger$ 
& 0.46$\pm$0.01$^\sim$
&0.09$\pm$0.01$^\dagger$ 
&0.43$\pm$0.02$^\sim$ \\

\hline




\multirow{3}{6em}{Zoo ($k$ = 7)}
&\text{$k^\ast$}
&4.43$\pm$2.39
&4.37$\pm$1.87
&16.78$\pm$2.26 
&66.93$\pm$3.95 
&6.00$\pm$0.88
&2.00$\pm$0.00 
&6.57$\pm$0.85 \\
&\text{ARI}
&0.53$\pm$0.22$^\dagger$ 
&0.64$\pm$0.23$^\dagger$ 
&0.71$\pm$0.10$^\dagger$ 
&0.11$\pm$0.02$^\dagger$ 
&\textbf{0.89$\pm$0.09}
&0.03$\pm$0.04$^\dagger$ 
&0.85$\pm$0.02$^\sim$ \\
&\text{AMI}
&0.61$\pm$0.21$^\dagger$ 
&0.69$\pm$0.21$^\dagger$ 
&0.75$\pm$0.05$^\dagger$ 
&0.27$\pm$0.03$^\dagger$ 
&0.84$\pm$0.06$^\sim$
&0.05$\pm$0.02$^\dagger$ 
&\textbf{0.85$\pm$0.02} \\

\hline



\multirow{3}{6em}{Brain ($k$ = 5)}
& \text{$k^*$}
& 5$\pm$1.92
&5$\pm$1.49
& 18$\pm$0.00
&42$\pm$0.00
& 5$\pm$0.36
& 2$\pm$0.50
& 4$\pm$0.00\\
&\text{ARI}
& 0.26$\pm$0.10$^\dagger$
& 0.26$\pm$0.10$^\dagger$
& 0.64$\pm$0.02$^\sim$
& 0.00$\pm$0.00$^\dagger$
& 0.56$\pm$0.06$^\dagger$
& 0.016$\pm$0.02$^\dagger$
& \textbf{0.66$\pm$0.03}\\
& \text{AMI}
& 0.33$\pm$0.10$^\dagger$
& 0.27$\pm$0.11$^\dagger$
& 0.62$\pm$0.03$^\dagger$
& 0.00$\pm$0.00$^\dagger$
& 0.62$\pm$0.05$^\dagger$
& 0.03$\pm$0.04$^\dagger$
& \textbf{0.72$\pm$0.03}\\

\hline

\multirow{3}{6em}{Lung ($k$ = 5)}
&\text{$k^\ast$}
&4.21$\pm$1.63 
&4.41$\pm$1.64
&23.86$\pm$0.36 
&7.40$\pm$2.16 
&3.00$\pm$0.00
&1.04$\pm$0.20
&5.00$\pm$0.00 \\
&\text{ARI}
&0.39$\pm$0.20$^\dagger$ 
&\textbf{0.53$\pm$0.19} 
&0.35$\pm$0.004$^\dagger$ 
&0.31$\pm$0.13$^\dagger$ 
& 0.35$\pm$0.003$^\dagger$
&0.00$\pm$0.00$^\dagger$
&0.36$\pm$0.003$^\dagger$ \\
&\text{AMI}
&0.49$\pm$0.20$^\dagger$ 
&\textbf{0.55$\pm$0.17} 
&0.35$\pm$0.001$^\dagger$ 
&0.22$\pm$0.09$^\dagger$ 
&0.37$\pm$0.001$^\dagger$
&0.00$\pm$0.00$^\dagger$
&0.54$\pm$0.01$^\sim$ \\

\hline

\multirow{3}{6em}{Lymphoma(bio) ($k$ = 9)}
&\text{$k^\ast$}
&4.28$\pm$1.61 
&6.32$\pm$1.41
&96$\pm$0.00 
&96$\pm$0.00 
&6.57$\pm$0.51
&2$\pm$0.00
&6.93$\pm$0.82 \\
&\text{ARI}
&0.37$\pm$0.12$^\dagger$ 
&\textbf{0.49$\pm$0.09} 
&0.00$\pm$0.00$^\dagger$ 
&0.00$\pm$0.00$^\dagger$ 
&0.26$\pm$0.09$^\dagger$
& 0.06$\pm$0.06$^\dagger$
&0.41$\pm$0.02$^\dagger$ \\
&\text{AMI}
&0.47$\pm$0.08$^\dagger$ 
&\textbf{0.56$\pm$0.07} 
&0.00$\pm$0.00$^\dagger$ 
&0.00$\pm$0.00$^\dagger$ 
&0.45$\pm$0.08$^\dagger$
&0.09$\pm$0.06$^\dagger$
&0.51$\pm$0.01$^\sim$ \\

\hline

\multirow{3}{6em}{Coil20 ($k$ = 20)}
&\text{$k^\ast$}
&8.66$\pm$3.72 
&5.62$\pm$1.69
& 90.07$\pm$5.01
&1440$\pm$0.00 
& 20$\pm$0.00
& 1.31$\pm$0.47
&19$\pm$0.00 \\
&\text{ARI}
&0.30$\pm$0.11$^\dagger$ 
&0.21$\pm$0.05$^\dagger$ 
& 0.69$\pm$0.001$^\dagger$
&0.00$\pm$0.00$^\dagger$ 
& \textbf{0.82$\pm$0.00}
& 0.00$\pm$0.00$^\dagger$
&0.80$\pm$0.00$^\dagger$ \\
&\text{AMI}
&0.58$\pm$0.09$^\dagger$ 
&0.50$\pm$0.07$^\dagger$ 
&0.86$\pm$0.00$^\dagger$
&0.00$\pm$0.00$^\dagger$ 
&\textbf{0.93$\pm$0.00}
&0.00$\pm$0.00$^\dagger$
&0.92$\pm$0.00$^\dagger$ \\

\hline

\multirow{3}{6em}{Wdbc ($k$ = 2)}
&\text{$k^\ast$}
&1$\pm$0.00 
&1$\pm$0.00
&136.43$\pm$5.65 
&569$\pm$0.00 
&6.78$\pm$0.58
&2$\pm$0.00
&2.21$\pm$0.42 \\
&\text{ARI}
&0.00$\pm$0.00$^\dagger$ 
&0.00$\pm$0.00$^\dagger$ 
&\textbf{0.38$\pm$0.03} 
&0.00$\pm$0.00$^\dagger$ 
&0.09$\pm$0.01$^\dagger$
&0.001$\pm$0.002$^\dagger$
&0.17$\pm$0.31$^\dagger$ \\
&\text{AMI}
&0.00$\pm$0.00$^\dagger$ 
&0.00$\pm$0.00$^\dagger$ 
&\textbf{0.24$\pm$0.01} 
&0.00$\pm$0.00$^\dagger$ 
&0.08$\pm$0.01$^\dagger$
&0.01$\pm$0.003$^\dagger$
&0.14$\pm$0.26$^\dagger$ \\

\hline





\multirow{3}{7em}{Lymphoma (Microarray) ($k$ = 3)}
&\text{$k^\ast$}
&2.44$\pm$1.63 
&2.23$\pm$1.48
&3$\pm$0.00 
&3$\pm$0.85 
&2.43$\pm$0.51
&1.06$\pm$0.25
&3$\pm$0.00 \\
&\text{ARI}
&0.22$\pm$0.27$^\dagger$ 
&0.22$\pm$0.28$^\dagger$
&0.79$\pm$0.00$^\dagger$
&\textbf{0.86$\pm$0.03 }
&0.33$\pm$0.42$^\dagger$
&0.00$\pm$0.011$^\dagger$ 
&0.79$\pm$0.00$^\dagger$ \\
&\text{AMI}
&0.26$\pm$0.29$^\dagger$
&0.27$\pm$0.31$^\dagger$
&0.71$\pm$0.00$^\dagger$
&\textbf{0.81$\pm$0.03} 
&0.29$\pm$0.37$^\dagger$
&0.00$\pm$0.001$^\dagger$
&0.71$\pm$0.00$^\dagger$ \\

\hline
\end{tabular}
\caption{Results for Real Life Datasets }
\label{Real_data_full}
{\raggedright \scriptsize $^\dagger$ : significantly different from the best performing algorithm ,
$^\sim$ : statistically same as the best performing algorithm .}
\end{table*}

\subsection{Case study on Wisconsin Dataset \label{app:case_study_wisconsin}}

We evaluate our algorithm's performance on the Wisconsin dataset, available, and compare it with other algorithms. The dataset contains 699 cases from a study on breast cancer patients, with 9 categorical features and two classes: Benign and Malignant. After refining the dataset to exclude missing values, we work with 683 instances.

\begin{table*}[!htb]
    \centering
    \small
    \begin{tabular}{>{\centering\arraybackslash}p{0.7cm}>{\centering\arraybackslash}p{0.7cm}>{\centering\arraybackslash}p{1.5cm}>{\centering\arraybackslash}p{1.5cm}>{\centering\arraybackslash}p{1.5cm}>{\centering\arraybackslash}p{1.5cm}>{\centering\arraybackslash}p{1.5cm}>{\centering\arraybackslash}p{1.5cm}>{\centering\arraybackslash}p{1.5cm}}
    \hline
    \textbf{Index} & \textbf{Noise} & \textbf{KM} & \textbf{MKM} & \textbf{CC} & \textbf{RCC} & \textbf{RConv} & \textbf{RBKM} & \textbf{COMET}\\

    \hline
    \multirow{5}{7em}{ARI}
    & 0& 0.52$\pm$0.36$^\dagger$& 0.55$\pm$0.39$^\dagger$& 0.79$\pm0.00$$^\dagger$& 0.01$\pm$0.00$^\dagger$& 0.85$\pm$0.00$^\dagger$& 0.002$\pm$0.001$^\dagger$& \textbf{0.88$\pm$0.00}\\
    & 5& 0.58$\pm$0.36$^\dagger$& 0.65$\pm$0.33$^\dagger$& 0.82$\pm$0.01$^\dagger$& 0.01$\pm$0.01$^\dagger$& 0.84$\pm$0.00$^\dagger$& 0.08$\pm$0.03$^\dagger$& \textbf{0.87$\pm$0.00}\\
    & 10& 0.52$\pm$0.35$^\dagger$& 0.47$\pm$0.39$^\dagger$& 0.81$\pm$0.01$^\dagger$& 0.01$\pm$0.00$^\dagger$& 0.85$\pm$0.03$^\dagger$& 0.15$\pm$0.06$^\dagger$& \textbf{0.87$\pm$0.01}\\
    & 15& 0.66$\pm$0.29$^\dagger$& 0.54$\pm$0.38$^\dagger$& 0.80$\pm$0.02$^\dagger$& 0.01$\pm$0.01$^\dagger$& 0.84$\pm$0.02$^\dagger$& 0.24$\pm$0.05$^\dagger$& \textbf{0.86$\pm$0.01}\\
    & 20& 0.40$\pm$0.39$^\dagger$& 0.53$\pm$0.38$^\dagger$& 0.82$\pm$0.03$^\dagger$& 0.01$\pm$0.01$^\dagger$& 0.86$\pm$0.03$^\dagger$& 0.27$\pm$0.11$^\dagger$& \textbf{0.88$\pm$0.02}\\
    \hline
    \multirow{5}{7em}{AMI}
    & 0& 0.47$\pm$0.32$^\dagger$& 0.47$\pm$0.34$^\dagger$& 0.66$\pm$0.00$^\dagger$& 0.08$\pm$0.00$^\dagger$& 0.73$\pm$0.00$^\dagger$& 0.002$\pm$0.001$^\dagger$& \textbf{0.80$\pm$0.00}\\
    & 5& 0.52$\pm$0.32$^\dagger$& 0.57$\pm$0.29$^\dagger$& 0.69$\pm$0.01$^\dagger$& 0.06$\pm$0.001$^\dagger$& 0.73$\pm$0.01$^\dagger$& 0.12$\pm$0.02$^\dagger$& \textbf{0.76$\pm$0.02}\\
    & 10& 0.48$\pm$0.31$^\dagger$& 0.41$\pm$0.34$^\dagger$& 0.67$\pm$0.01$^\dagger$& 0.07$\pm$0.003$^\dagger$& 0.75$\pm$0.03$^\dagger$& 0.19$\pm$0.05$^\dagger$& \textbf{0.76$\pm$0.01}\\
    & 15& 0.60$\pm$0.25$^\dagger$& 0.47$\pm$0.33$^\dagger$& 0.67$\pm$0.02$^\dagger$& 0.08$\pm$0.005$^\dagger$& 0.72$\pm$0.02$^\sim$& 0.27$\pm$0.03$^\dagger$& \textbf{0.75$\pm$0.03}\\
    & 20& 0.37$\pm$0.35$^\dagger$& 0.46$\pm$0.33$^\dagger$& 0.68$\pm$0.02$^\dagger$& 0.07$\pm$0.01$^\dagger$& 0.76$\pm$0.03$^\sim$& 0.31$\pm$0.06$^\dagger$& \textbf{0.77$\pm$0.02}\\
    \hline
    \multirow{5}{7em}{$k^\ast$}
    &  0& 2.09$\pm$0.85& 1.92$\pm$0.83& 15$\pm$0.00& 462$\pm$0.00& 3$\pm$0.00& 2$\pm$0.00& \textbf{3$\pm$0.00}\\
    &  5& 2.13$\pm$0.45&  2.2$\pm$0.81& 15$\pm$1.86& 508$\pm$7.12& 3$\pm$0.56& 2$\pm$0.00& \textbf{2$\pm$0.00}\\
    & 10& 2.25$\pm$0.63&    2.00$\pm$0.82& 15$\pm$1.86& 477$\pm$9.06& 2$\pm$1.04& 2$\pm$0.00& \textbf{3$\pm$0.00}\\
    & 15& 2.47$\pm$0.62& 2.04$\pm$0.92& 15$\pm$1.42& 457$\pm$$\pm$8.57& 3$\pm$0.97& 2$\pm$0.00& \textbf{3$\pm$0.00}\\
    & 20& 1.93$\pm$0.69& 2.12$\pm$0.83& 15$\pm$1.42& 481$\pm$9.21& 3$\pm$0.97& 2$\pm$0.00& \textbf{2$\pm$0.00}\\
    \hline
    \end{tabular}
    \caption{Performance of Different Algorithms on \textbf{Wisconsin} on different Noise levels}
    \label{tab:table_wisconsin}
    {\raggedright \scriptsize $^\dagger$ : significantly different from the best performing algorithm,
$^\sim$ : statistically same as the best performing algorithm .}
\end{table*}

\begin{figure}[!htb]
    \centering
    \includegraphics[width = 0.6\linewidth]{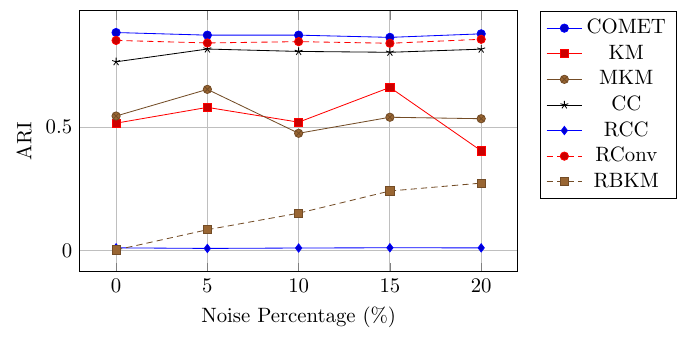}
    \caption{Line plot for performance of different algorithms on \textbf{Wisconsin}}
    \label{fig:line_plot_wisconsin}
\end{figure}

The case study focuses on how our algorithm performs in the presence of noise, which is common in real-life clustering tasks. We introduce varying levels of uniform noise and test the algorithms at 0$\%$, 5$\%$, 10$\%$, 15$\%$, and 20$\%$ noise. For each noise level, we add $\lfloor np \rfloor$ new "noise" datapoints uniformly within the minimum hypercube containing the original data. The results, shown in Table \ref{tab:table_wisconsin} and Figure \ref{fig:line_plot_wisconsin}, reveal that COMET outperforms all algorithms, achieving the highest ARI and AMI across noise levels. Here the standard deviation is that of the performance measure, not of the mean statistic. Convex clustering and Robust Convex clustering also perform well, but slightly worse than COMET. $k-$means and MoM $k-$means are similar but much less effective, while RBKM and RCC perform poorly, with RBKM improving as noise increases. The \textit{t}-SNE plots showing the clustering results of various algorithms on this datsets are provided in the appendix (see \ref{app:TSNE_plots}).

\subsection{\textit{t}-SNE Plots for Wisconsin and Brain dataset \label{app:TSNE_plots}}
\textit{t}-SNE plots for the clustering results on Wisconsin and Brain dataset using various clustering algorithms are given in Figure \ref{fig:TSNE Wisconsin}, \ref{fig:TSNE Brain}. The \textit{t}-SNE plots help in understanding the clustering pattern picked by various algorithms better for high dimensional datasets.

\begin{figure*}[!htb]
    \centering
    
    \subfloat[Dataset]{\includegraphics[width=0.24\textwidth]{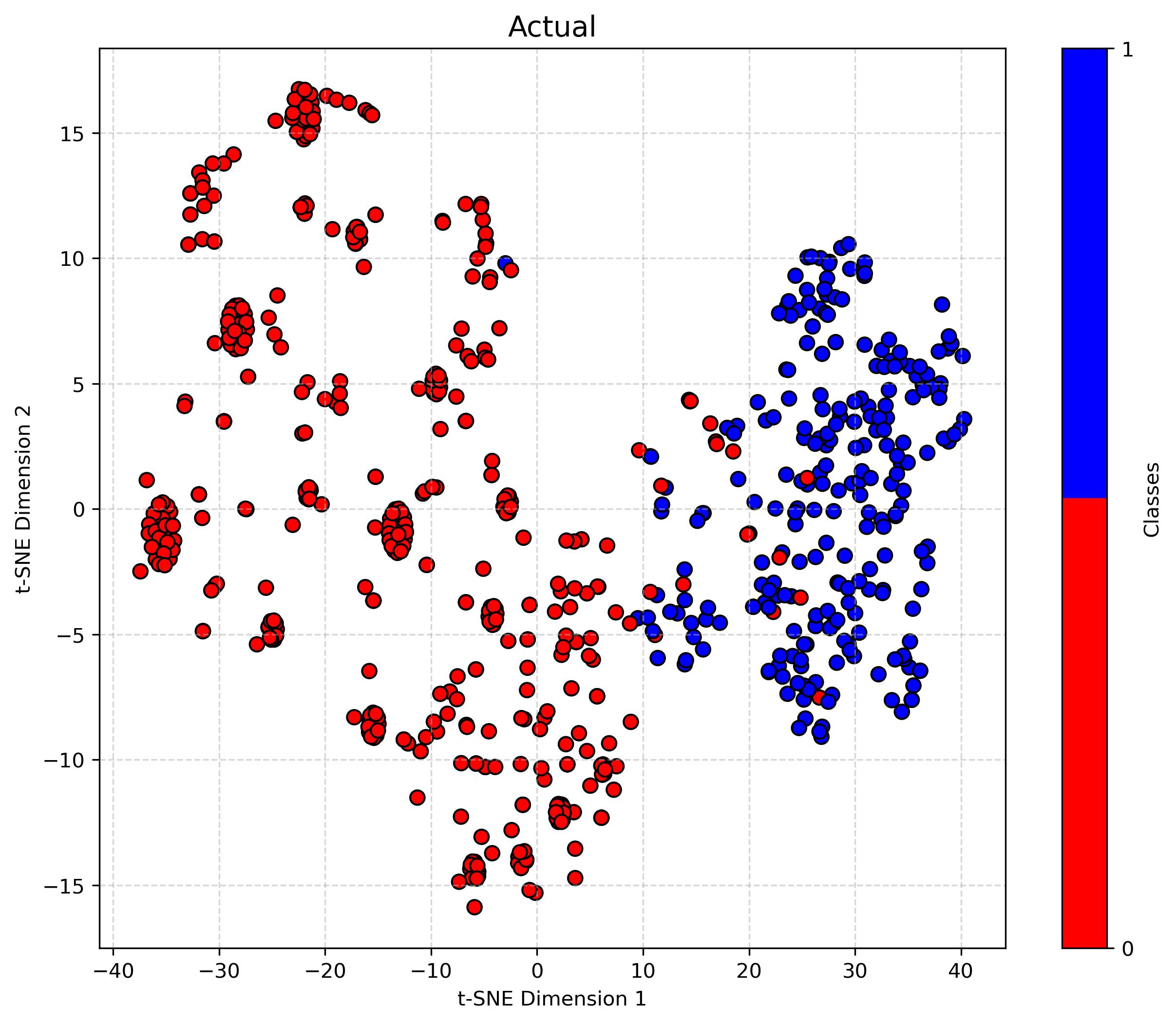}} 
    \subfloat[KM]{\includegraphics[width=0.24\textwidth]{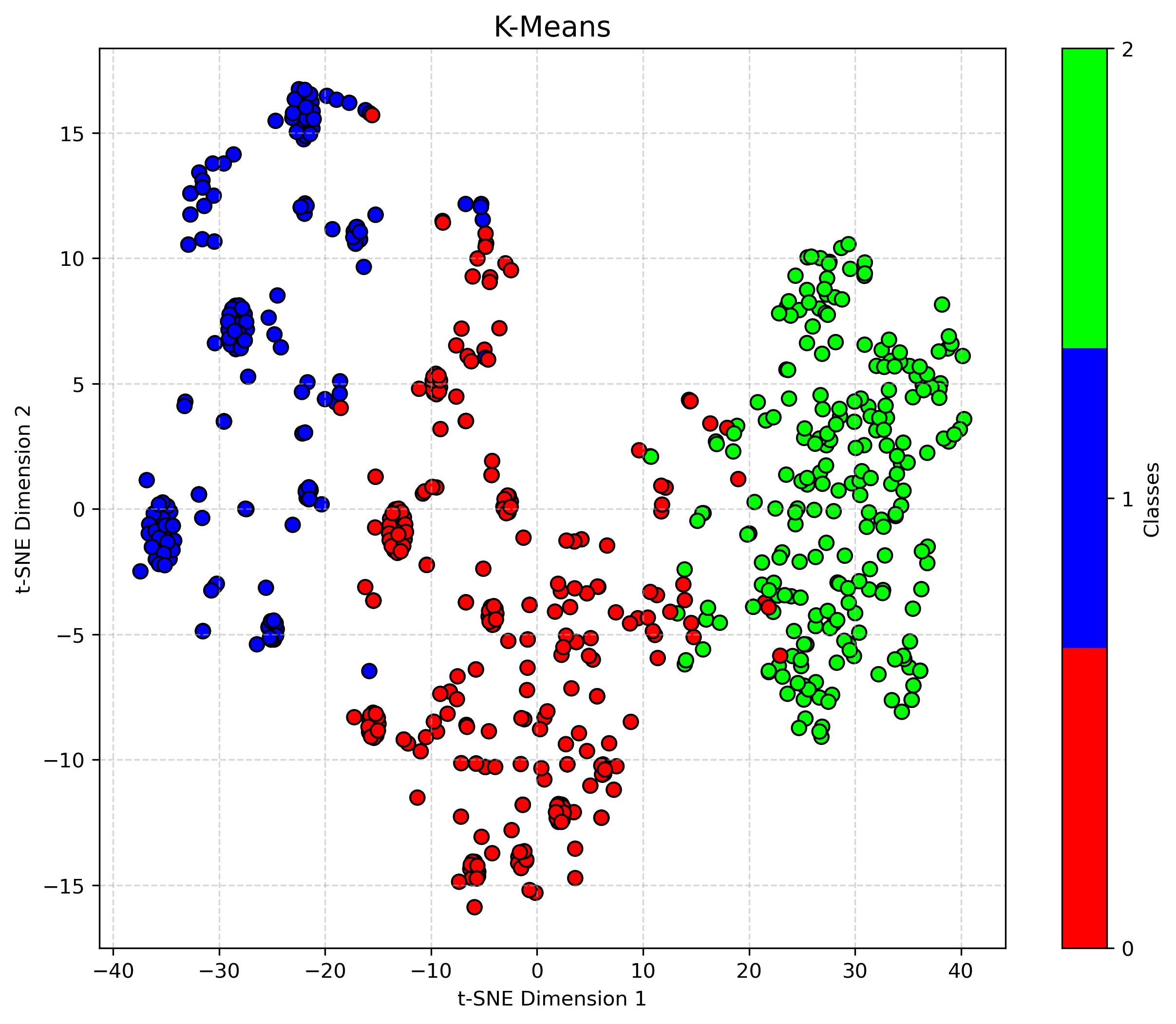}} 
    \subfloat[MKM]{\includegraphics[width=0.24\textwidth]{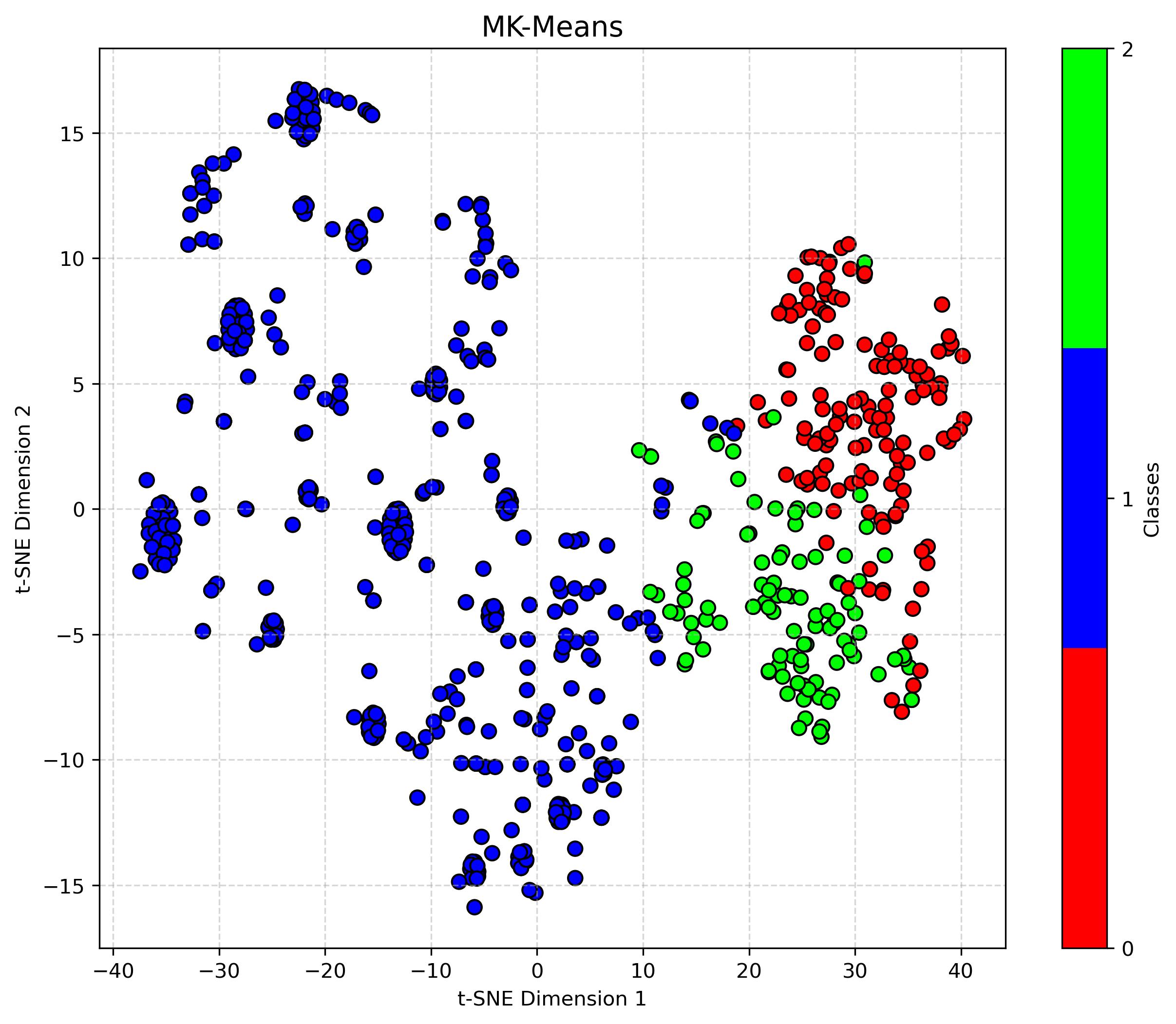}} 
    \subfloat[CC]{\includegraphics[width=0.24\textwidth]{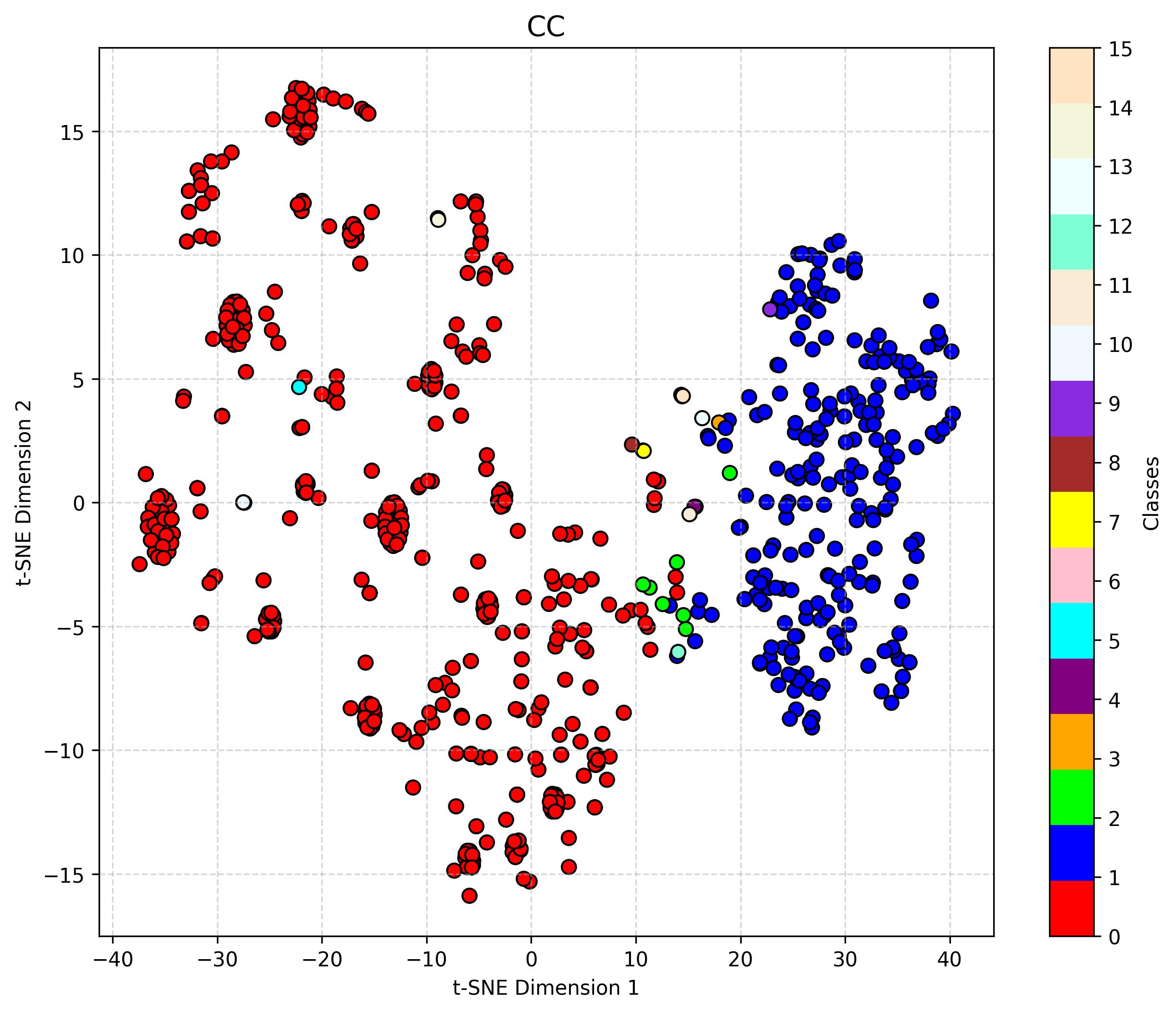}} \\[2ex]
    \subfloat[RCC]{\includegraphics[width=0.24\textwidth]{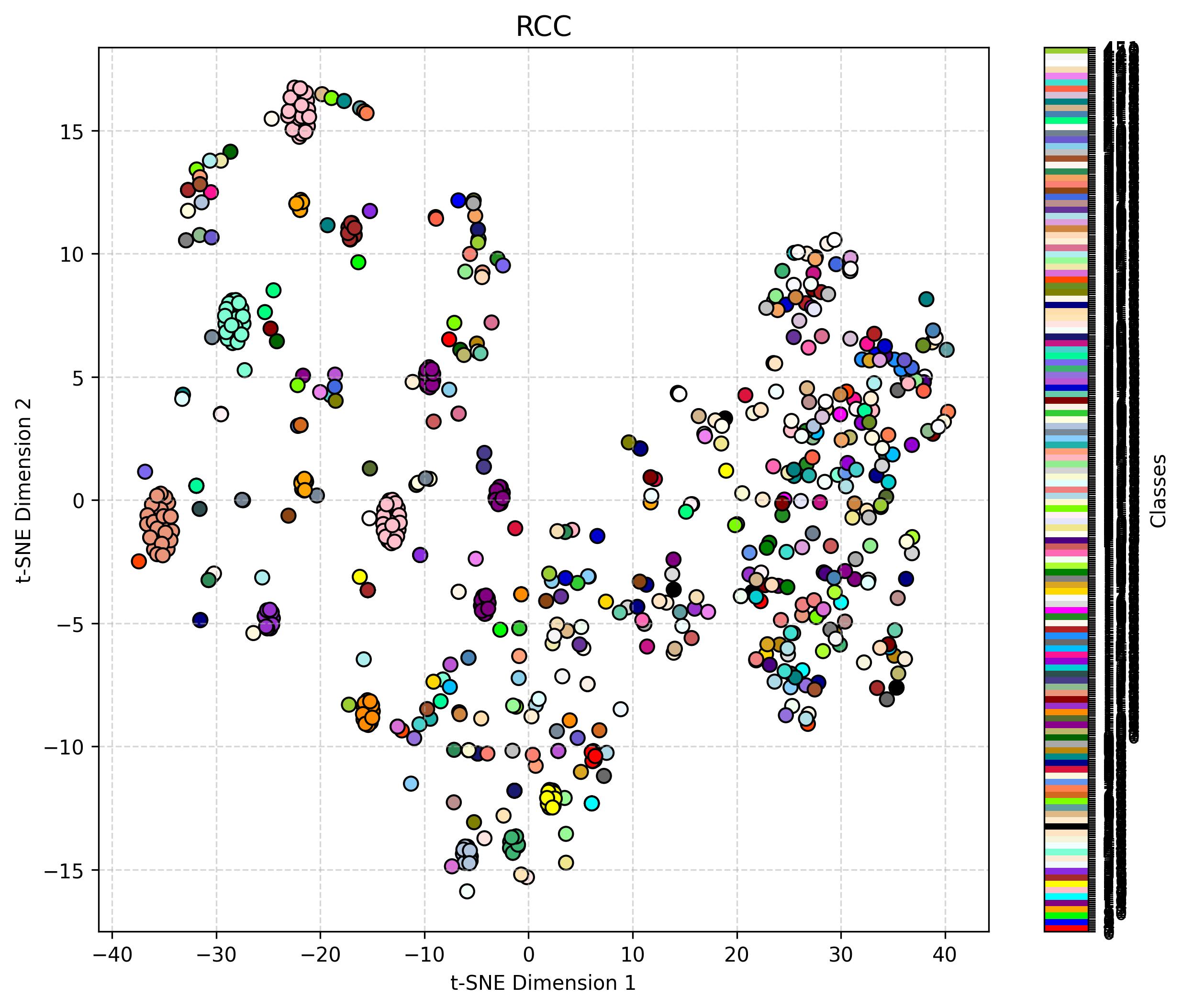}} 
    \subfloat[RConv]{\includegraphics[width=0.24\textwidth]{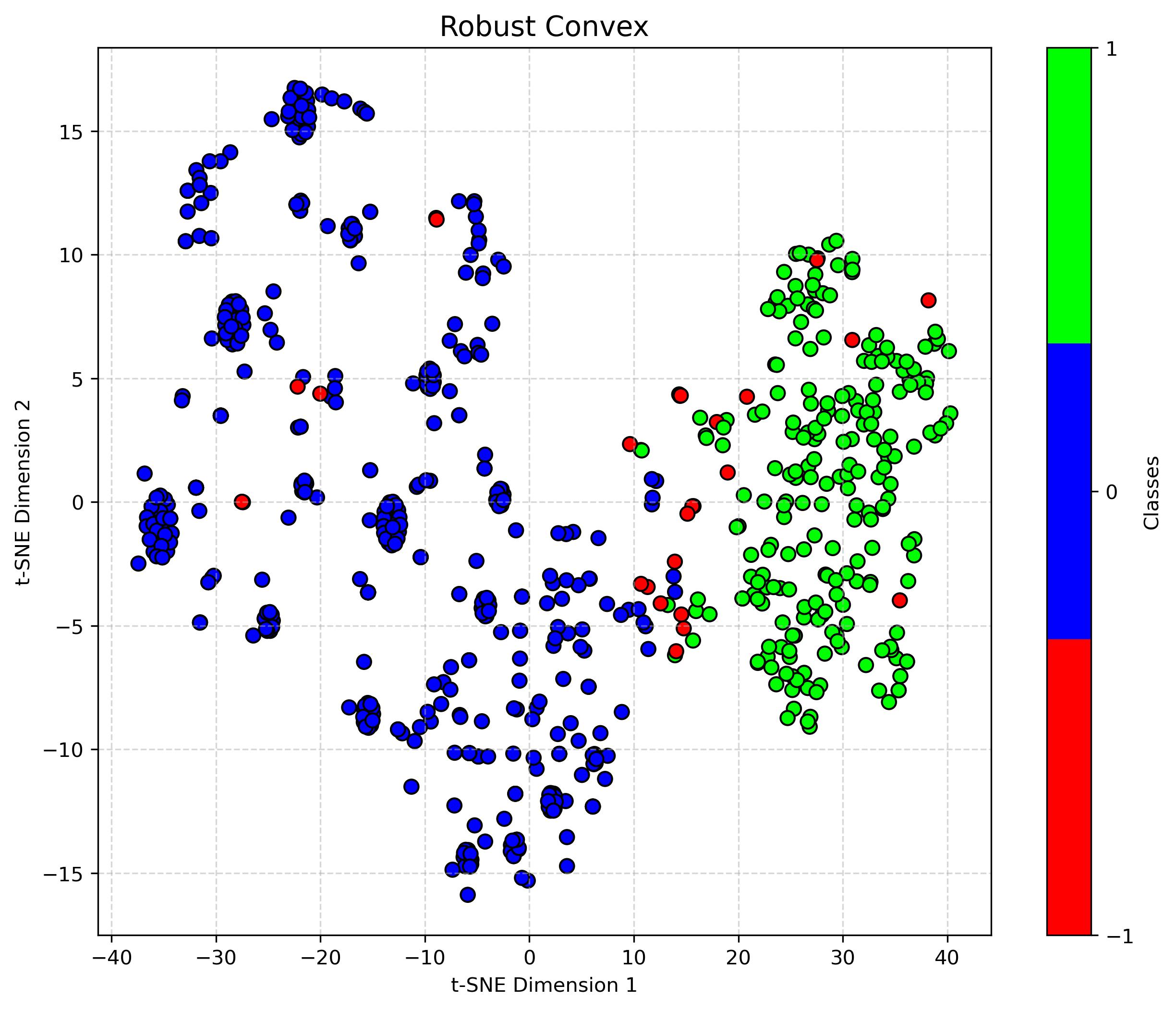}}
    \subfloat[RBKM]{\includegraphics[width=0.24\textwidth]{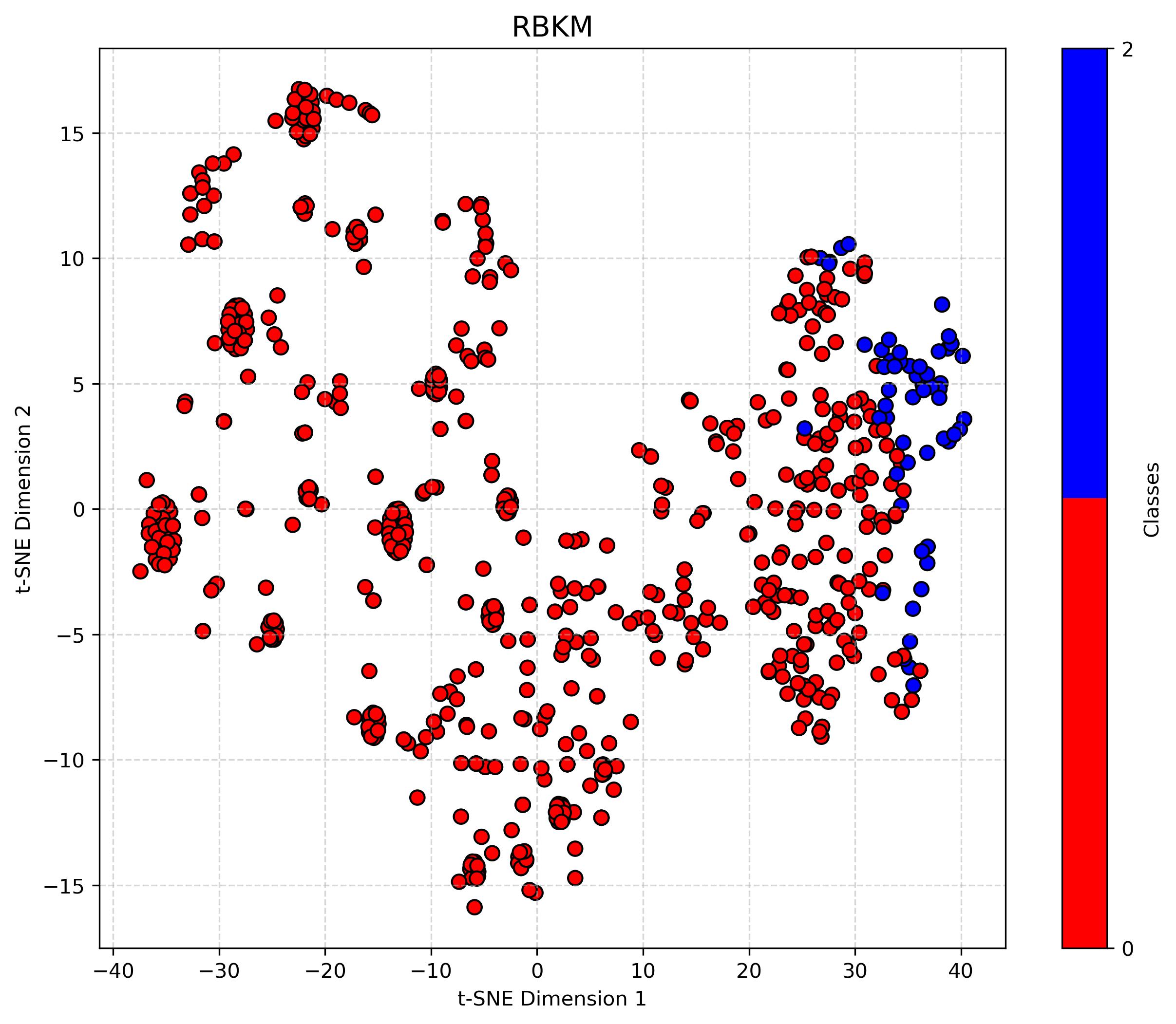}}
    \subfloat[COMET]{\includegraphics[width=0.24\textwidth]{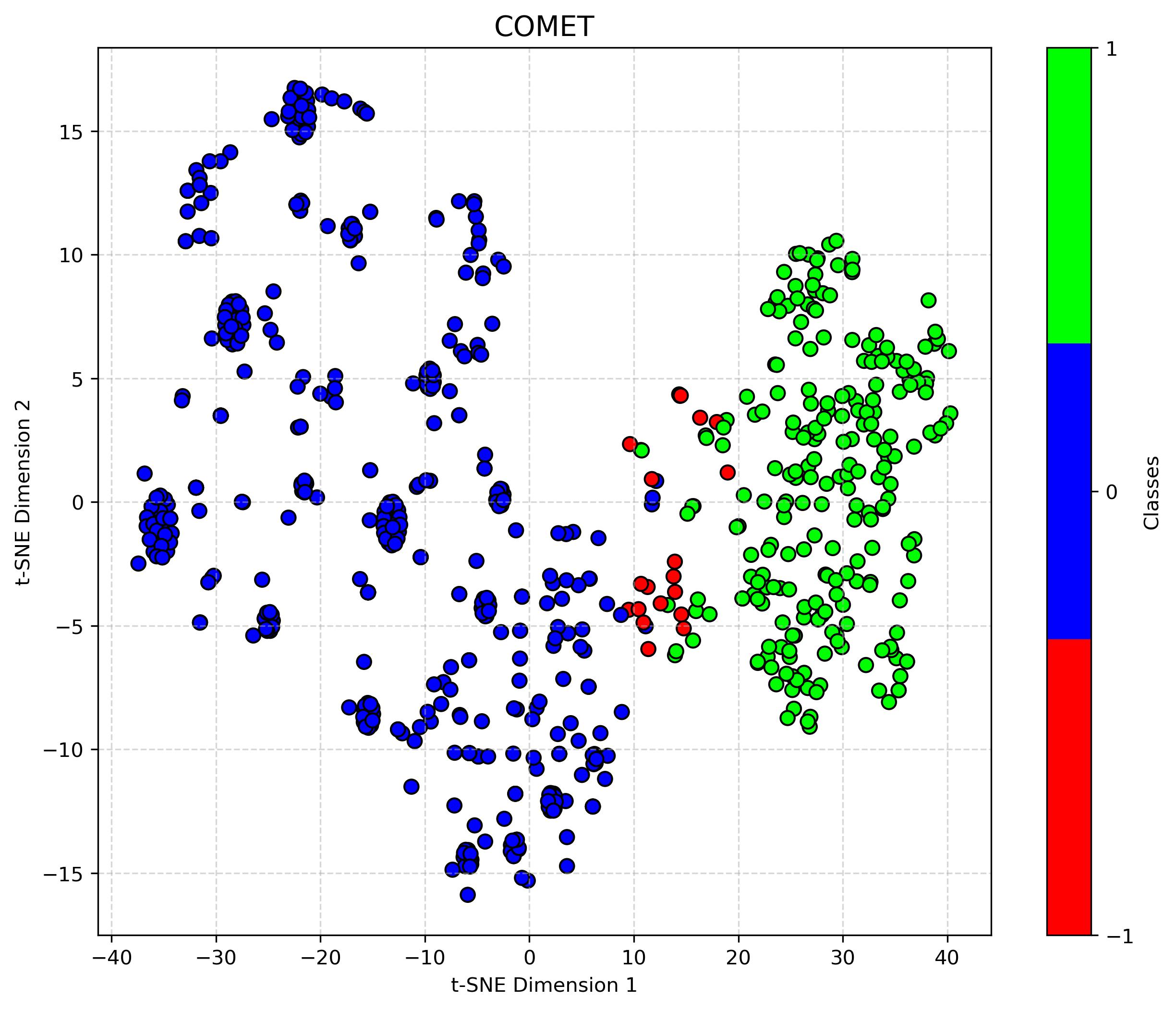}}
    
    \caption{\textit{t}-SNE plot of the Wisconsin dataset after clustering under various algorithms at 10$\%$ noise.}
    \label{fig:TSNE Wisconsin}
\end{figure*}

\begin{figure*}[!htb]
    \centering
    
    \subfloat[Dataset]{\includegraphics[width=0.24\textwidth]{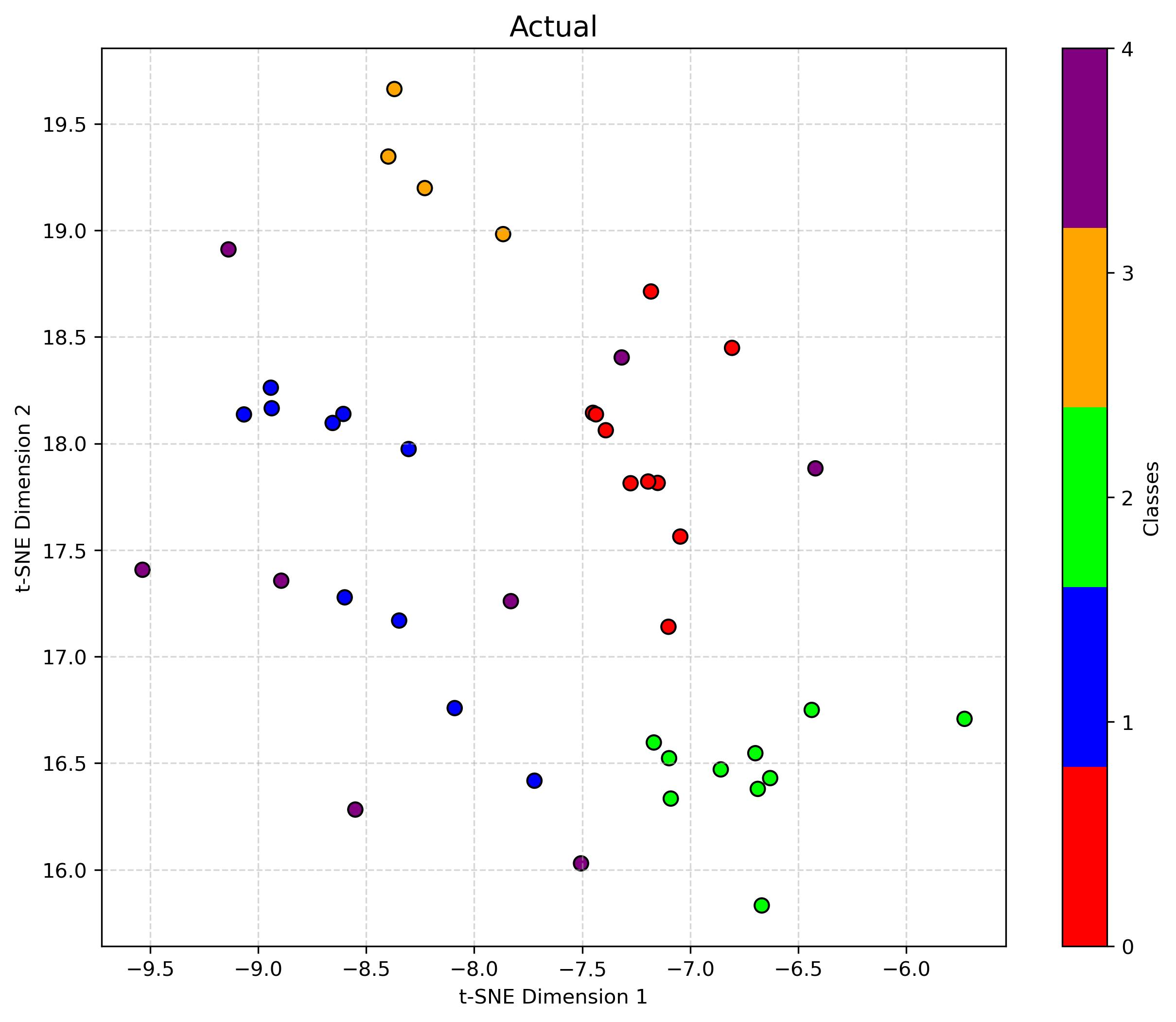}}
    \subfloat[KM]{\includegraphics[width=0.24\textwidth]{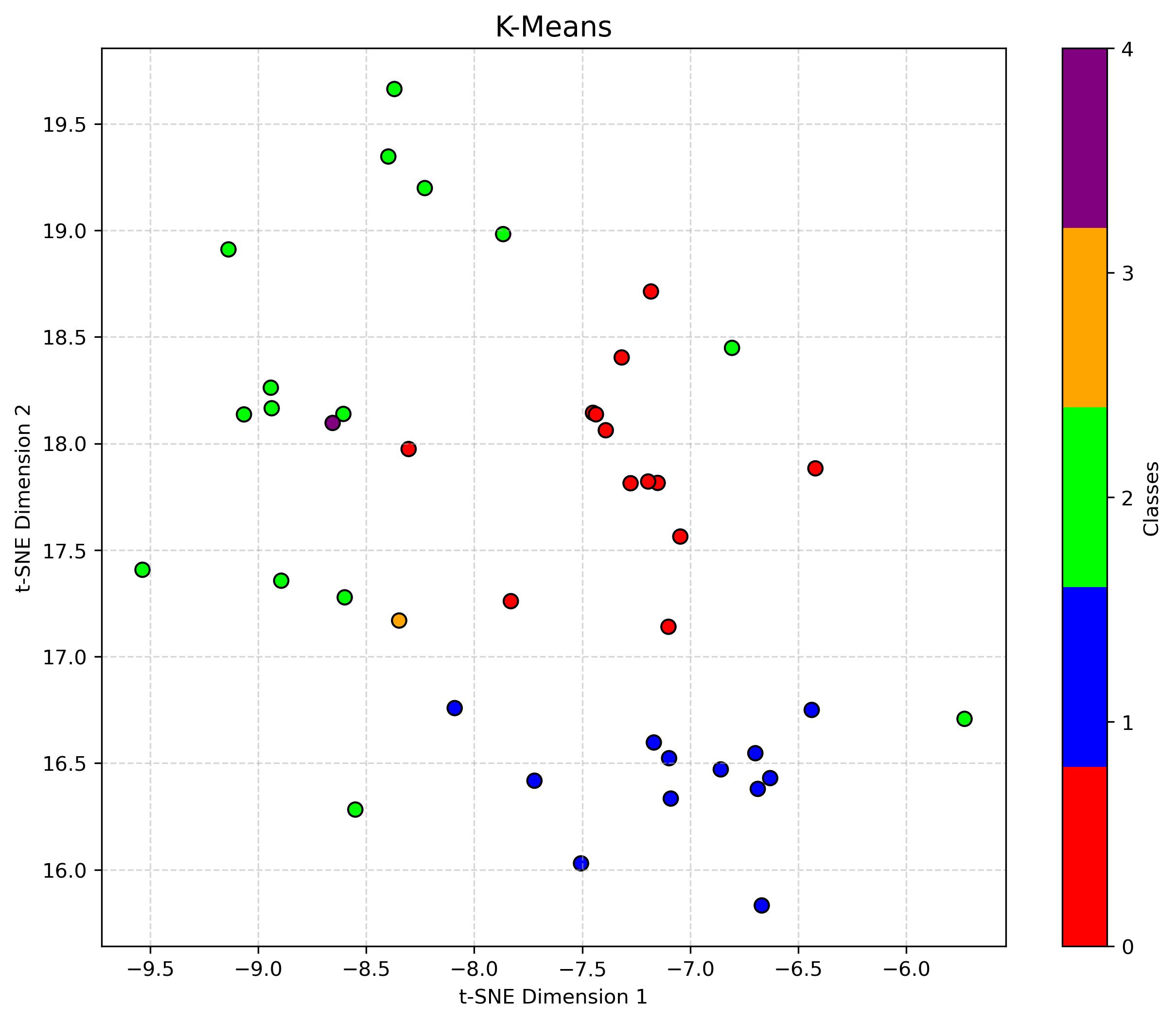}} 
    \subfloat[MKM]{\includegraphics[width=0.24\textwidth]{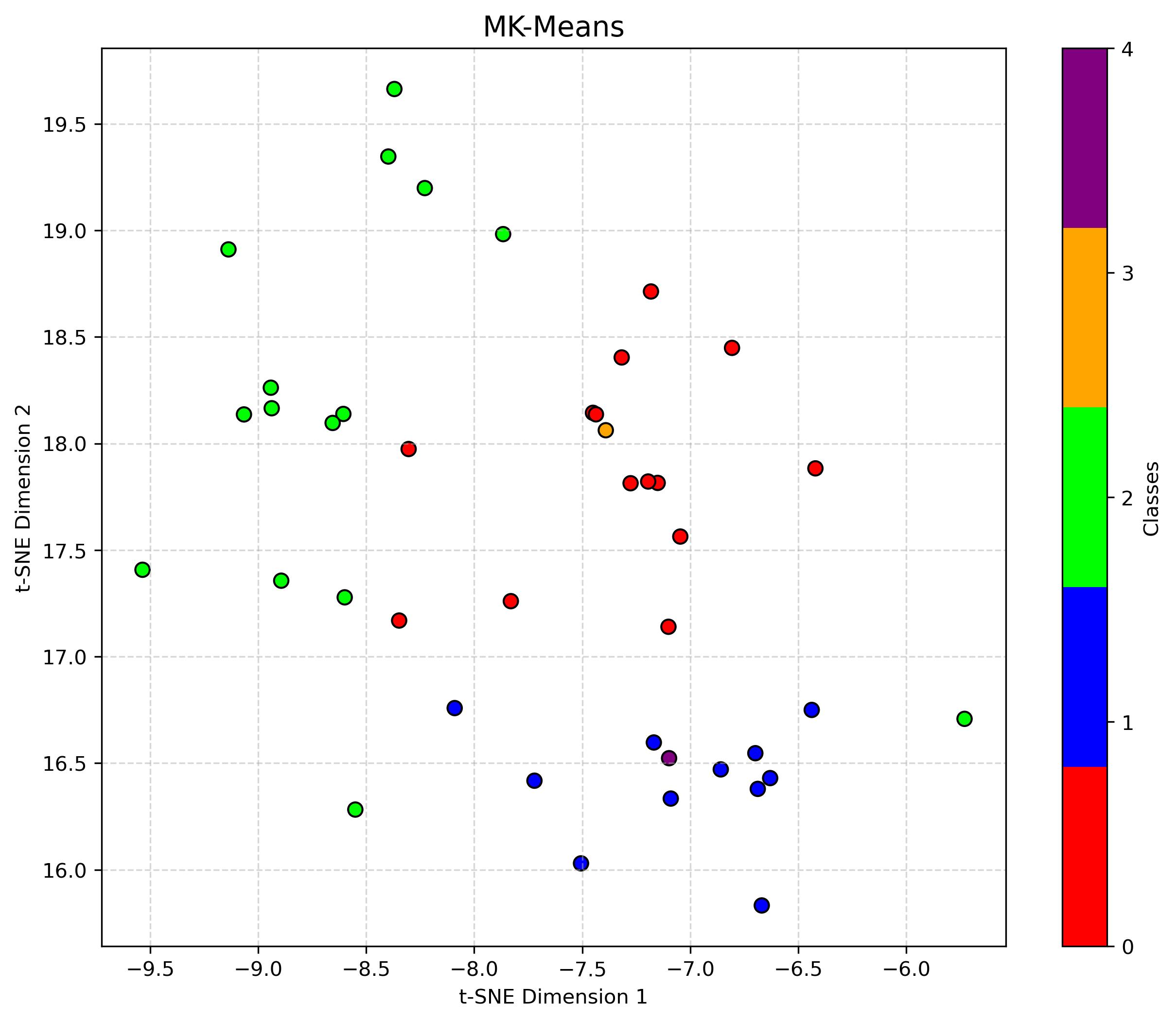}} 
    \subfloat[CC]{\includegraphics[width=0.24\textwidth]{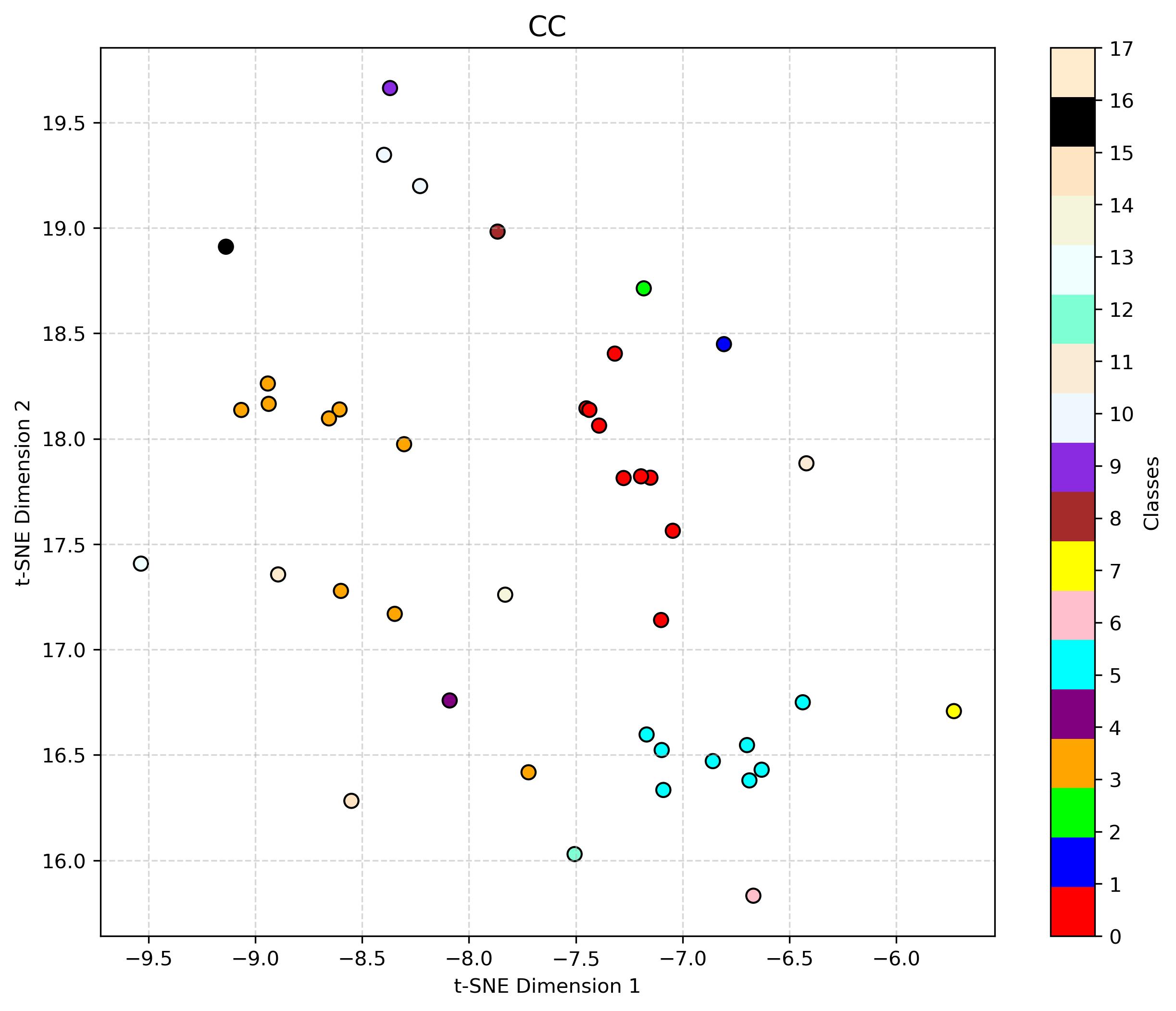}} \\[2ex]
    \subfloat[RCC]{\includegraphics[width=0.24\textwidth]{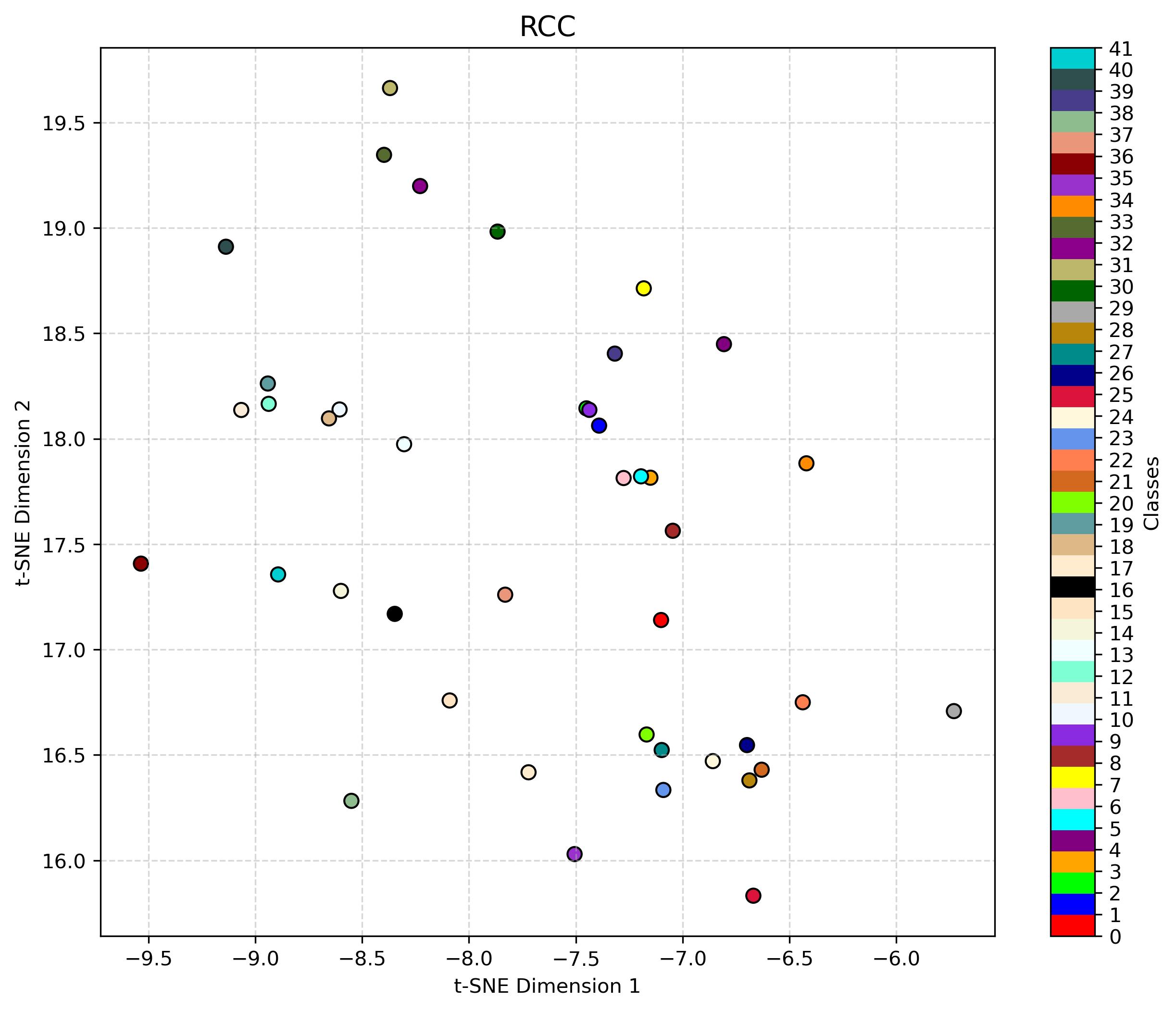}} 
    \subfloat[RConv]{\includegraphics[width=0.24\textwidth]{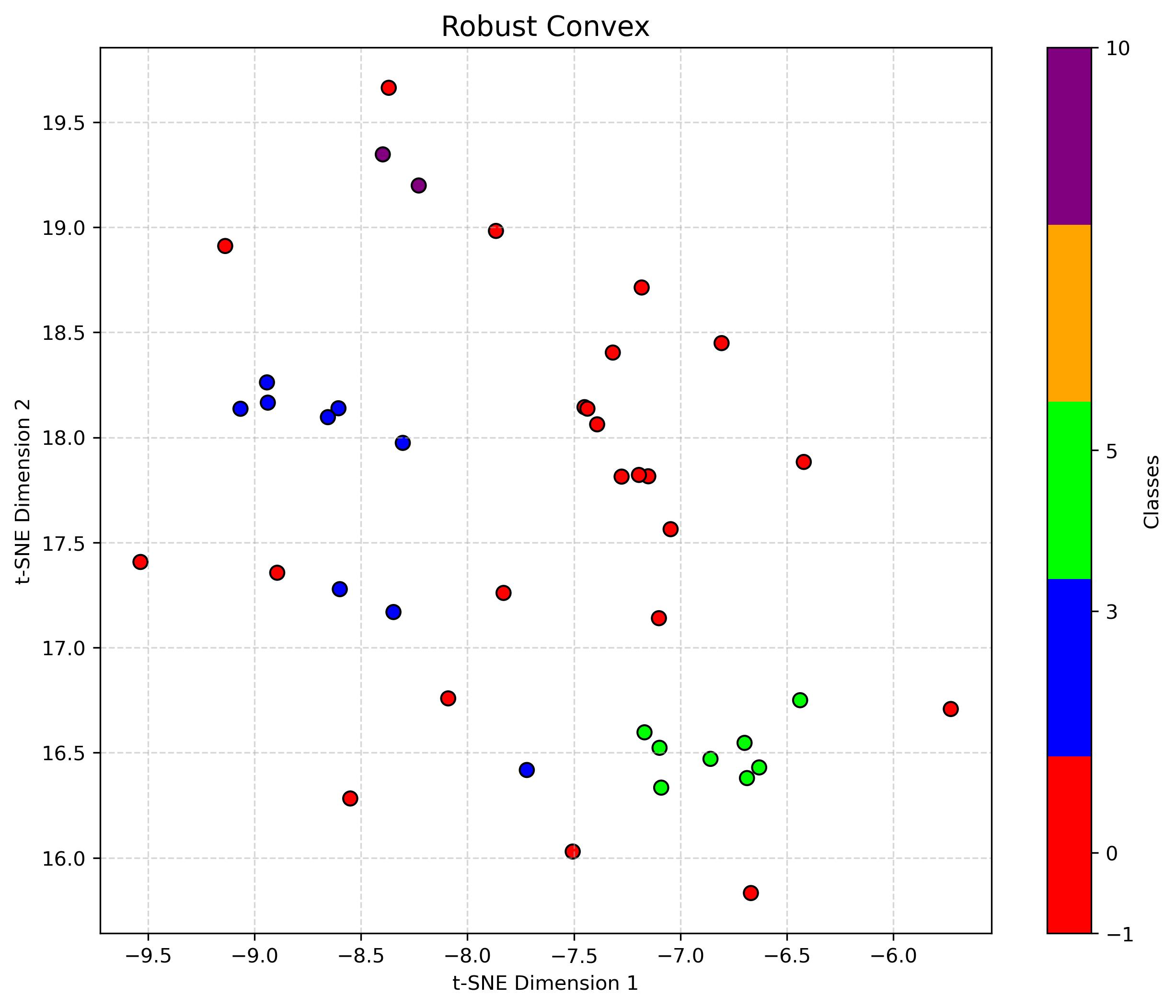}} 
    \subfloat[RBKM]{\includegraphics[width=0.24\textwidth]{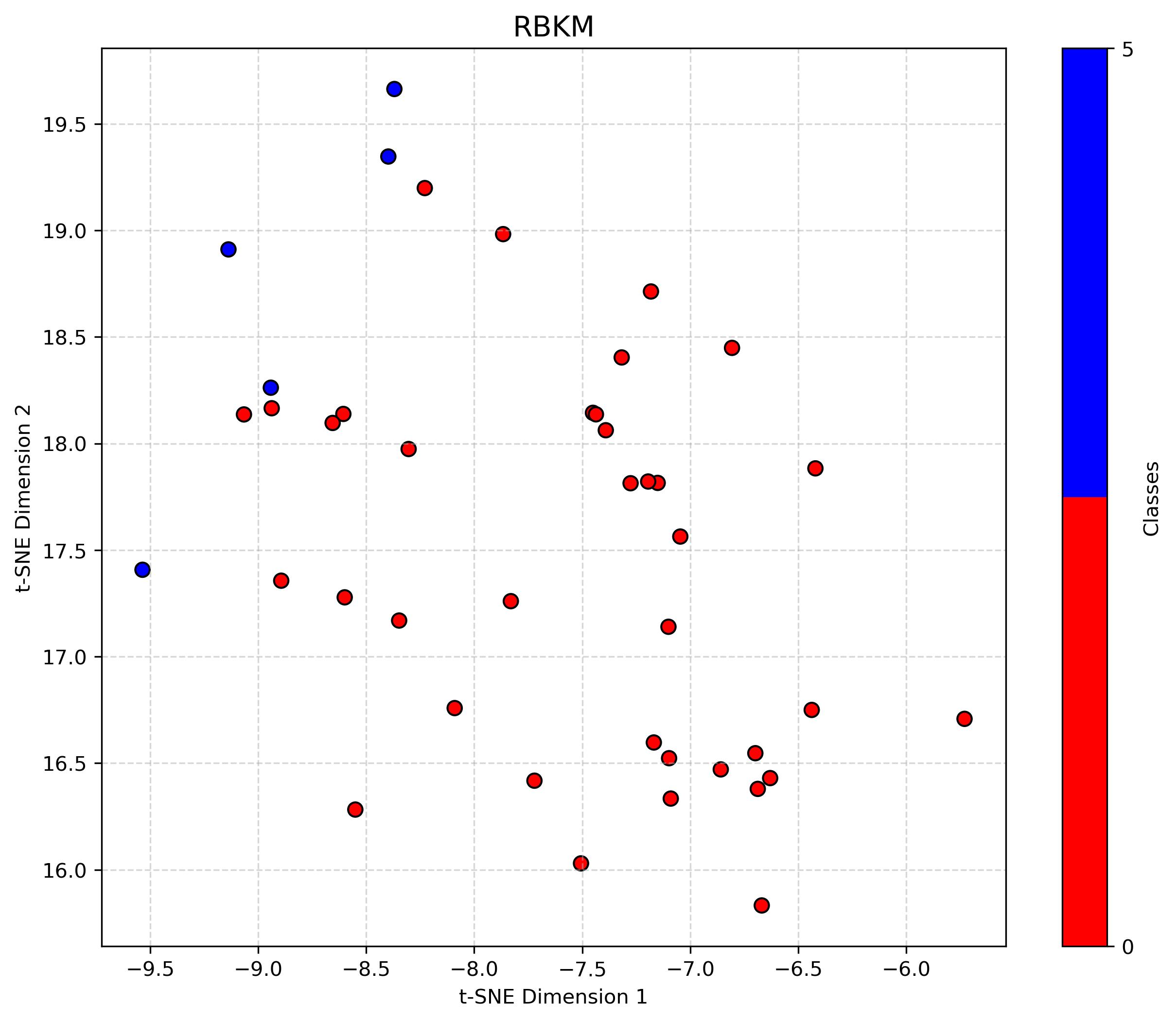}}
    \subfloat[COMET]{\includegraphics[width=0.24\textwidth]{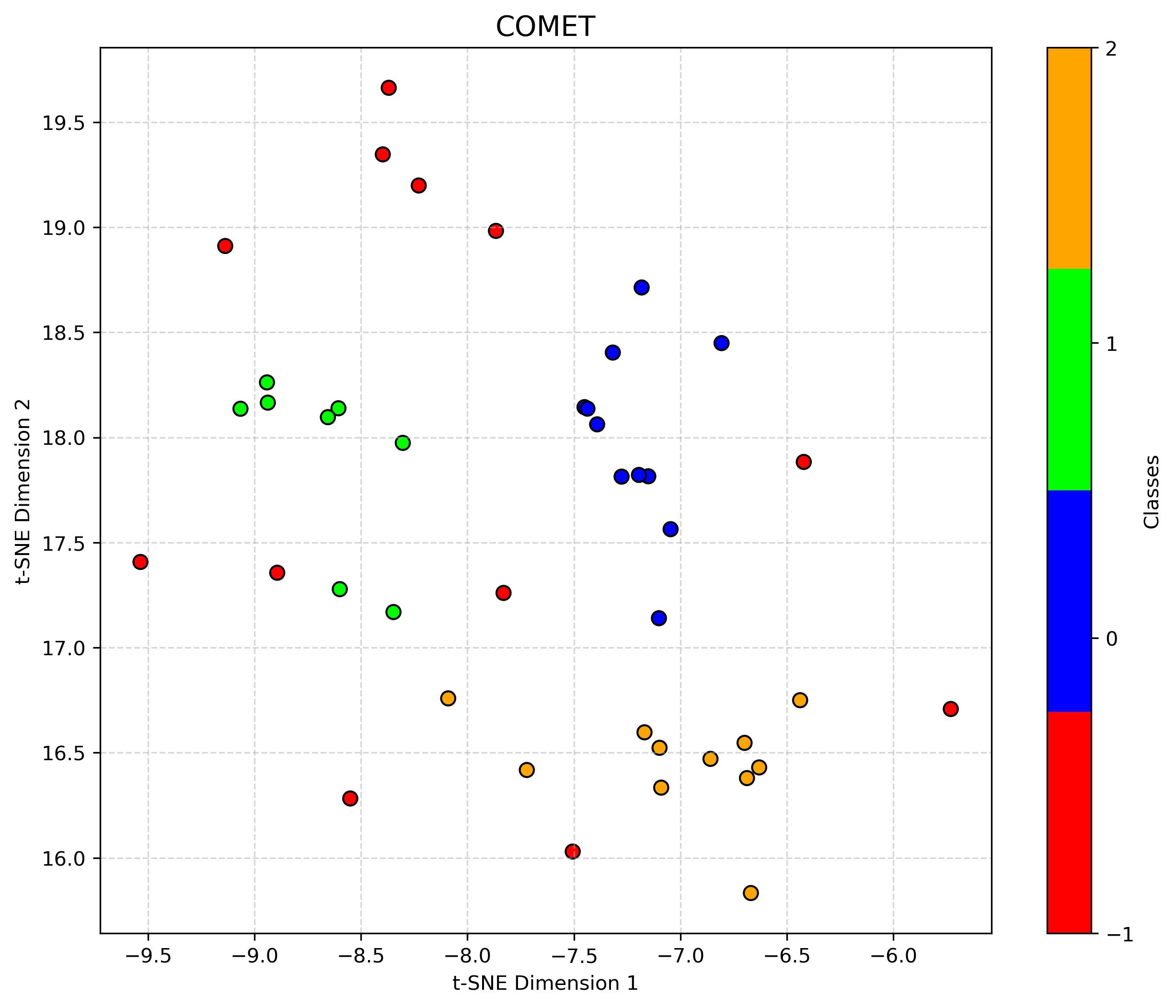}}
    
    \caption{\textit{t}-SNE plot of the Brain dataset after clustering under various algorithms.}
    \label{fig:TSNE Brain}
\end{figure*}

\subsection{Wilcoxon-Rank Sum Test \label{app:p-value_full_table}}
To assess whether the ARI and AMI scores produced by our algorithm are \emph{significantly higher} than those of selected baseline clustering methods, we employ the Wilcoxon Rank-Sum test. The ARI and AMI scores are computed for various algorithms on different $10\%$ contaminated datasets, and the corresponding $p$-values of the Wilcoxon Rank-Sum test are estimated using \emph{Monte Carlo simulation}. The results are presented in Table~\ref{ari_table} and \ref{ami_table}. For any value in the table with $p \leq 0.05$, we consider the difference to be \emph{statistically significant}, indicating that our algorithm produces higher ARI and AMI scores under the tested conditions.

Based on the values in the Table \ref{ari_table} and \ref{ami_table}, we observe that the performance measures for our algorithm are significantly higher than those for other algorithms. Our closest competitor is RConv, which performs better than us on 4 datasets out of the 15 we have tested on.

\begin{table*}[!htb]
\centering
\small

\begin{tabular}{lcccccc}
\hline
\scriptsize
\textbf{Dataset} & \textbf{CC} & \textbf{RCC} & \textbf{RConv} & \textbf{KM} & \textbf{MKM} & \textbf{RBKM} \\ \hline

Iris & 0.9409 & 2.32E-06 & 0.000385 & 0.97549 & 0.99922 & 0.998 \\  
Newthyroid & 3.35E-06 & 1.96E-06 & 1.13E-06 & 1.61E-09 & 1.42E-09 & 1.48E-09 \\  
Ecoli & 0.7399 & 1.37E-06 & 0.9967 & 2.52E-09 & 2.52E-09 & 1.31E-09 \\  
Wisconsin & 3.35E-06 & 3.72E-06 & 1.78E-06 & 2.79E-05 & 1.41E-07 & 1.27E-09 \\  
Wine & 0.0001 & 4.49E-07 & 1.64E-05 & 0.18801 & 0.001763 & 1.63E-09 \\  
Zoo & 3.88E-05 & 2.18E-06 & 0.9984 & 5.96E-09 & 8.90E-07 & 1.49E-09 \\  
Dermatology & 3.35E-06 & 3.62E-07 & 2.6E-05 & 1.05E-05 & 2.07E-07 & 1.61E-09 \\  
Brain & 0.9993 & 2.41E-07 & 4.96E-05 & 1.98E09 & 1.72E-09 & 1.55E-10 \\  
Lung & 3.35E-06 & 0.0159 & 5.28E-07 & 0.913832 & 0.9999 & 5.29E-17 \\  
Lymphoma & 3.35E-06 & 4.44E-07 & 1.31E-06 & 0.03995 & 0.9997 & 1.56E-09 \\  
Coil-20 & 3.55E-06 & 1.17E-07 & 0.999 & 1.74E-09 & 1.73E-09 & 5.35E-15 \\  
WDBC & 0.9949 & 4.46E-07 & 0.9895 & 9.09E-21 & 9.09E-21 & 0.00126 \\  
Lung-discrete & 3.35E-06 & 3.50E-06 & 9.7E-07 & 2.04E-09 & 6.03E-09 & 1.27E-09 \\  
ORLRaw10P & 3.35E-06 & 1.17E-07 & 3.55E-07 & 1.65E-09 & 1.63E-09 & 1.21E-09 \\  
Lymphoma (micro) & 0.5 & 0.9999 & 0.000241 & 1.67E-08 & 9.58E-08 & 5.7E-16 \\ \hline

\end{tabular}
\caption{COMET vs Other Algorithms (ARI values only)}
\label{ari_table}
\end{table*}

\begin{table*}[htb!]
\centering
\small

\begin{tabular}{lcccccc}
\hline
\scriptsize
\textbf{Dataset} & \textbf{CC} & \textbf{RCC} & \textbf{RConv} & \textbf{KM} & \textbf{MKM} & \textbf{RBKM} \\ \hline

Iris & 0.9999 & 2.32E-06 & 0.00029 & 0.58855 & 0.9992 & 0.9669 \\  
Newthyroid & 3.35E-06 & 1.96E-06 & 7.47E-05 & 1.61E-09 & 1.42E-09 & 1.5E-09 \\  
Ecoli & 0.9999 & 1.37E-06 & 0.9997 & 2.52E-09 & 2.52E-09 & 1.31E-09 \\  
Wisconsin & 3.35E-06 & 3.72E-06 & 1.78E-06 & 0.004352 & 9.95E-07 & 1.27E-09 \\  
Wine & 6.85E-06 & 4.49E-07 & 1.64E-05 & 0.17594 & 0.000765 & 1.63E-09 \\  
Zoo & 5.15E-06 & 2.18E-06 & 0.4583 & 3.24E-09 & 1.44E-07 & 1.49E-09 \\  
Dermatology & 3.35E-06 & 3.62E-07 & 2.6E-05 & 0.000254 & 1.65E-06 & 1.61E-09 \\  
Brain & 3.35E-06 & 2.41E-07 & 4.96E-05 & 1.73E-09 & 1.72E-09 & 1.55E-10 \\  
Lung & 3.35E-06 & 1.87E-06 & 5.28E-07 & 0.83273 & 0.9847 & 5.3E-17 \\  
Lymphoma & 3.35E-06 & 4.44E-07 & 0.0921 & 0.0508 & 0.9996 & 1.6E-09 \\  
Coil-20 & 3.54E-06 & 1.17E-07 & 0.999 & 1.74E-09 & 1.73E-09 & 5.6E-15 \\  
WDBC & 0.9949 & 4.46E-07 & 0.994 & 1.79E-11 & 9.09E-21 & 0.8603 \\  
Lung-discrete & 3.35E-06 & 3.50E-06 & 9.7E-07 & 2.34E-09 & 2.51E-07 & 1.27E-09 \\  
ORLRaw10P & 3.35E-06 & 1.17E-07 & 3.55E-07 & 1.65E-09 & 1.63E-09 & 1.21E-09 \\  
Lymphoma (micro) & 0.5 & 0.9999 & 0.000241 & 1.06E-07 & 1.28E-06 & 5.7E-16 \\ \hline

\end{tabular}
\caption{COMET vs Other Algorithms (AMI values only)}
\label{ami_table}
\end{table*}

\subsection{Ablation Study} \label{ablation_study}

\subsubsection{Gamma}
\label{gamma_app}

\begin{figure}[!htb]
    \centering
    \includegraphics[width=0.8\linewidth]{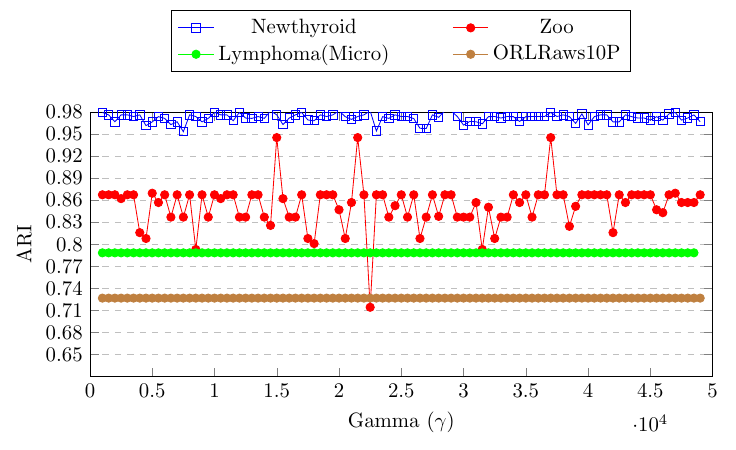}
    \caption{ARI for different values of $\gamma$ after adding 10$\%$ noise}
    \label{fig:gamma}
\end{figure}

We will do a sensitivity analysis of the hyperparameter $\gamma$ on the performance of our algorithm
For the convex clustering cost function, once a graph is created using $k-$NN for some k, the $u_i$'s converge to the mean of the connected components if the value of $\gamma$ is high. During our test runs, we also took a large value of $\gamma$, say 50000. We observe that, for our algorithm, the value of $\gamma$ doesn't influence the final result, provided it is large (say, $\geq$ 1000). Refer to the Figure \ref{fig:gamma} for the plot.


\subsubsection{Ablation study on Wisconsin Dataset}
\label{abl_wisc_app}

We will study the fluctuations in our performance measures with varying hyperparameters. 
We will turn to the \textbf{Wisconsin Breast Cancer} dataset for our ablation studies. We have shown in section \ref{gamma_app} that the hyperparameter $\gamma$ does not have much influence on the final clustering of the dataset. Two hyperparameters, k, which is a hyperparameter for the $k-$NN graph structure, and $\mu$, require tuning to achieve optimal performance of our algorithm.
For our experiments, we are varying k in $\{24,27,30,33,36\}$ , $\mu$ in $[4, 17]$ and p(noise level) in $\{0\%, 5\%, 10\%, 15\%, 20\%\}$.
For each pair (p,k), we are varing $\mu$ and reporting the mean of ARI and AMI. For each noise level, we observe that both the AMI and ARI values increase stochastically with $\mu$ and converge to a value. The fluctuations in ARI increase gradually with noise level for every value of k. Within each noise level however, ARI is most stable for the mid-range of k, which is 27 and 30, indicating that values of k from 27 to 30 have higher stabilities. Same observation can be made for AMI as well. All the fluctuations can be attributed to the randomness in adding noise and the optimization procedure of our objective function. The figures \ref{fig:Ablation_ari_wisc} and \ref{fig:Ablation_ami_wisc} correspond to the variation in ARI and AMI respectively for Wisconsin dataset.

\begin{figure*}[tb]
    \centering
    \begin{subfigure}[b]{0.32\textwidth}
        \centering
        \includegraphics[width=\linewidth]{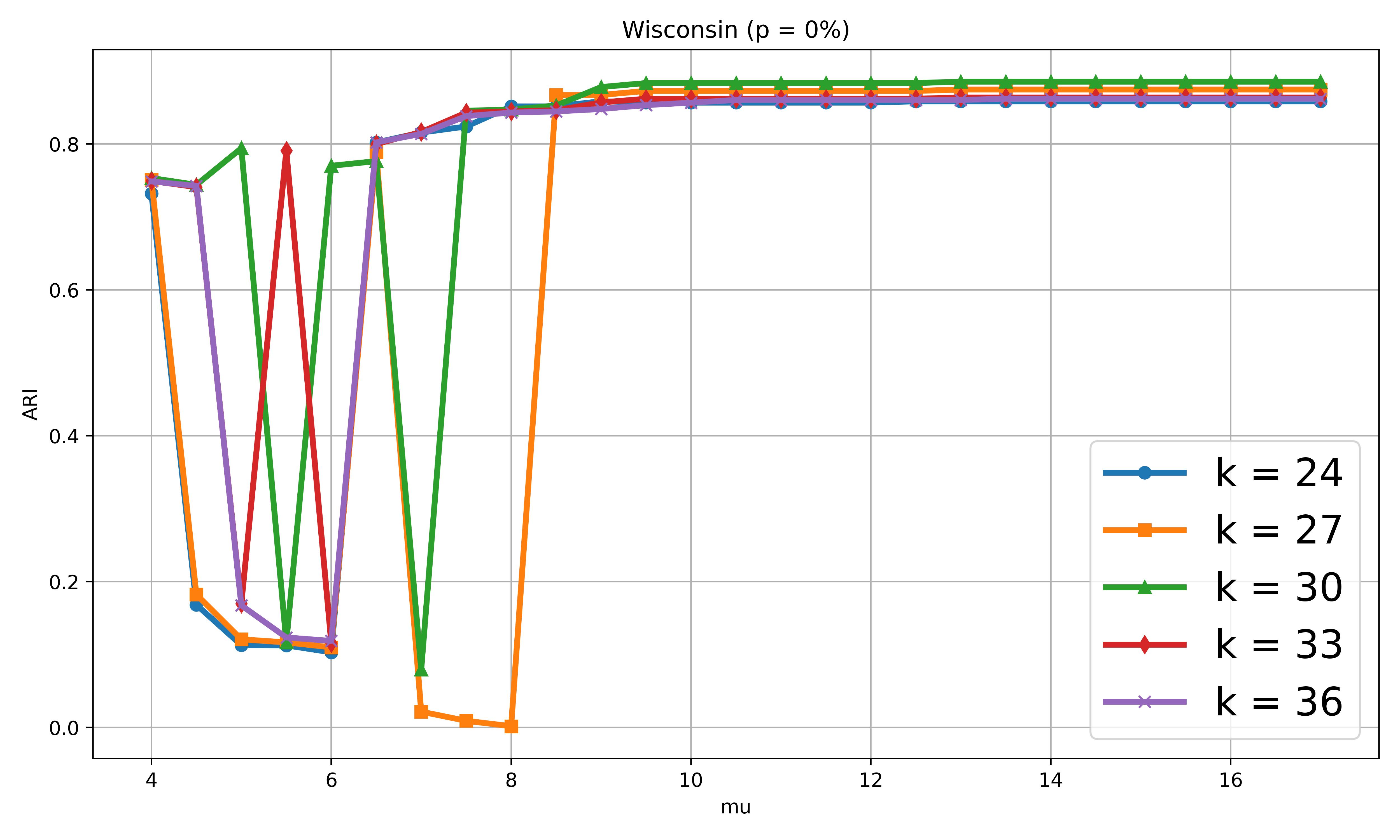}
        \caption{0 $\%$ noise}
    \end{subfigure}
    \hfill
    \begin{subfigure}[b]{0.32\textwidth}
        \centering
        \includegraphics[width=\linewidth]{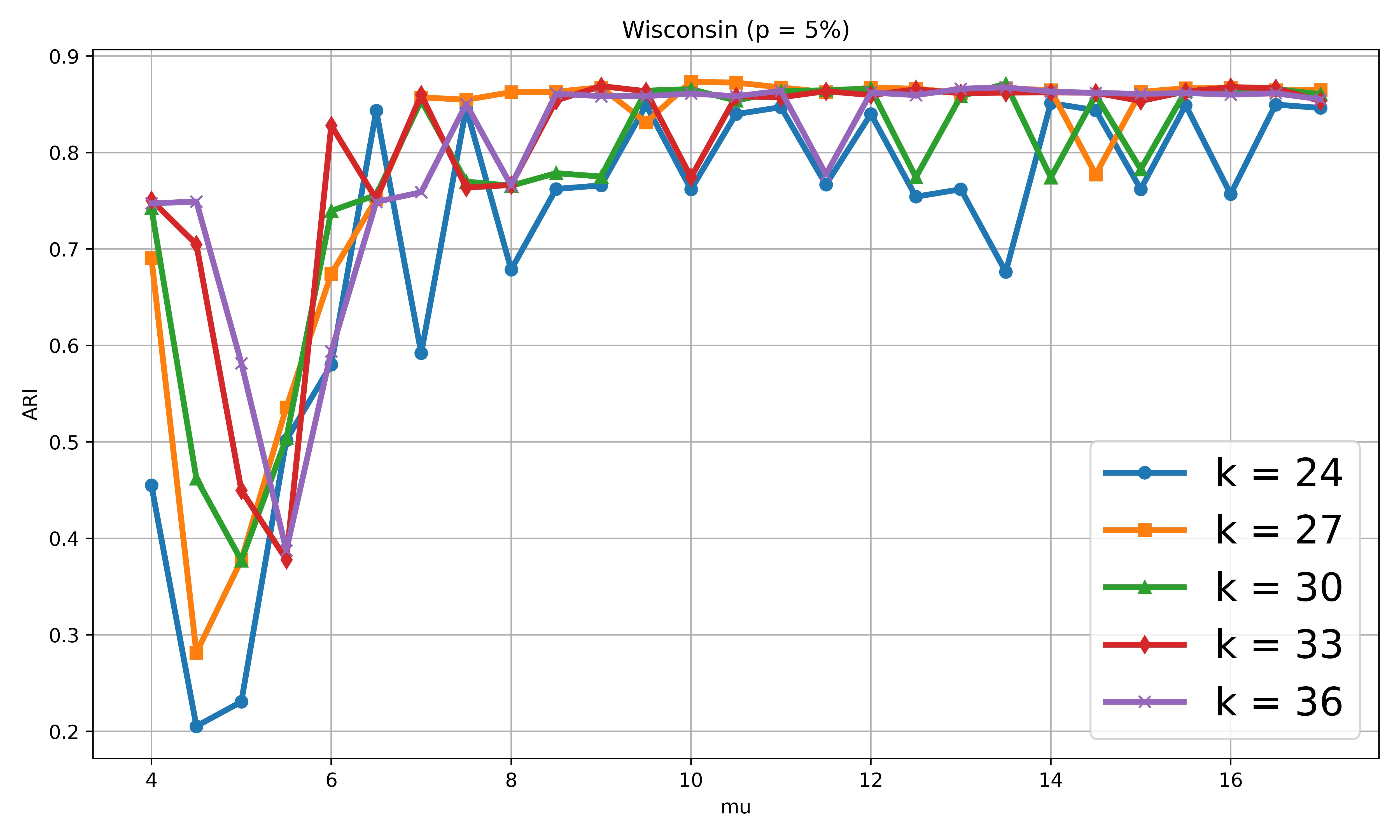}
        \caption{5 $\%$ noise}
    \end{subfigure}
    \hfill
    \begin{subfigure}[b]{0.32\textwidth}
        \centering
        \includegraphics[width=\linewidth]{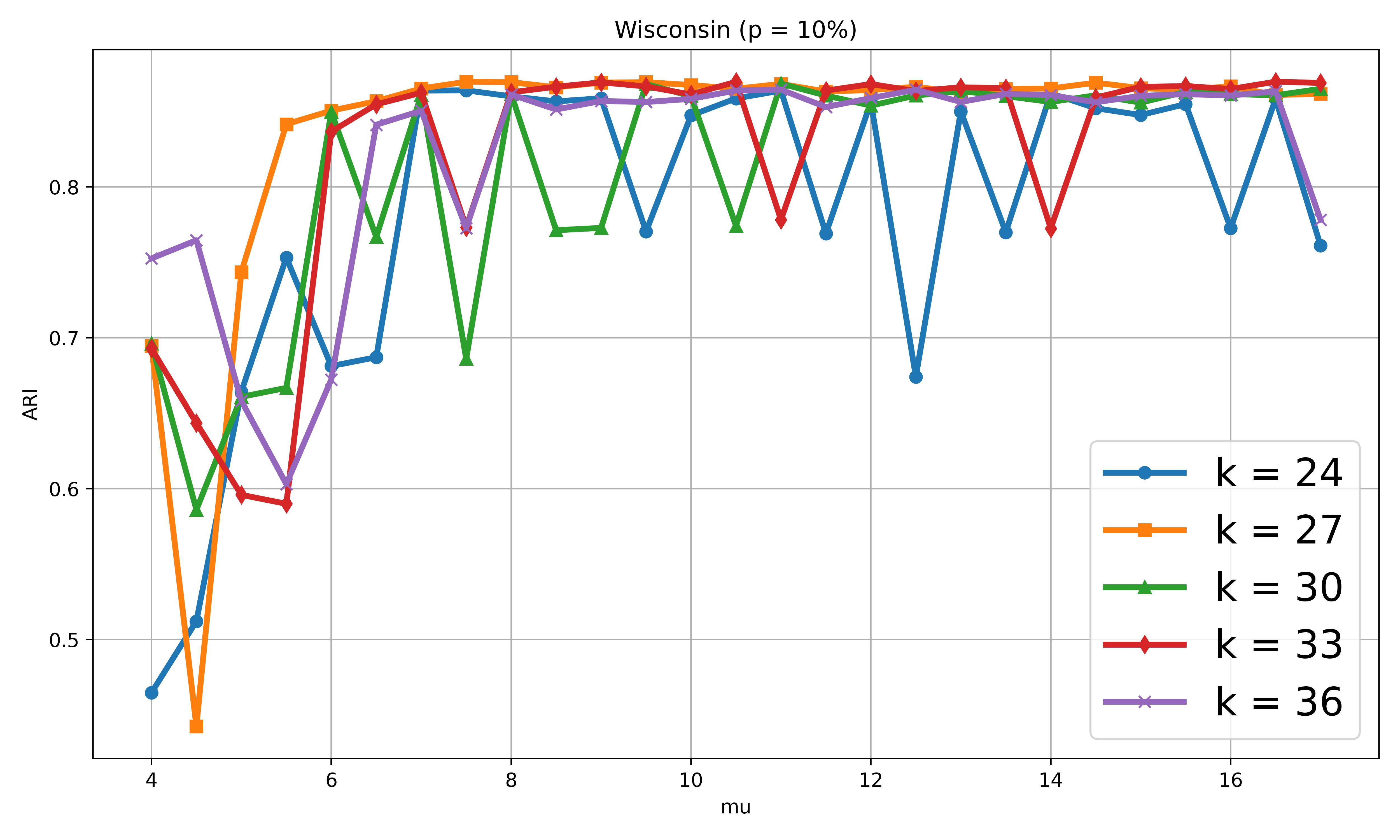}
        \caption{10 $\%$ noise}
    \end{subfigure}
    \vspace{10pt}
    \begin{subfigure}[b]{0.32\textwidth}
        \centering
        \includegraphics[width=\linewidth]{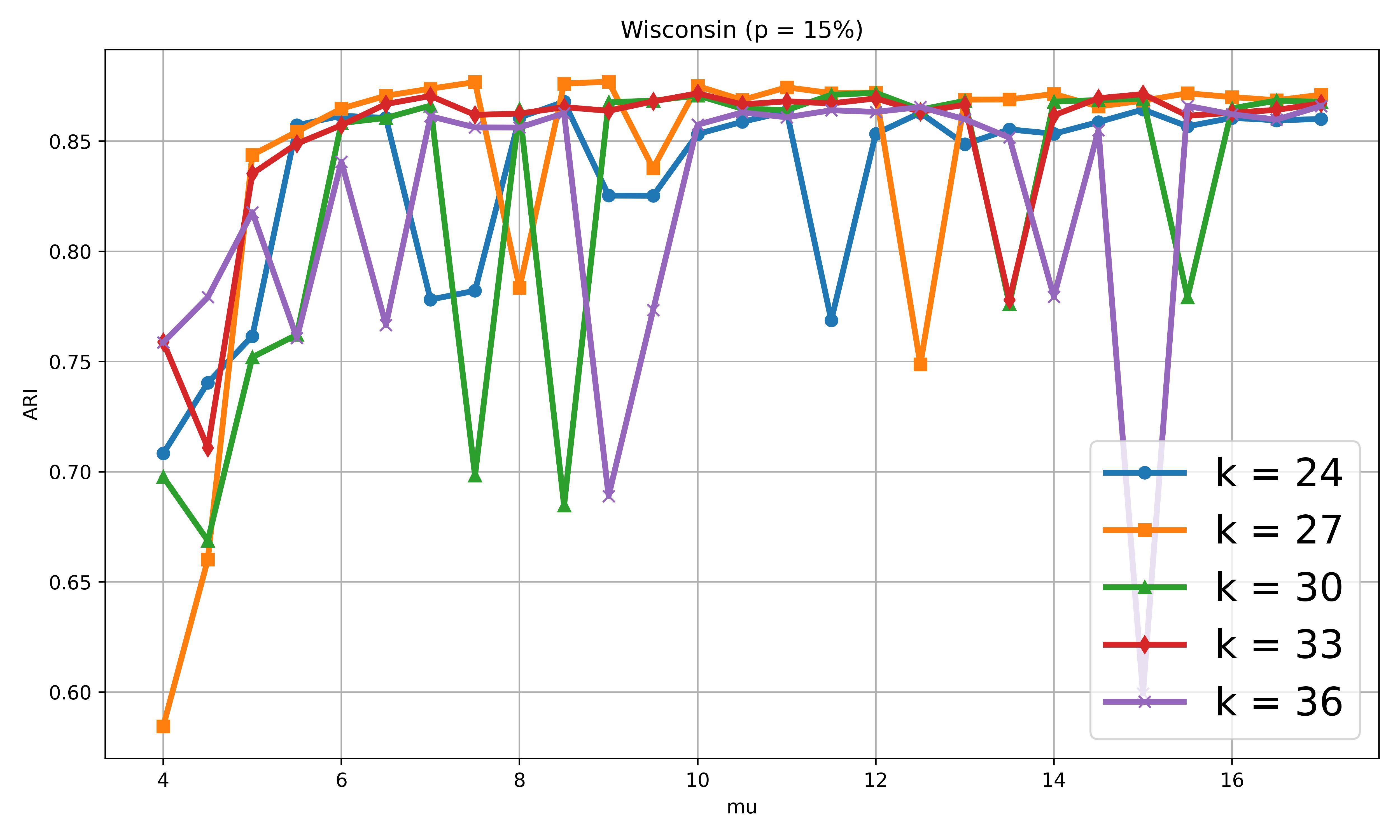}
        \caption{15 $\%$ noise}
    \end{subfigure}
    \begin{subfigure}[b]{0.32\textwidth}
        \centering
        \includegraphics[width=\linewidth]{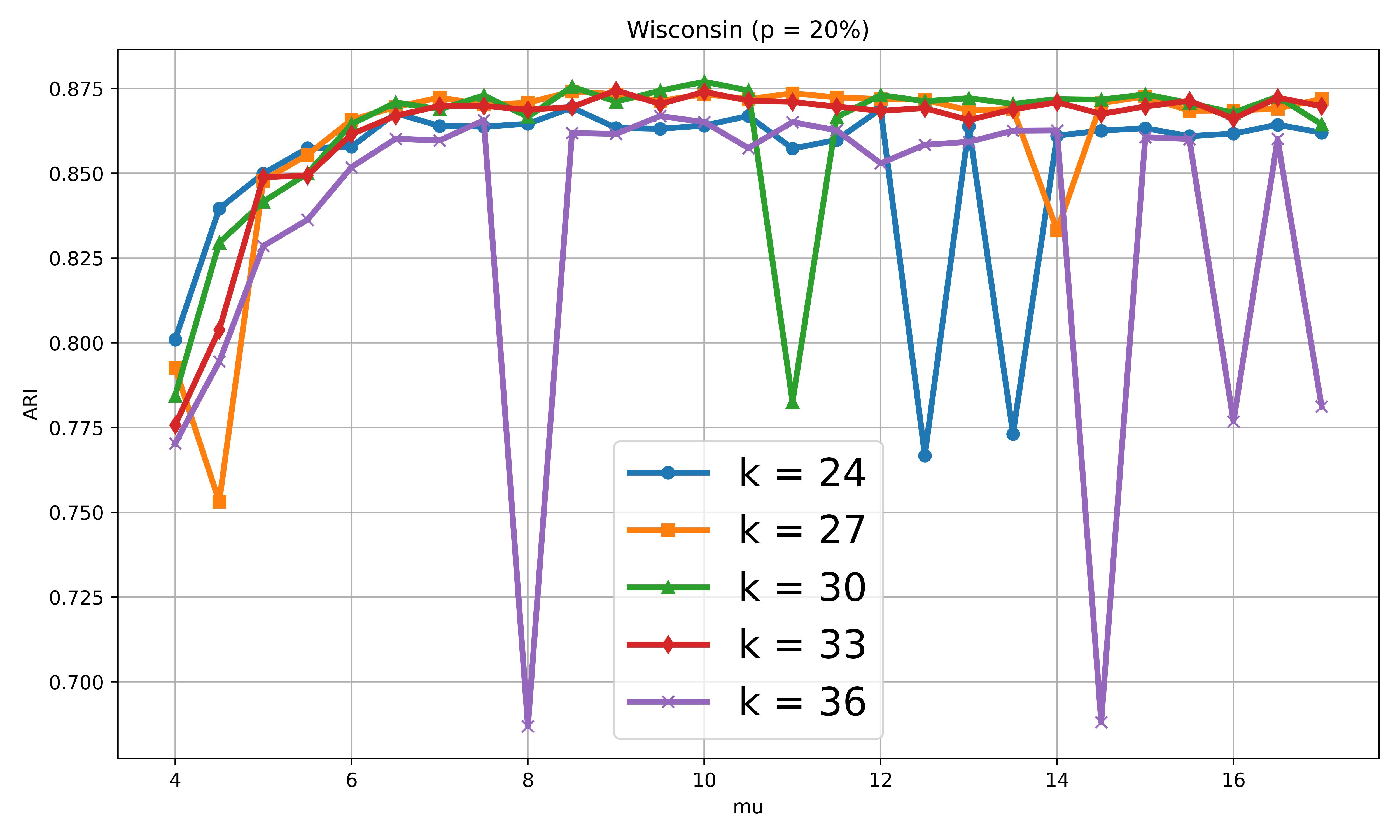}
        \caption{20 $\%$ noise}
    \end{subfigure}

    \caption{Ablation Studies of the ARI Values obtained from the Wisconsin Breast Cancer Dataset. Each subfigure corresponds to a different level of noise introduced into the dataset.}
    \label{fig:Ablation_ari_wisc}
\end{figure*}

\begin{figure*}[tb]
    \centering
    \begin{subfigure}[b]{0.32\textwidth}
        \centering
        \includegraphics[width=\linewidth]{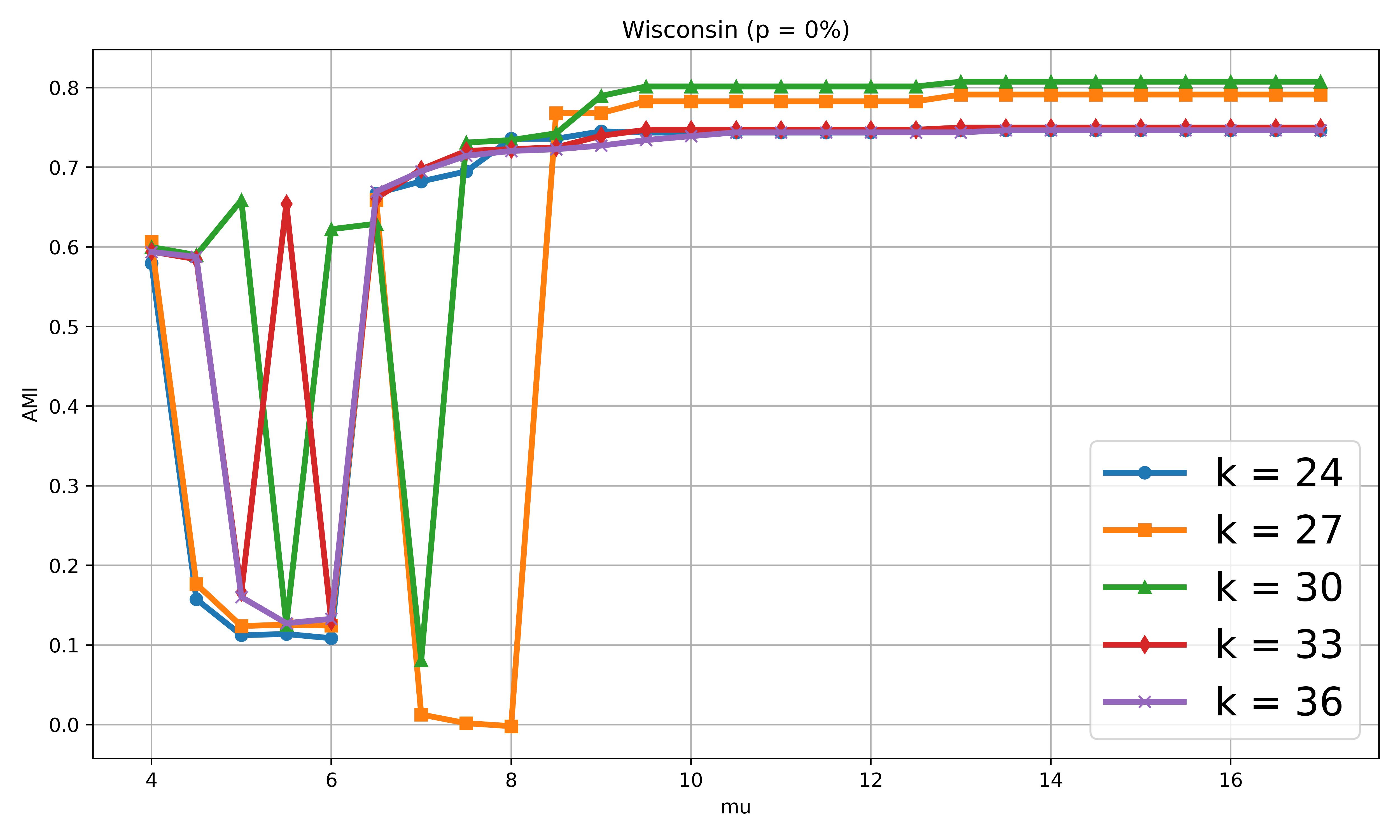}
        \caption{0 $\%$ noise}
    \end{subfigure}
    \hfill
    \begin{subfigure}[b]{0.32\textwidth}
        \centering
        \includegraphics[width=\linewidth]{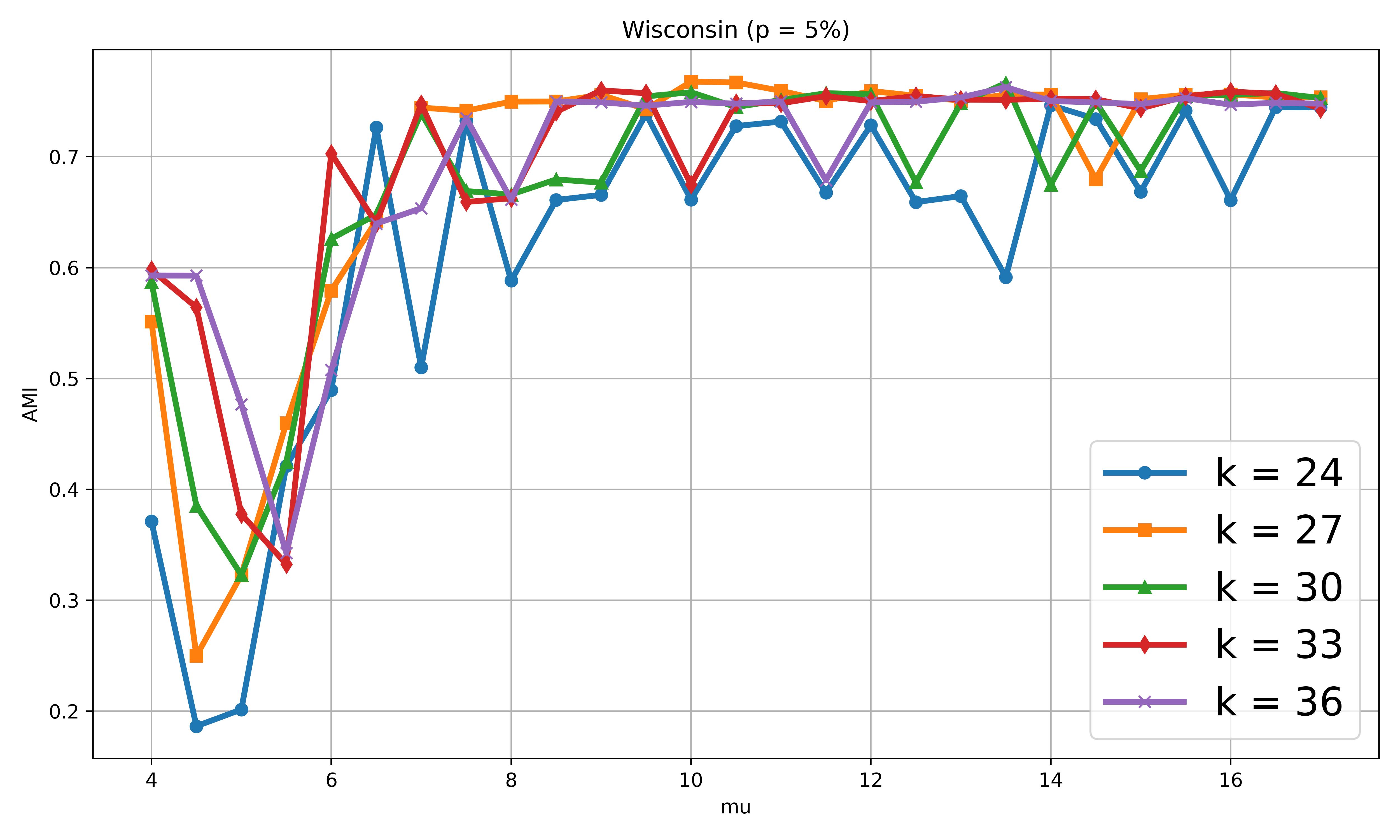}
        \caption{5 $\%$ noise}
    \end{subfigure}
    \hfill
    \begin{subfigure}[b]{0.32\textwidth}
        \centering
        \includegraphics[width=\linewidth]{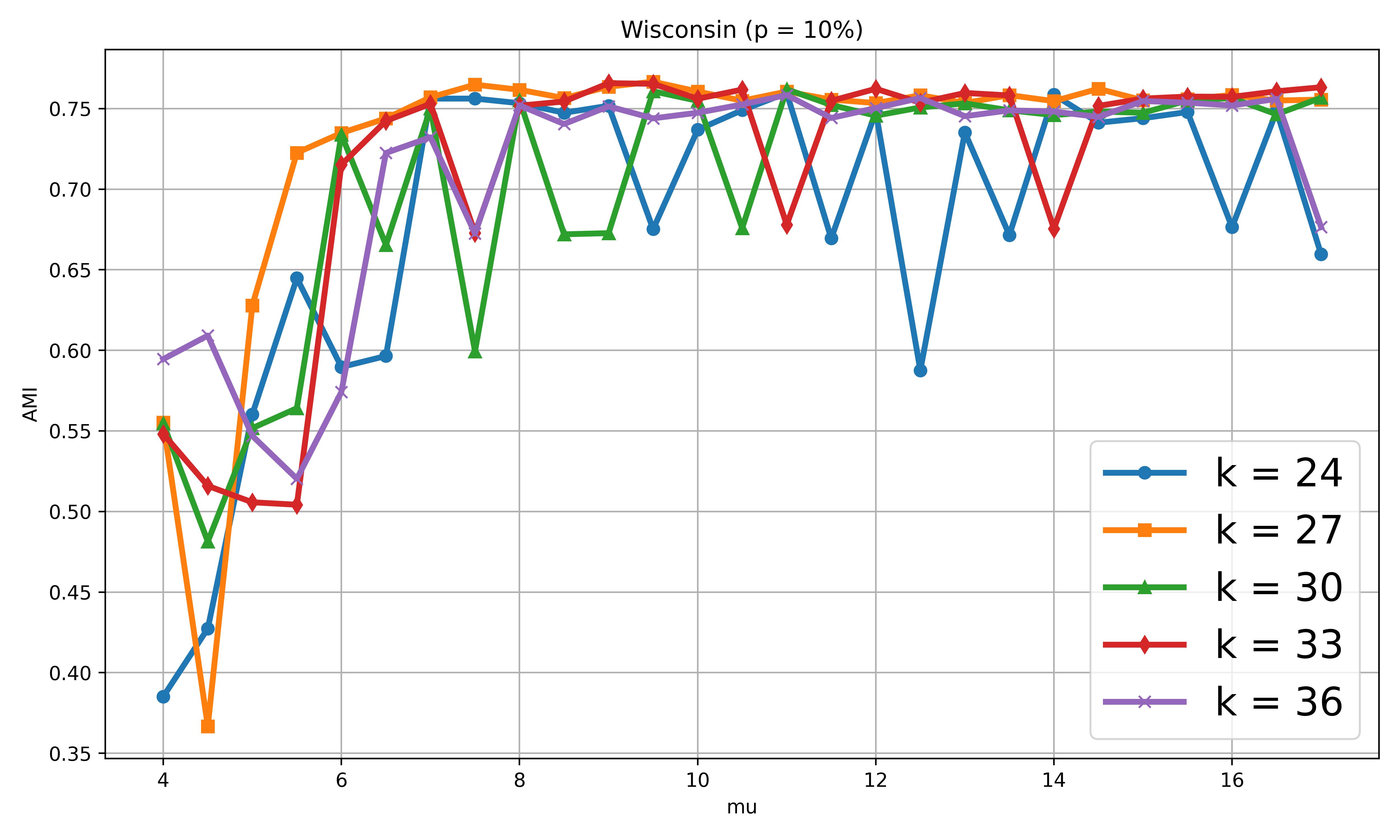}
        \caption{10 $\%$ noise}
    \end{subfigure}
    \vspace{10pt}
    \begin{subfigure}[b]{0.32\textwidth}
        \centering
        \includegraphics[width=\linewidth]{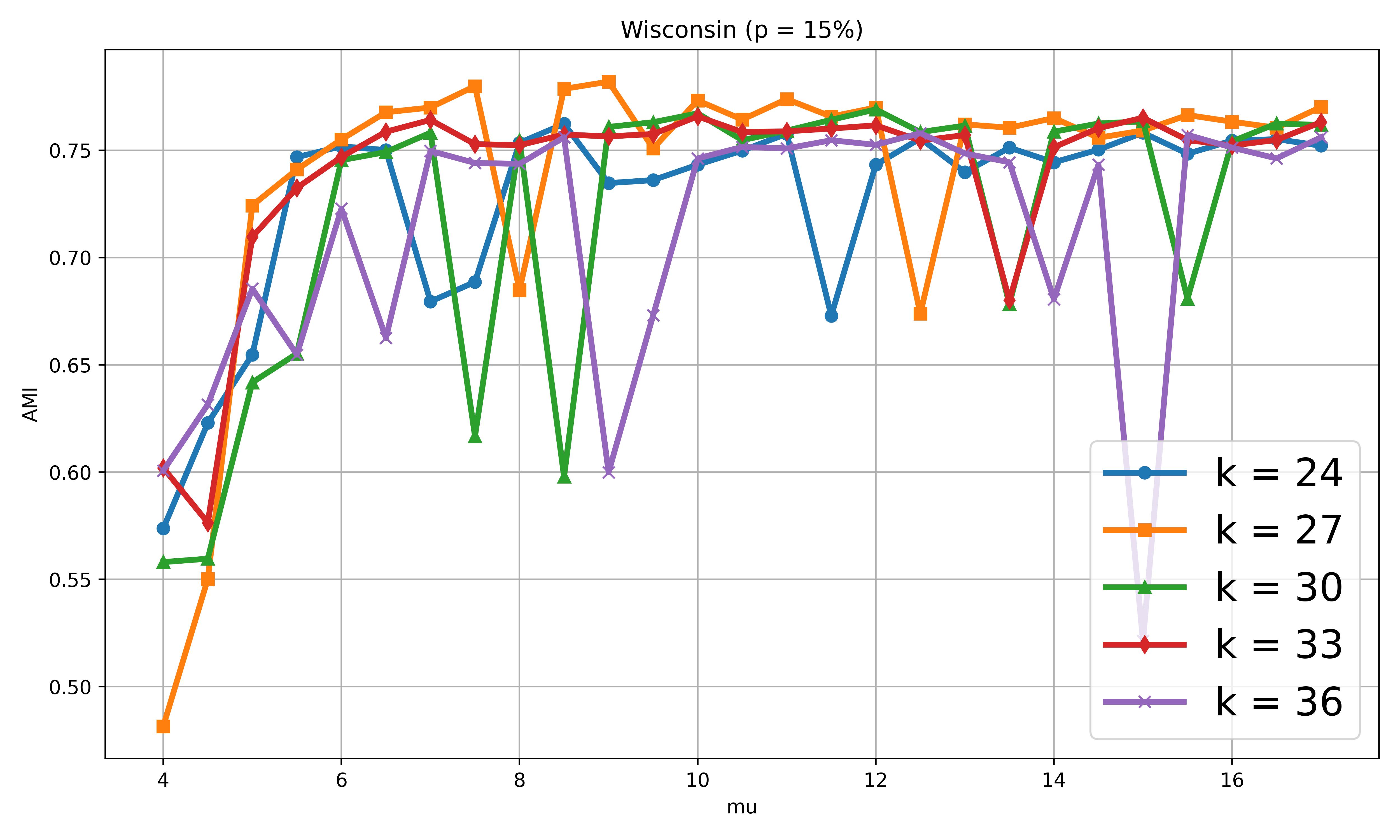}
        \caption{15 $\%$ noise}
    \end{subfigure}
    \begin{subfigure}[b]{0.32\textwidth}
        \centering
        \includegraphics[width=\linewidth]{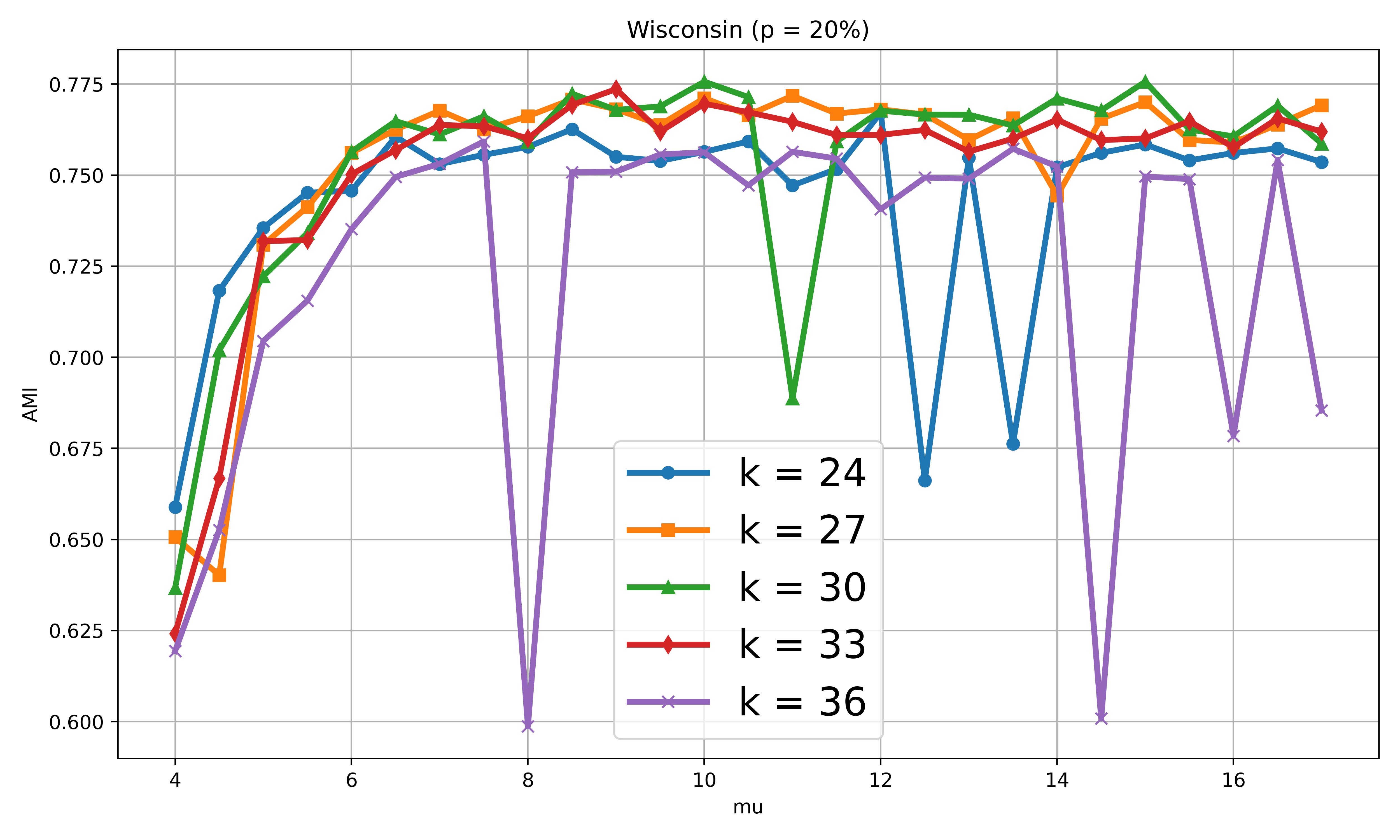}
        \caption{20 $\%$ noise}
    \end{subfigure}

    \caption{Ablation Studies of the AMI Values obtained from the Wisconsin Breast Cancer Dataset. Each subfigure corresponds to a different level of noise introduced into the dataset.}
    \label{fig:Ablation_ami_wisc}
\end{figure*}

\subsubsection{Ablation study on Newthyroid Dataset}
\label{abl_nwthy_app}
We will turn to the Newthyroid dataset for our ablation studies. We have shown in section \ref{gamma_app} that the hyperparameter $\gamma$ does not have much influence on the final clustering of the dataset. Two hyperparameters, $k$, which is a hyperparameter for the $k-$NN graph structure, and $\mu$, require tuning to achieve optimal performance of our algorithm. For our experiments, we are varying k in $\{31, 38, 45, 52, 59\}$ , $\mu$ in $[12, 110]$ and p(noise level) in $\{0\%, 5\%, 10\%, 15\%, 20\%\}$. For each pair ($p$,$k$), we are varying $\mu$ and reporting the mean of ARI and AMI. The ARI is almost always between 0.93 and 0.98 and very rarely drops to 0.92 at higher noise levels. The AMI values are also between 0.82 and 0.92. The Here we did not notice much fluctuations for both ARI and AMI values even for $20\%$ noise level. All the fluctuations can be attributed to the randomness of adding noise and the optimization procedure of our objective function. The fluctuations increase with increasing p, however the increase is only slight. The fluctuations decrease while choosing higher values of k (such as 52 in our case). The effect of changing $\mu$ is also not apparent from the plots indicating that there are well-separated clusters in this dataset. However, for much smaller values of $\mu$ (say below 10), the ARI and AMI values are expected to drop. The figures \ref{fig:Ablation_ari} and \ref{fig:Ablation_ami} correspond to the variation in ARI and AMI respectively for Brain dataset.
\begin{figure*}[tb]
    \centering
    \begin{subfigure}[b]{0.32\textwidth}
        \centering
        \includegraphics[width=\linewidth]{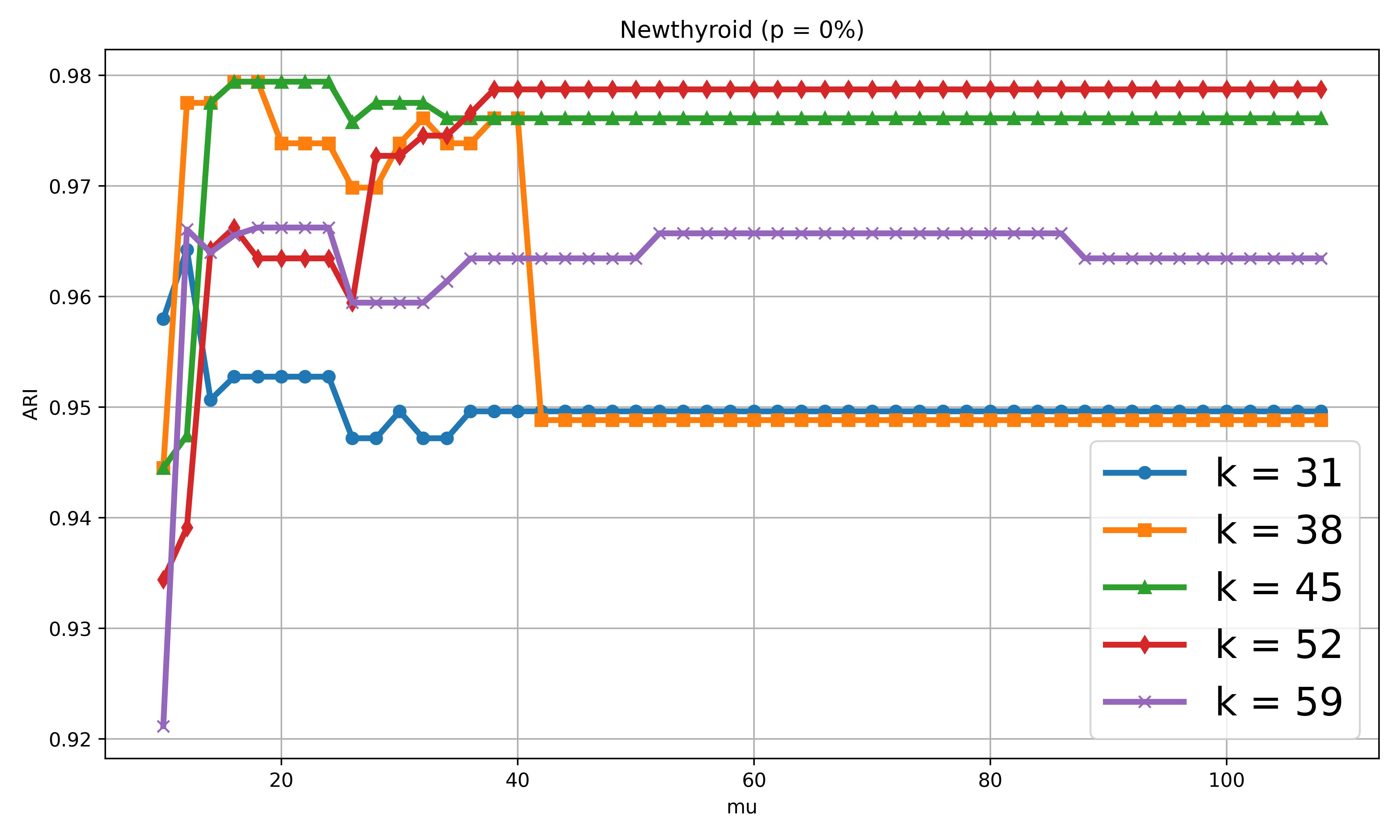}
        \caption{0 $\%$ noise}
    \end{subfigure}
    \hfill
    \begin{subfigure}[b]{0.32\textwidth}
        \centering
        \includegraphics[width=\linewidth]{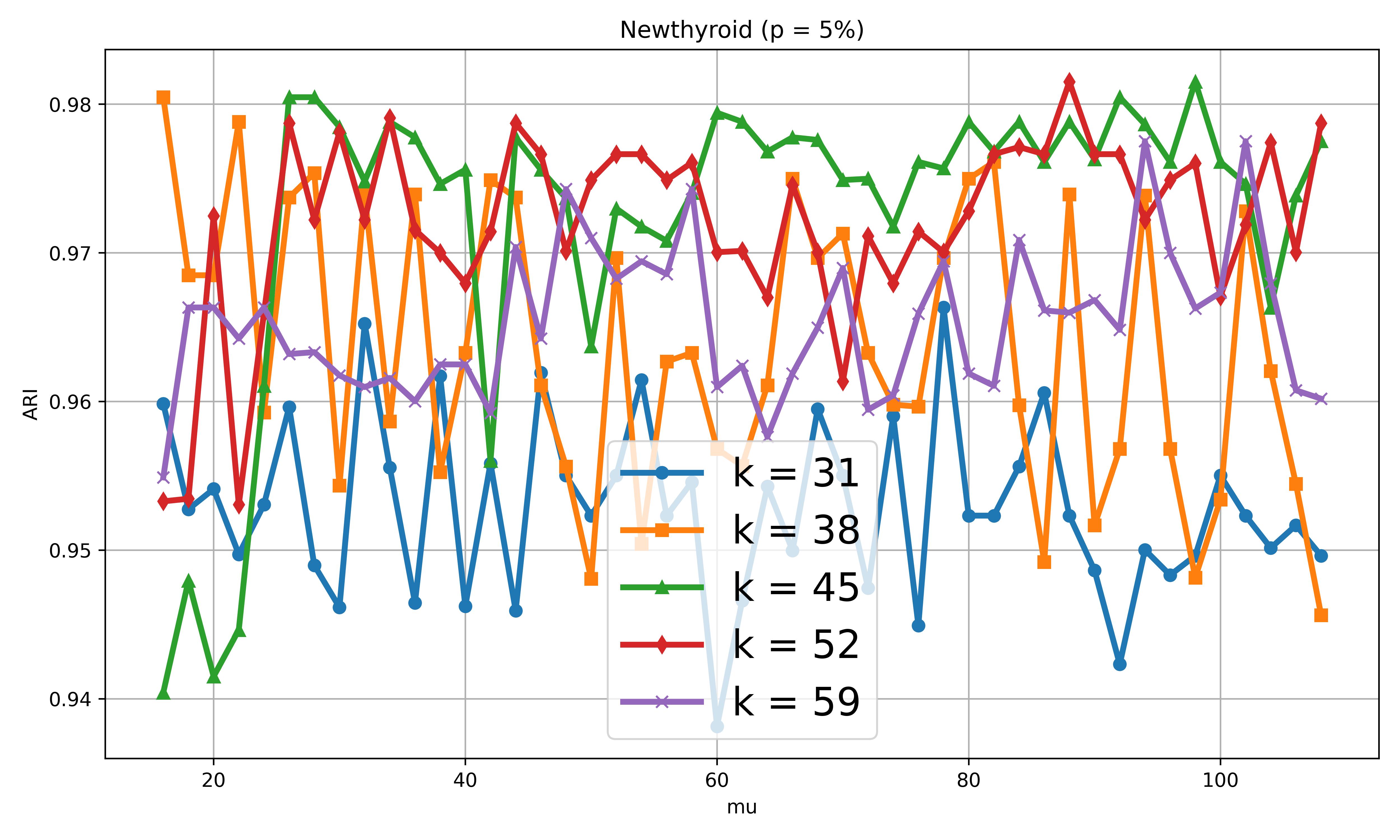}
        \caption{5 $\%$ noise}
    \end{subfigure}
    \hfill
    \begin{subfigure}[b]{0.32\textwidth}
        \centering
        \includegraphics[width=\linewidth]{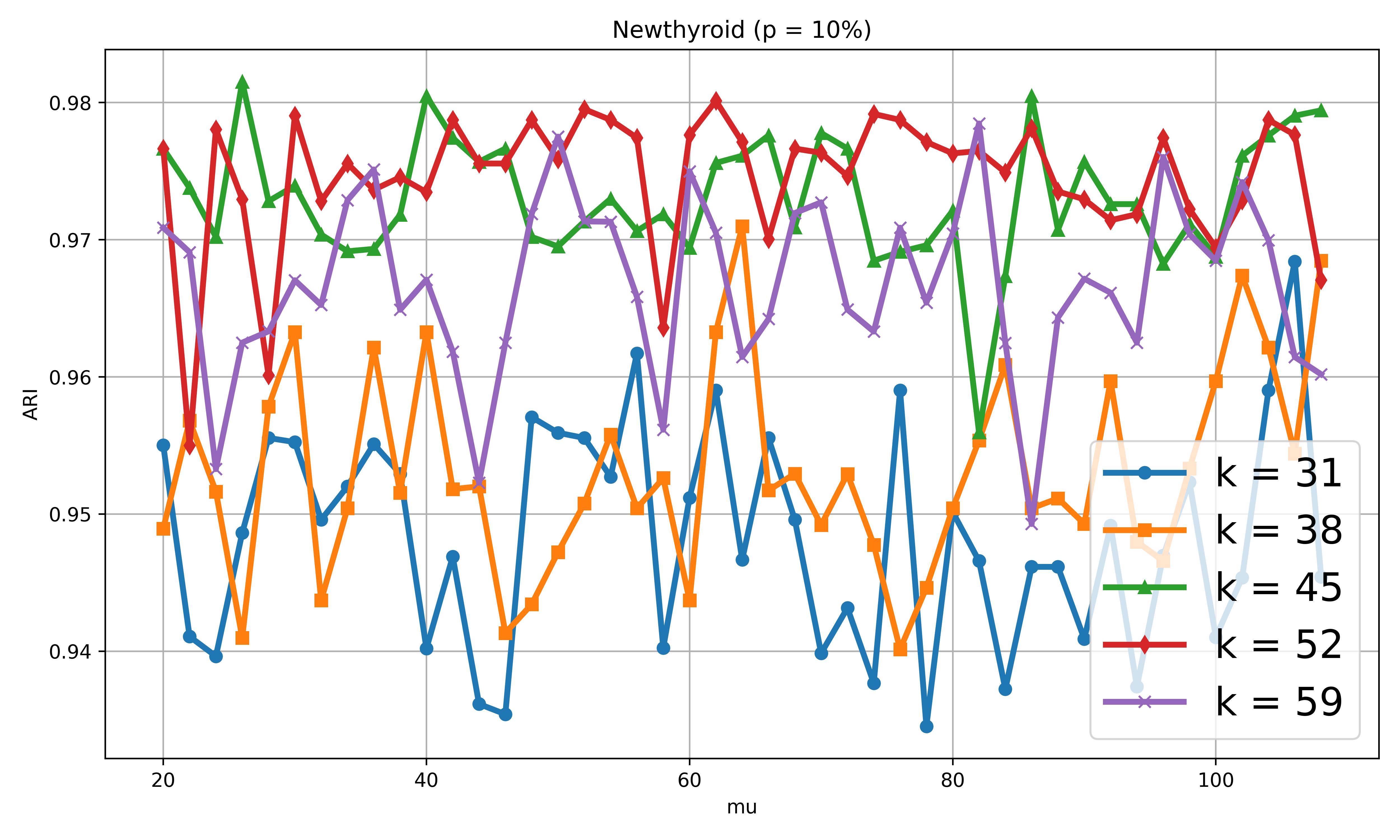}
        \caption{10 $\%$ noise}
    \end{subfigure}
    \vspace{10pt}
    \begin{subfigure}[b]{0.32\textwidth}
        \centering
        \includegraphics[width=\linewidth]{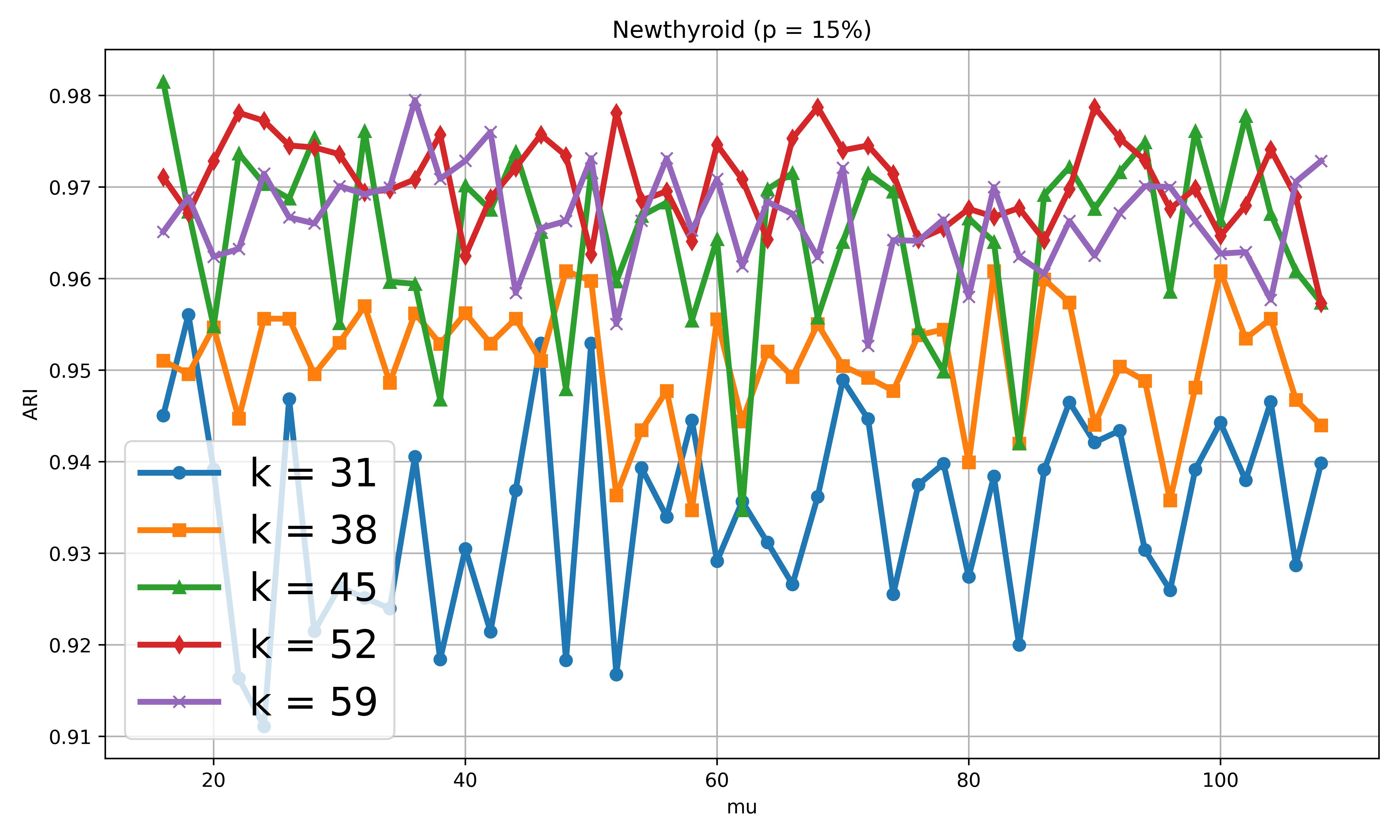}
        \caption{15 $\%$ noise}
    \end{subfigure}
    \begin{subfigure}[b]{0.32\textwidth}
        \centering
        \includegraphics[width=\linewidth]{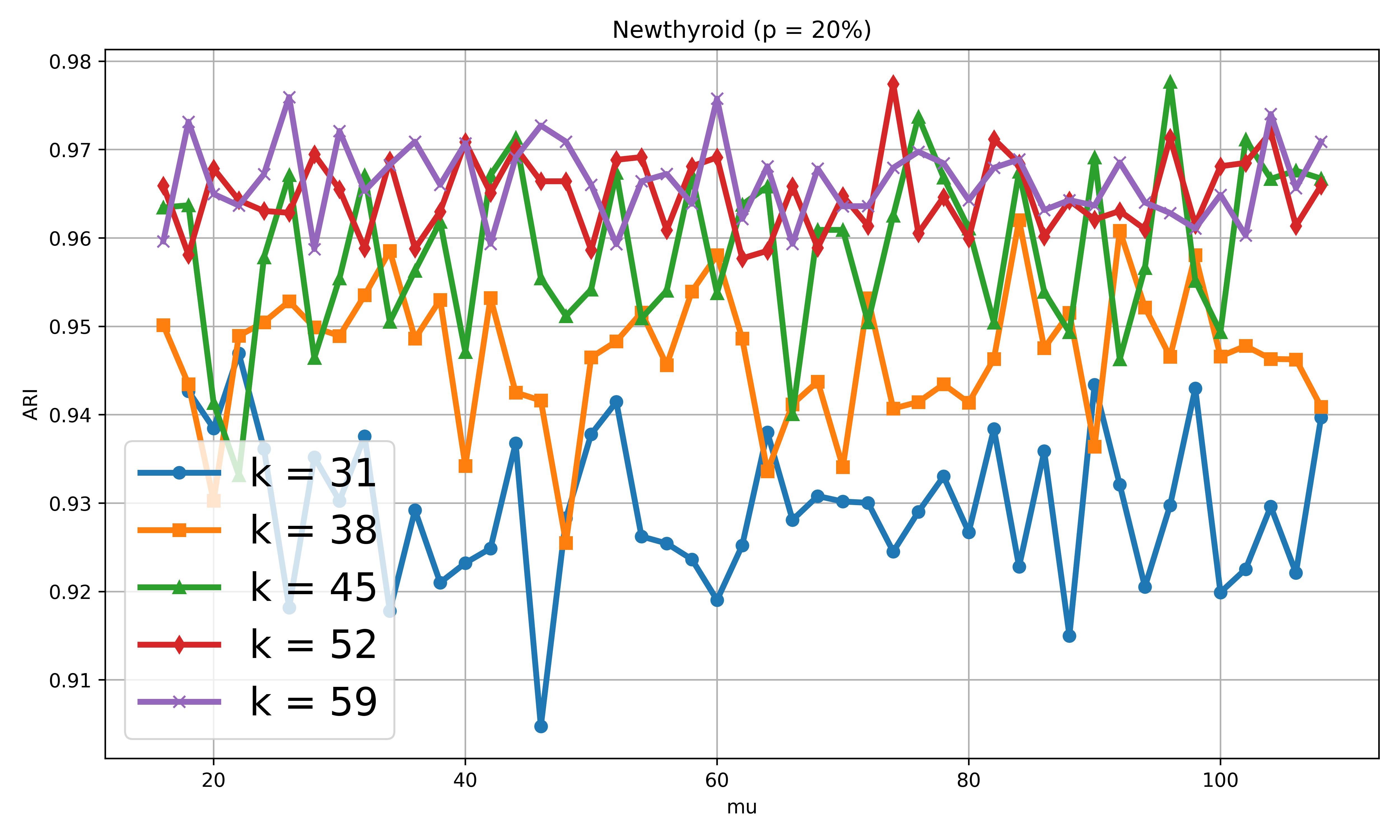}
        \caption{20 $\%$ noise}
    \end{subfigure}

    \caption{Ablation Studies of the ARI Values obtained from the NewThyroid Dataset. Each subfigure corresponds to a different level of noise introduced into the dataset.}
    \label{fig:Ablation_ari}
\end{figure*}

\begin{figure*}[tb]
    \centering

    \begin{subfigure}[b]{0.32\textwidth}
        \centering
        \includegraphics[width=\linewidth]{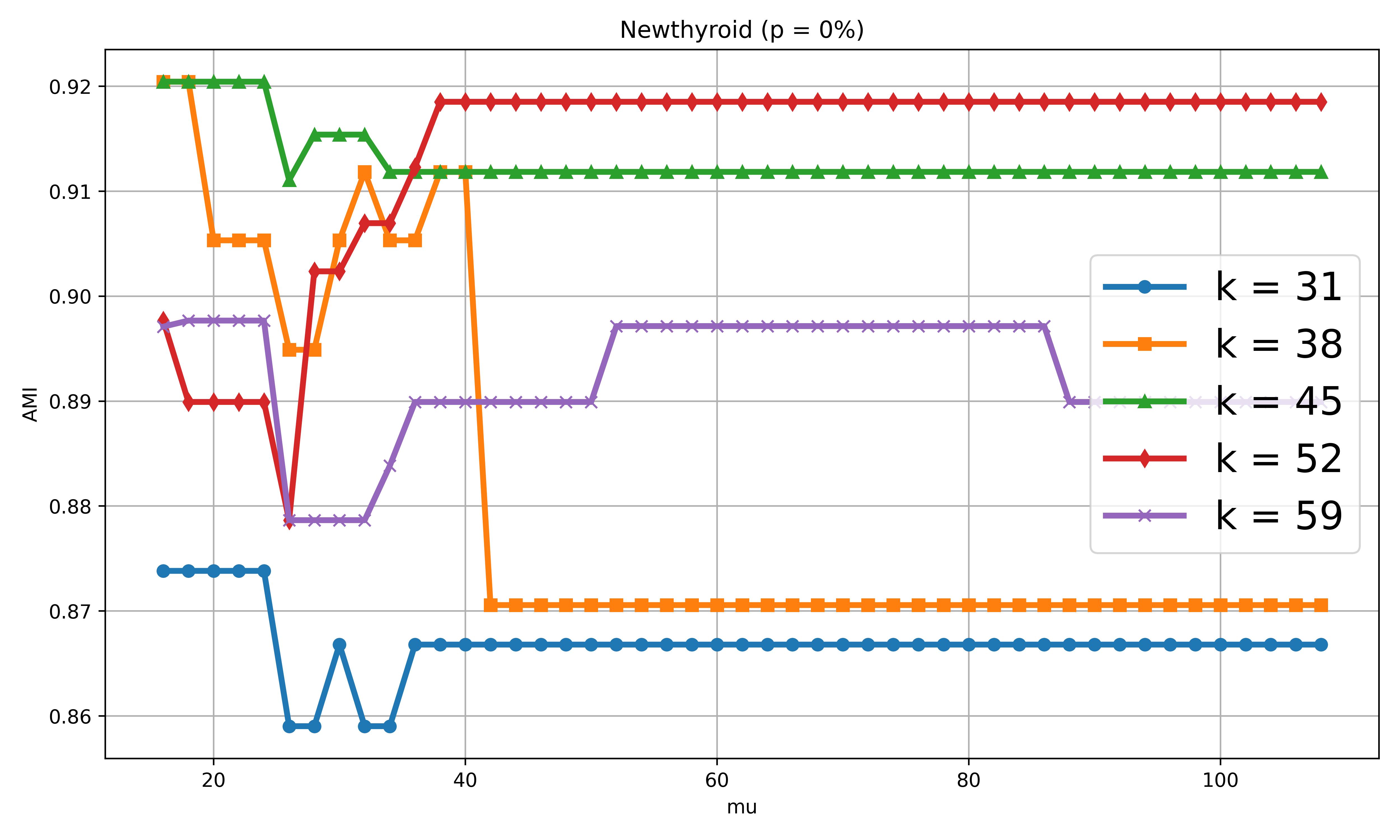}
        \caption{0 $\%$ noise}
    \end{subfigure}
    \hfill
    \begin{subfigure}[b]{0.32\textwidth}
        \centering
        \includegraphics[width=\linewidth]{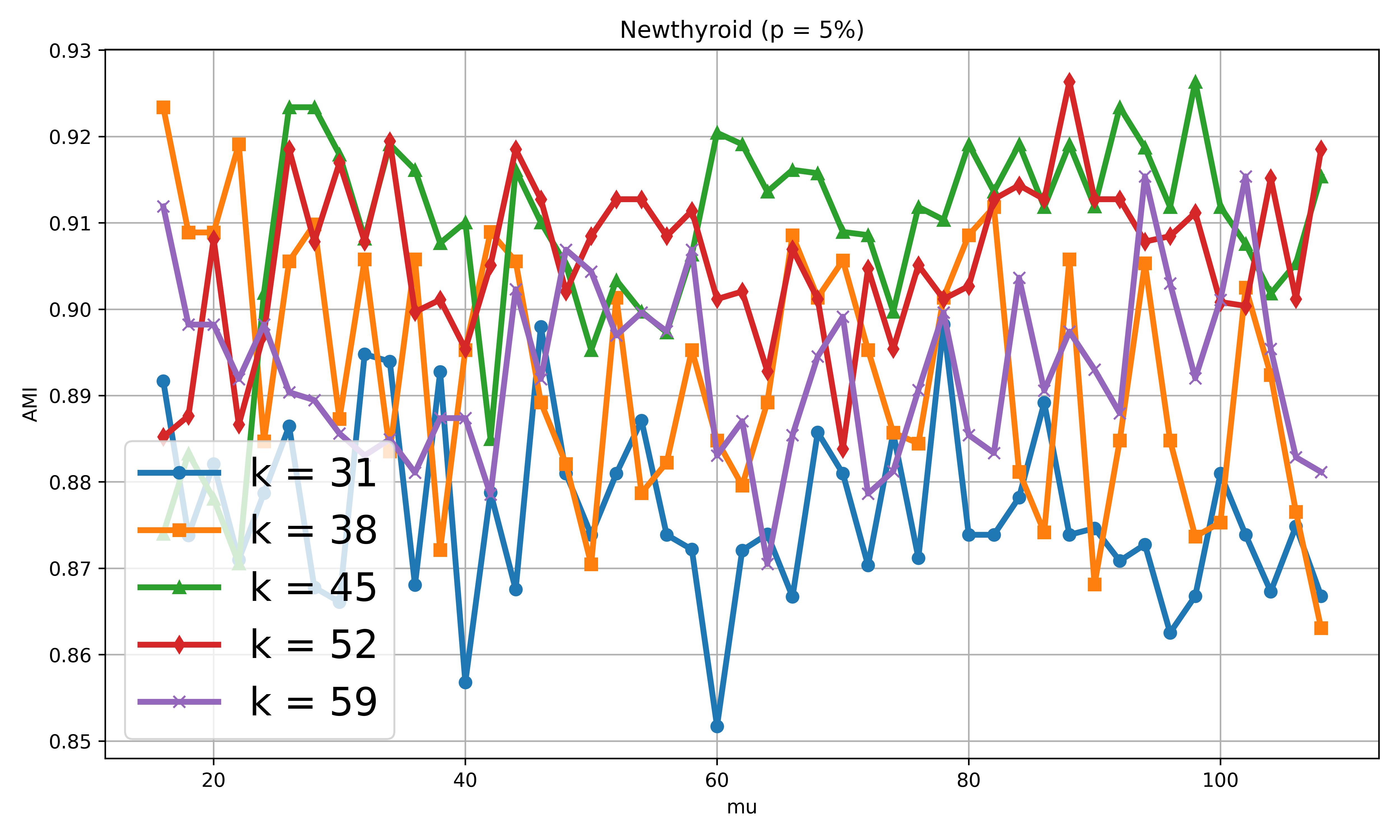}
        \caption{5 $\%$ noise}
    \end{subfigure}
    \hfill
    \begin{subfigure}[b]{0.32\textwidth}
        \centering
        \includegraphics[width=\linewidth]{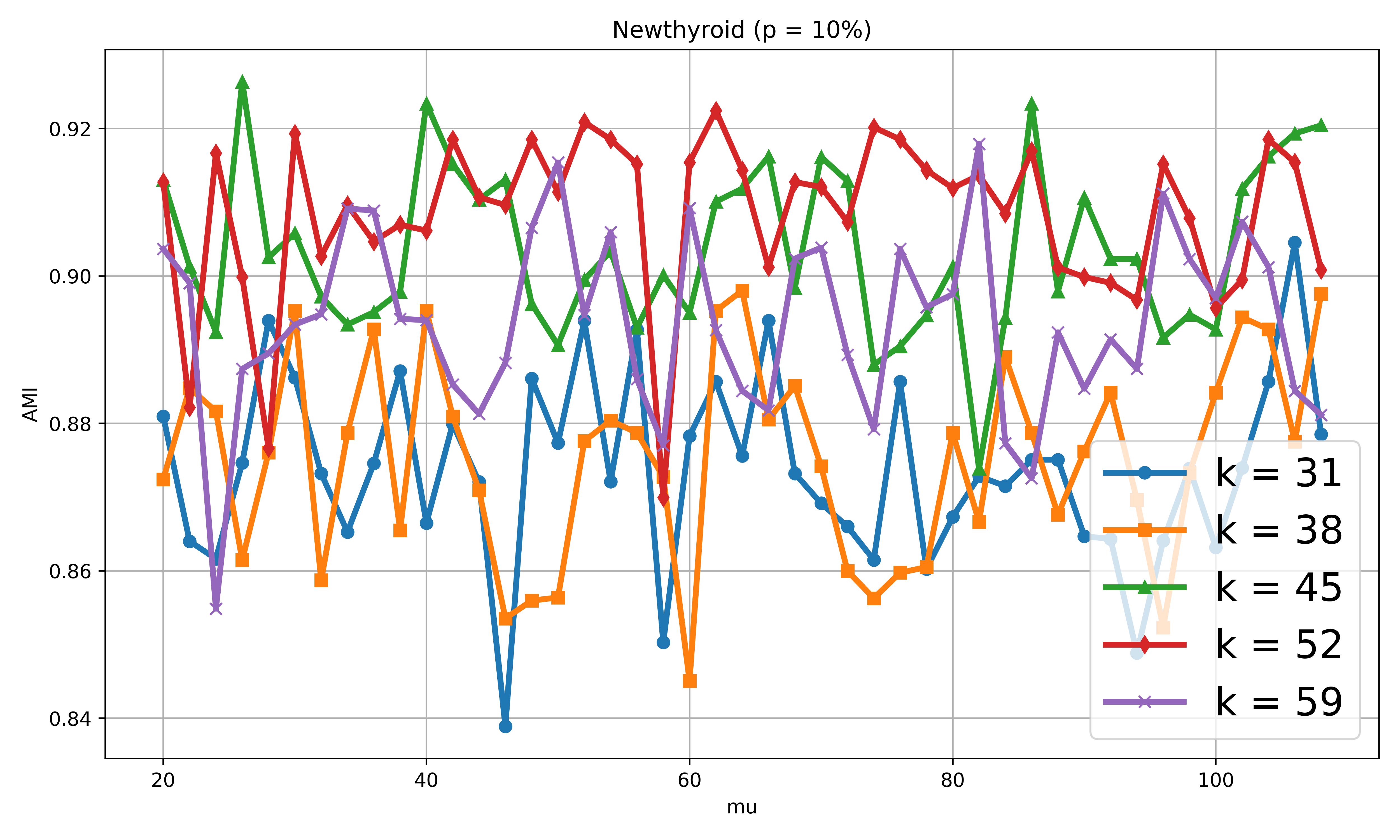}
        \caption{10 $\%$ noise}
    \end{subfigure}
    \vspace{10pt}
    \begin{subfigure}[b]{0.32\textwidth}
        \centering
        \includegraphics[width=\linewidth]{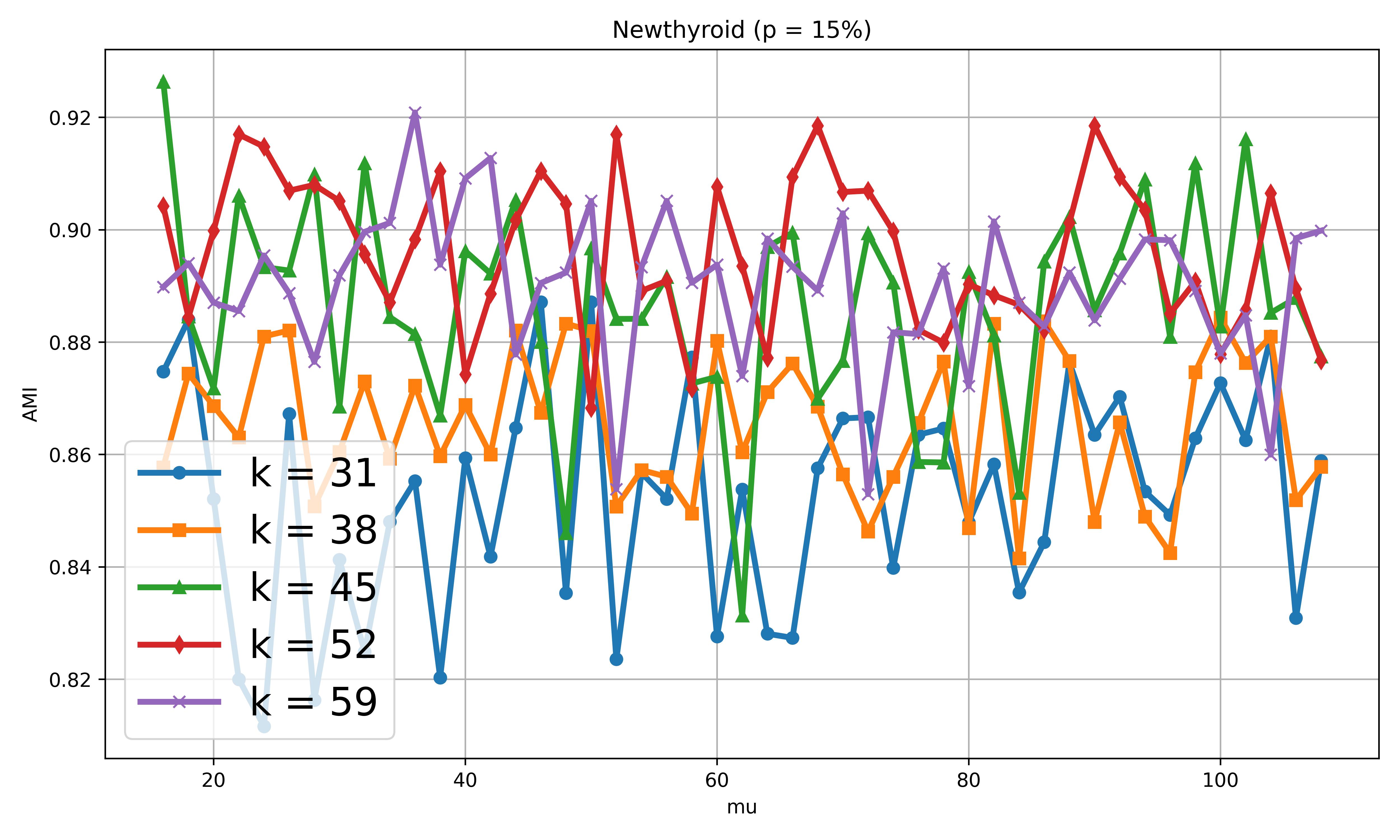}
        \caption{15 $\%$ noise}
    \end{subfigure}
    \begin{subfigure}[b]{0.32\textwidth}
        \centering
        \includegraphics[width=\linewidth]{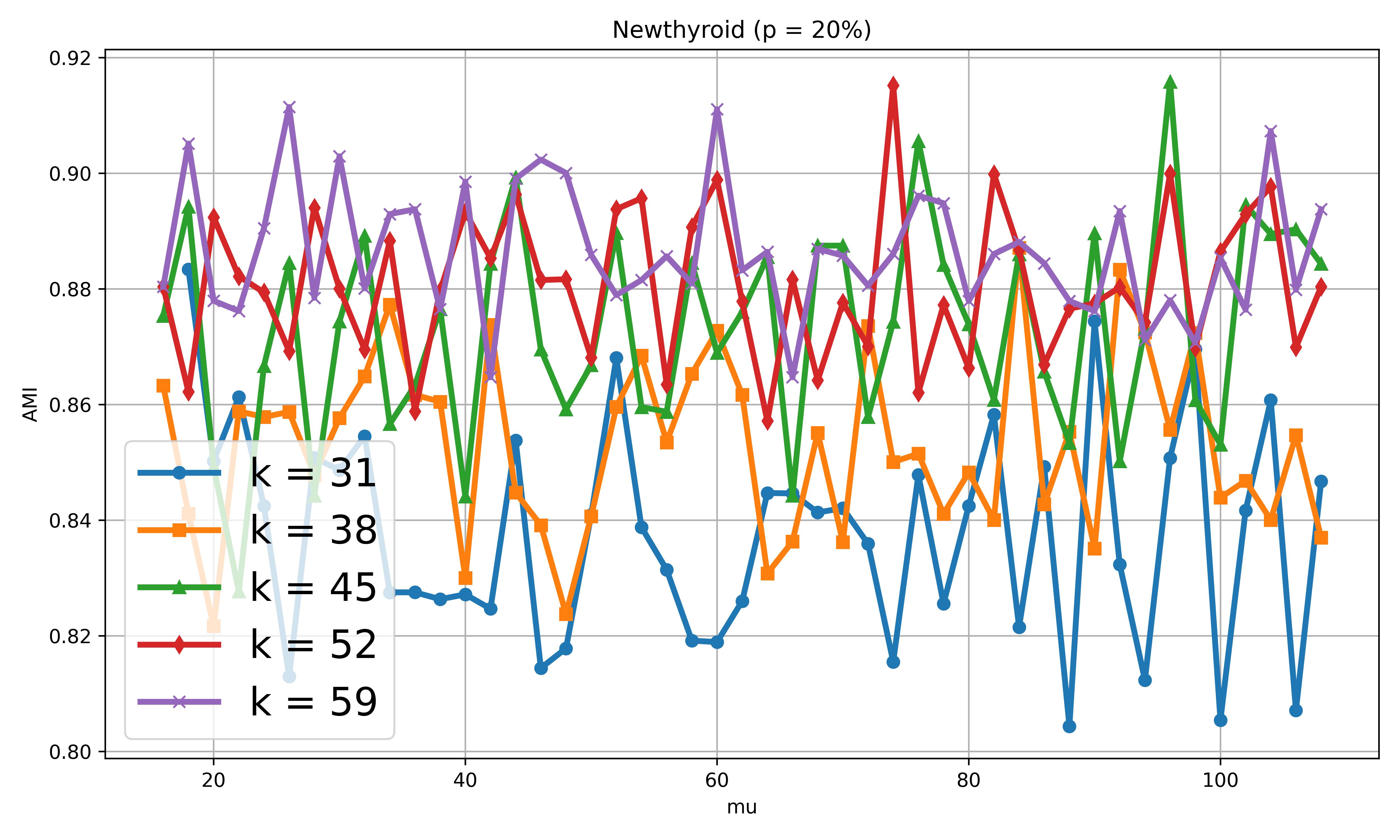}
        \caption{20 $\%$ noise}
    \end{subfigure}

    \caption{Ablation Studies of the AMI Values obtained from the NewThyroid Dataset. Each subfigure corresponds to a different level of noise introduced into the dataset.}
    \label{fig:Ablation_ami}
\end{figure*}

\subsection{Table of Plots for Simulated Datasets}
\begin{figure*}[h!]
    \centering
    \begin{tabular}{m{3cm}|ccc}
        \textbf{Algorithm} & \textbf{Blobs} & \textbf{Circles} & \textbf{Moons} \\
        \hline
        \textbf{Dataset} &
        \includegraphics[width=0.2\textwidth]{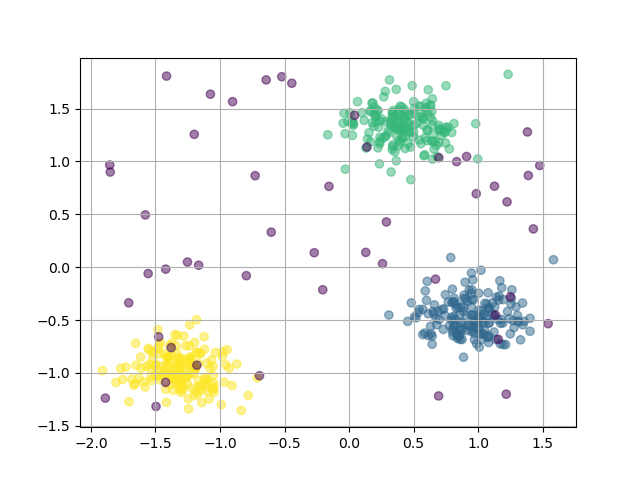} &
        \includegraphics[width=0.2\textwidth]{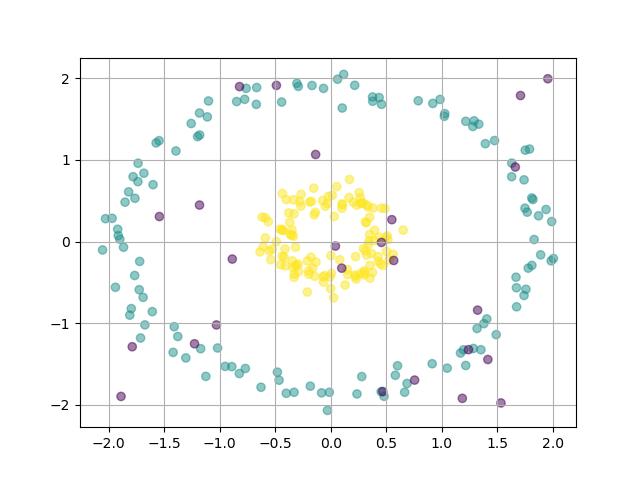} &
        \includegraphics[width=0.19\textwidth]{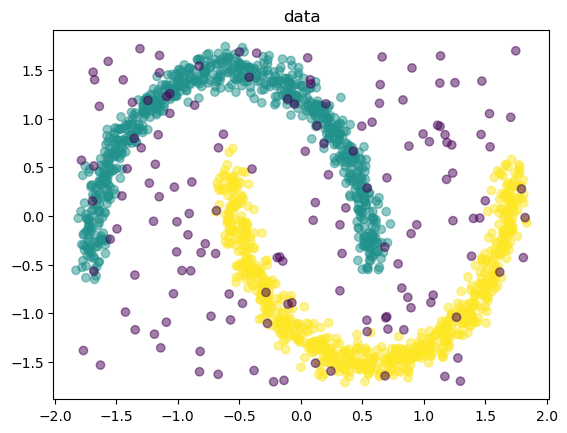} \\
    
        \textbf{COMET} &
        \includegraphics[width=0.2\textwidth]{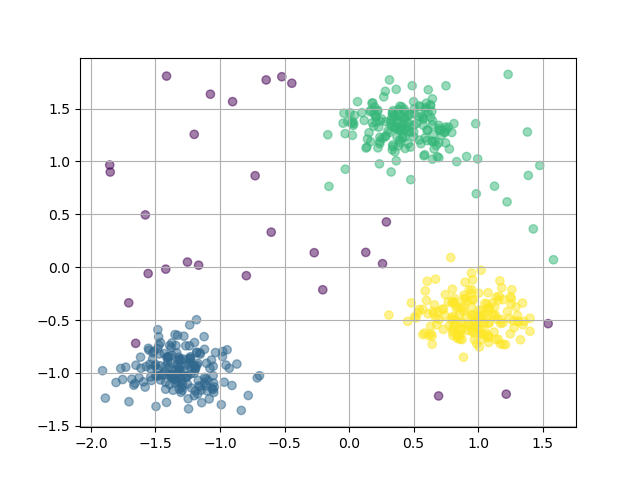} &
        \includegraphics[width=0.2\textwidth]{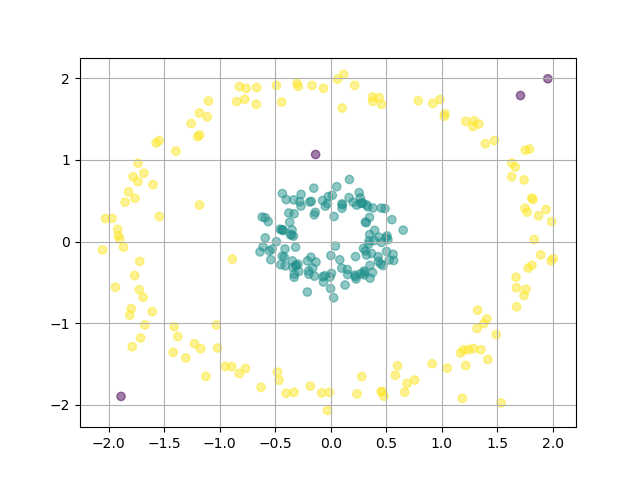} &
        \includegraphics[width=0.2\textwidth]{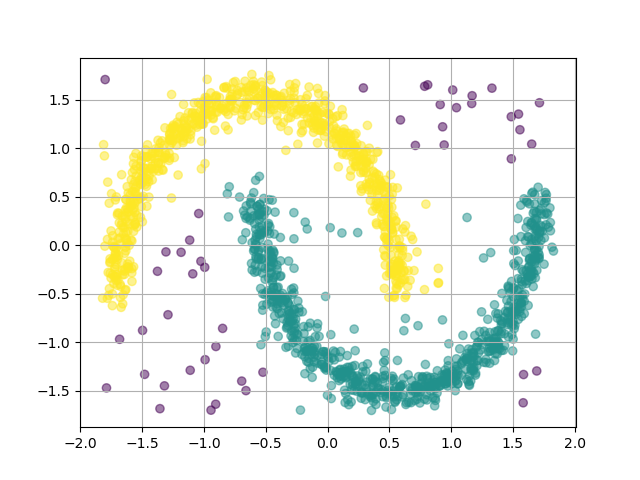} \\
        
        \textbf{KM} &
        \includegraphics[width=0.2\textwidth]{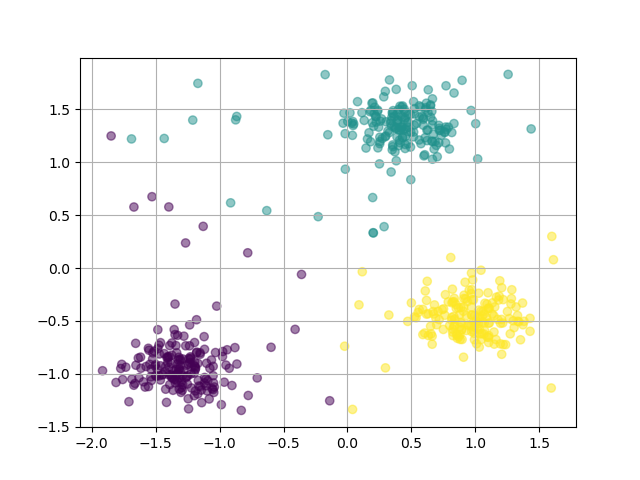} &
        \includegraphics[width=0.2\textwidth]{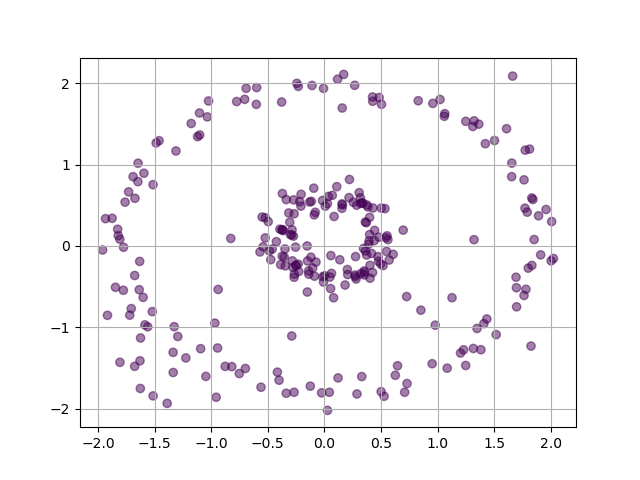} &
        \includegraphics[width=0.2\textwidth]{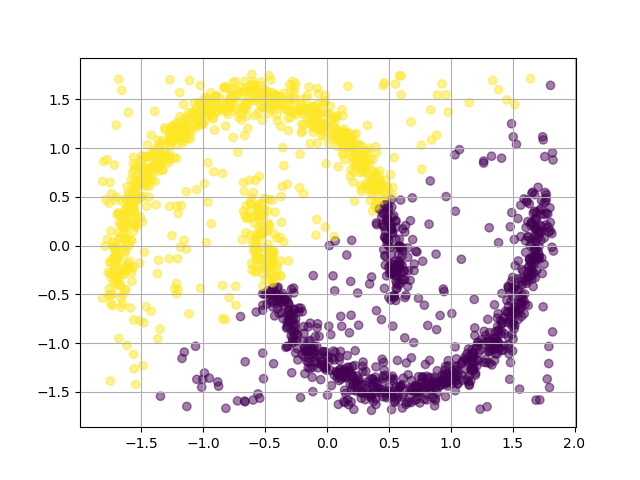} \\
        
        \textbf{MKM} &
        \includegraphics[width=0.2\textwidth]{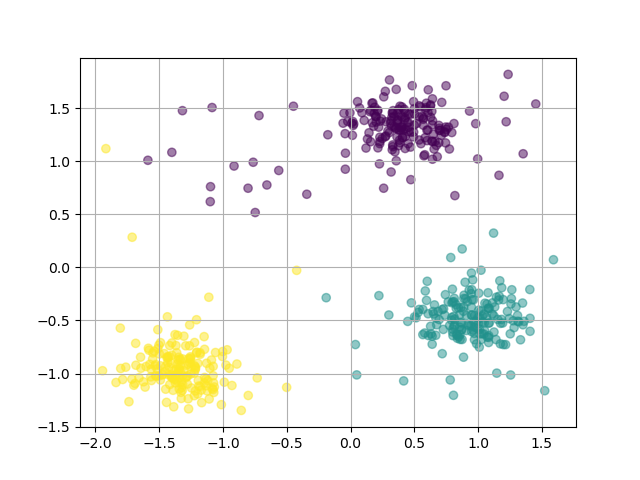} &
        \includegraphics[width=0.2\textwidth]{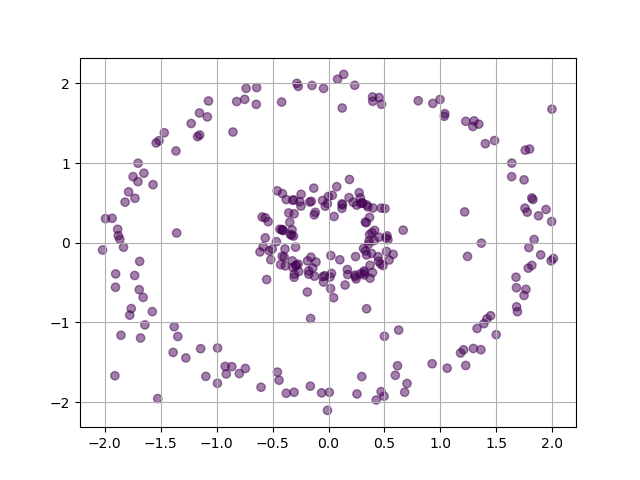} &
        \includegraphics[width=0.2\textwidth]{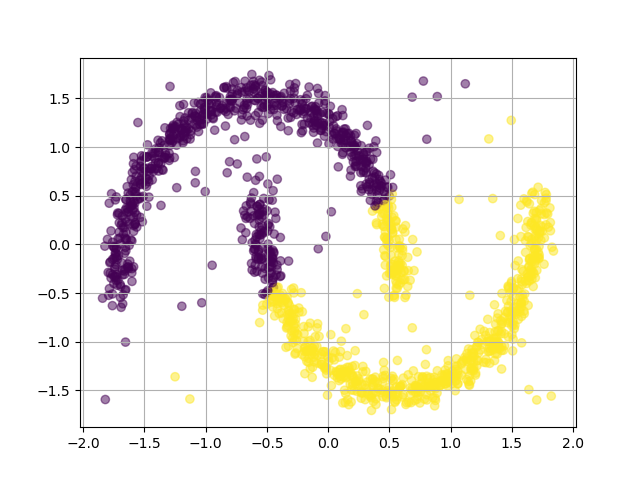} \\
        
        \textbf{CC} &
        \includegraphics[width=0.2\textwidth]{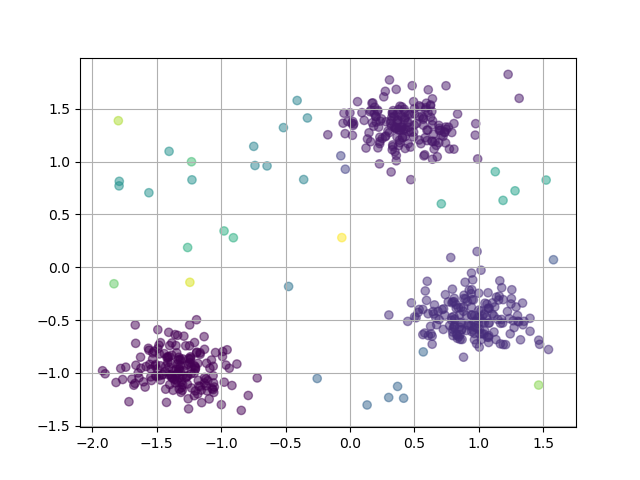} &
        \includegraphics[width=0.2\textwidth]{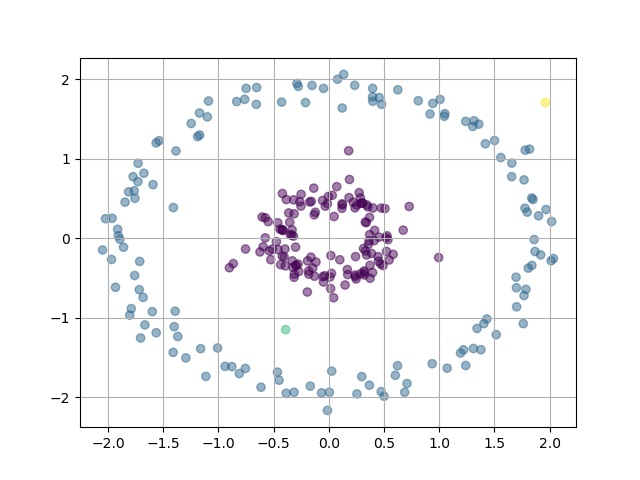} &
        \includegraphics[width=0.2\textwidth]{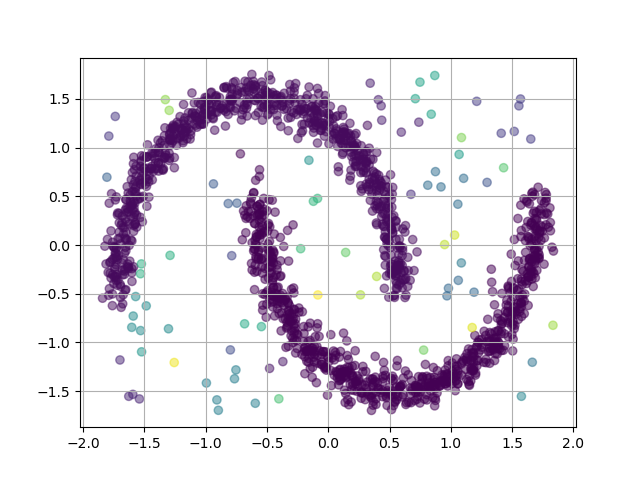} \\

        \textbf{RConv} &
        \includegraphics[width=0.2\textwidth]{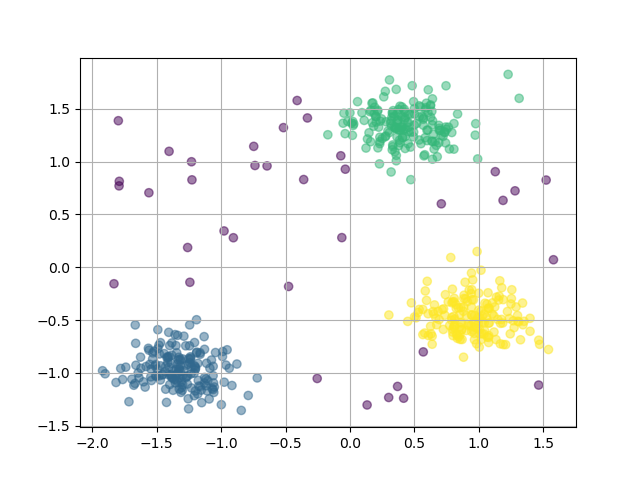} &
        \includegraphics[width=0.2\textwidth]{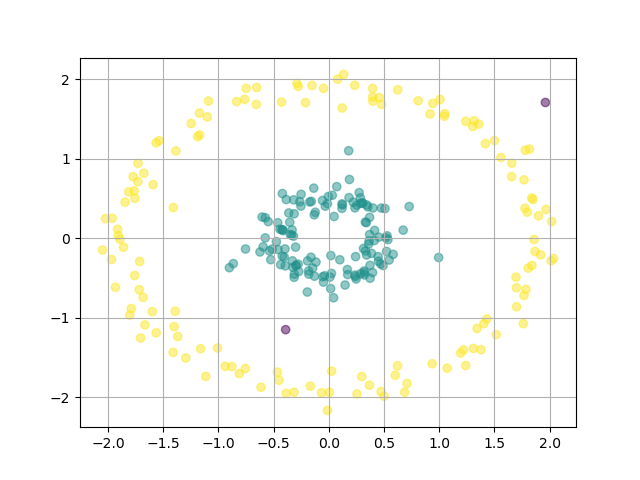} &
        \includegraphics[width=0.2\textwidth]{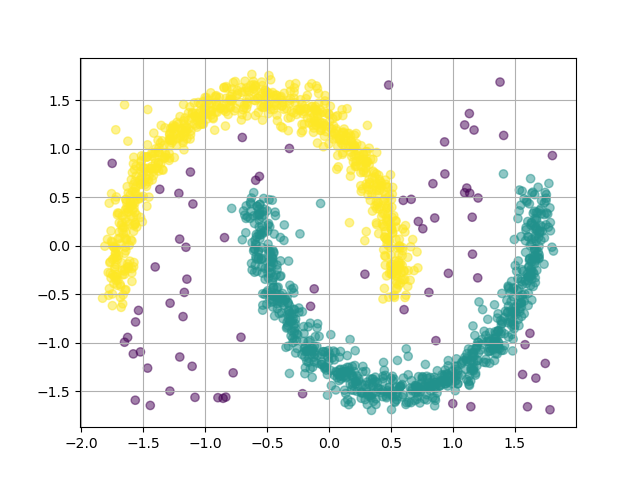} \\

        \textbf{RCC} &
        \includegraphics[width=0.2\textwidth]{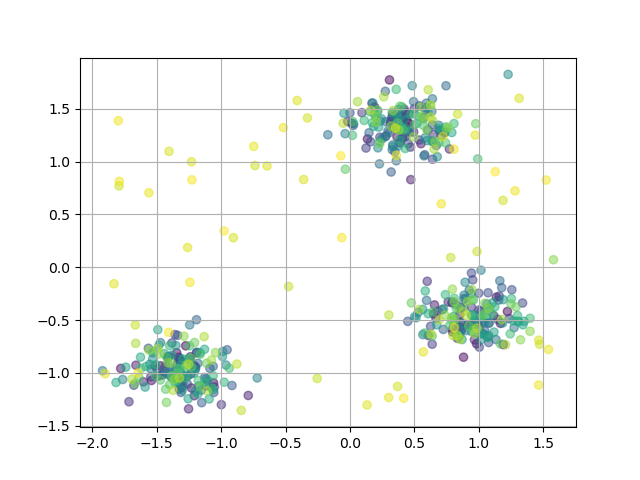} &
        \includegraphics[width=0.2\textwidth]{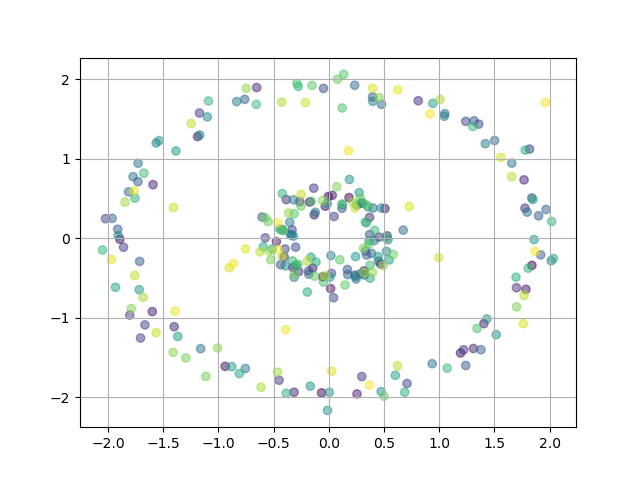} &
        \includegraphics[width=0.2\textwidth]{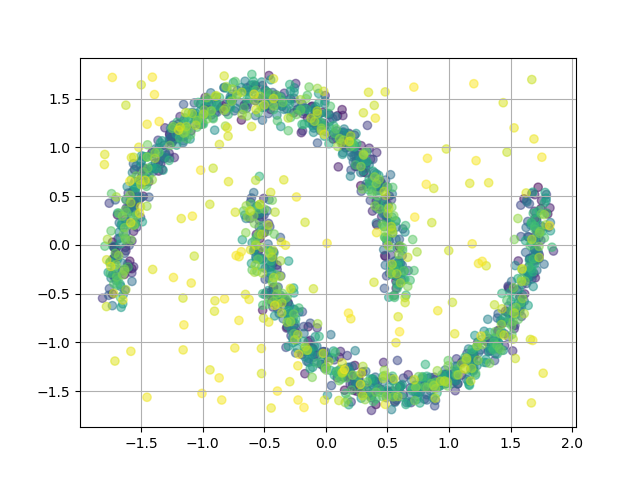} \\

        \textbf{RBKM} &
        \includegraphics[width=0.2\textwidth]{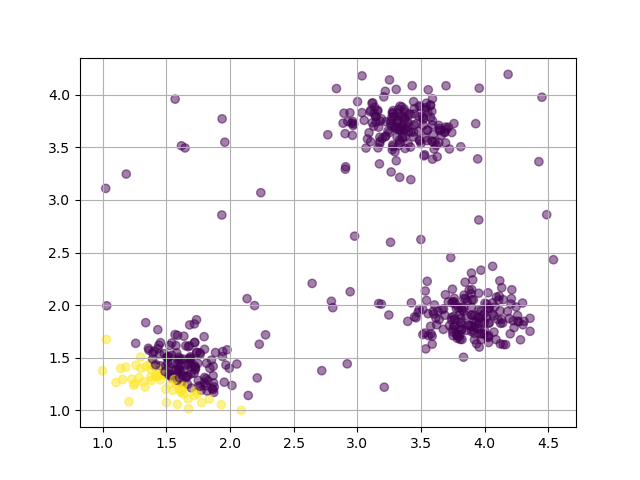} &
        \includegraphics[width=0.2\textwidth]{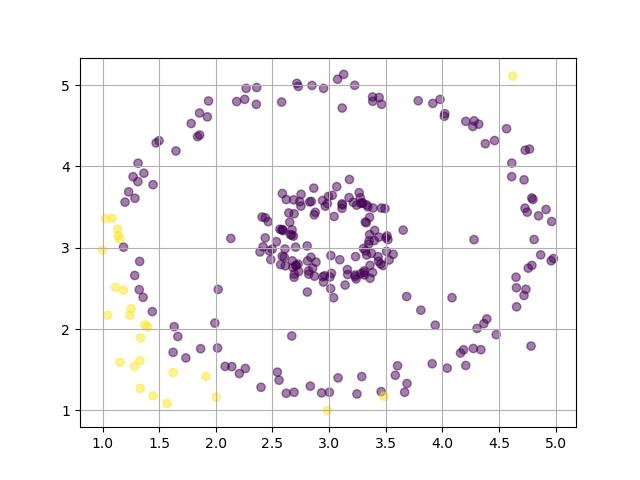} &
        \includegraphics[width=0.2\textwidth]{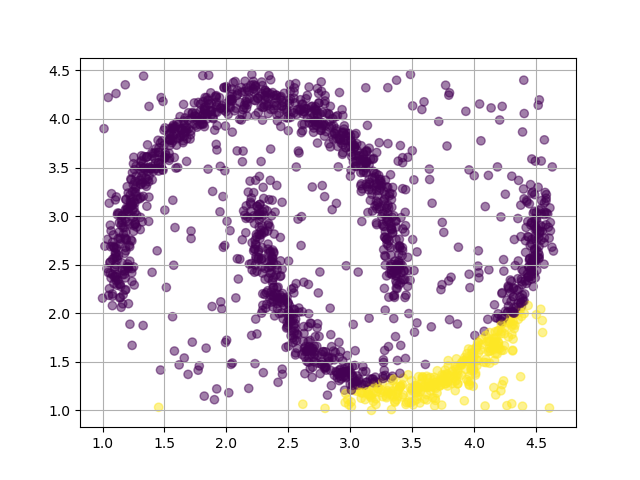} \\
    \end{tabular}
    
    \caption{Clustering results for 7 algorithms across 3 datasets. Each row corresponds to an algorithm, and each column corresponds to a dataset.}
    
    \label{fig:cluster_grid}
\end{figure*}

\end{document}